\def\eqref#1{equation~\ref{#1}}
\def\1{\bm{1}}
\DeclareMathAlphabet{\mathsfit}{\encodingdefault}{\sfdefault}{m}{sl}
\SetMathAlphabet{\mathsfit}{bold}{\encodingdefault}{\sfdefault}{bx}{n}
\newcommand{\R}{\mathbb{R}}
\DeclareMathOperator*{\argmax}{arg\,max}
\DeclareMathOperator*{\argmin}{arg\,min}
\def\R{\mathbb{R}}
\newcommand{\norm}[1]{\left\|#1\right\|}
\def\det{\mathop{\rm det}\nolimits}
\def\argmax{\mathop{\rm arg\,max}\limits}
\def\argmin{\mathop{\rm arg\,min}\limits}
\def\minop{\mathop{\rm min}\limits}
\def\maxop{\mathop{\rm max}\limits}
\def\min{\mathop{\rm min}\nolimits}
\def\max{\mathop{\rm max}\nolimits}
\newif\ifpaper
\newtheorem{theorem}{Theorem}[section]
\newtheorem{corollary}{Corollary}[section]
\title{Towards neural networks that provably know when they don't know}
\author{%
  Alexander Meinke
   \\
  University of Tübingen\\
   \And
   Matthias Hein \\
  University of Tübingen\\
}
\begin{document}

\maketitle

\begin{abstract}
It has recently been shown that ReLU networks produce arbitrarily over-confident predictions far away from the 
training data. Thus, ReLU networks do not know when they don't know. However, this is a highly important property in safety
critical applications. In the context of out-of-distribution detection (OOD) there have been a number of proposals to mitigate this problem 
but none of them are able to make any mathematical guarantees. In this paper we propose a new approach to OOD which overcomes both problems.
Our approach can be used with ReLU networks and provides provably low confidence predictions far away from the training data as well as
the first certificates for low confidence predictions in a neighborhood of an out-distribution point. In the experiments we show that state-of-the-art methods
fail in this worst-case setting whereas our model can guarantee its performance while retaining state-of-the-art OOD performance.\footnote{Code at \url{https://github.com/AlexMeinke/certified-certain-uncertainty}}
\end{abstract}

\section{Introduction}

Deep Learning Models are being deployed in a growing number of applications. As these include more and more systems where safety is a concern, it is important to guarantee that deep learning models work as one expects them to. 
One topic that has received a lot of attention in this area is the problem of adversarial examples, in which a model's prediction can be changed by introducing a small perturbation to an originally correctly classified sample. Achieving robustness against this type of perturbation is an active field of research. Empirically, adversarial training~\citep{MadEtAl2018} performs well and provably robust models have been developed~\citep{HeiAnd2017, WonKol2018, RagSteLia2018, MirGehVec2018,cohen2019certified}. 

On the other end of the spectrum it is also important to study how deep learning models behave far away from the training samples. 
A simple property every classifier should satisfy is that far away from the training data, it should yield close to uniform confidence over the classes: it knows when it does not know.
However, several cases of high confidence predictions far away from the training data have been reported for neural networks, e.g. fooling images~\citep{NguYosClu2015}, for out-of-distribution (OOD) images \citep{HenGim2017} or in medical diagnosis \citep{LeiEtAl2017}. Moreover, it has been observed that,
even on the original task, neural networks often produce overconfident predictions~\citep{GuoEtAl2017}. Very recently, it has been shown theoretically that the class of ReLU networks (all neural networks which use a piecewise affine activation function), which encompasses almost all standard models, produces predictions with arbitrarily high confidences far away from the training data~\citep{HeiAndBit2019}. Unfortunately, this statements holds for almost all such networks and thus without a change in the architecture one cannot avoid this phenomenon.

Traditionally, the calibration of the confidence of predictions has been considered on the in-distribution~\citep{GuoEtAl2017,LakEtAl2017}. However
these techniques cannot be used for OOD techniques~\citep{LeiEtAl2017}.
Only recently the detection of OOD inputs~\citep{HenGim2017} has been tackled. The existing approaches are roughly of two types: first, postprocessing techniques that adjust the estimated confidence~\citep{DeVTay2018,LiaLiSri2018} which includes the baseline ODIN. Second, modification of the
classifier training by integrating generative models like a VAE or GAN in order to discriminate out-distribution from in-distribution data~\citep{LeeEtAl2018,WanEtAl2018,LeeEtAl2018b} or approaches which enforce low confidence on OOD inputs during training~\citep{HeiAndBit2019,HenMazDie2019}. Worst-case aspects of OOD detection have previously been studied in~\citet{NguYosClu2015, SchEtAl2018, HeiAndBit2019, sehwag2019better}, but no robustness guarantees have yet been proposed for this setting. A generalization guarantee for an out-of-distribution detection scheme is provided in \citet{liu18e}. While this is the only 
guarantee we are aware of, it is quite different from the type of guarantees we present in this paper.
In particular, none of those approaches are able to guarantee that neural networks produce low confidence predictions far away from the training data.
We prove that our classifier satisfies this requirement even when we use ReLU networks as the classifier model - without loosing performance on either the prediction task on the in-distribution nor the OOD detection performance, see Figure~\ref{Fig:two_moons} for an illustration. 
Moreover, our technique allows to give upper bounds on the confidence over a whole neighborhood around a point (worst-case guarantees). We show that most state-of-the-art OOD methods can be fooled by maximizing the confidence in this ball even when starting from uniform noise images, which should be trivial to identify. 
The central difference from existing OOD-methods is that we have a Bayesian framework for in-and out-distribution, where we model in-and out-distribution separately. In this framework our algorithm for training neural networks follows directly as maximum likelihood estimator which is different from the more ad-hoc methods proposed in the literature. The usage of Gaussian mixture models as the density estimator is then the essential key to get the desired provable
guarantees.

\begin{figure}
\vspace{-1.cm}
\centering
\qquad \includegraphics[height=.28\textwidth]{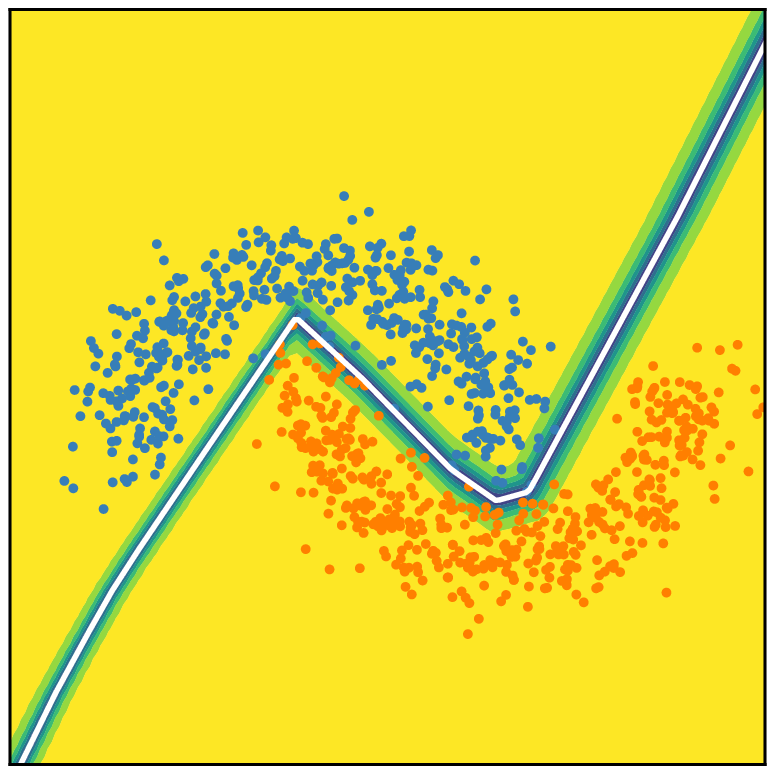}\qquad \qquad \qquad \includegraphics[height=.28\textwidth]{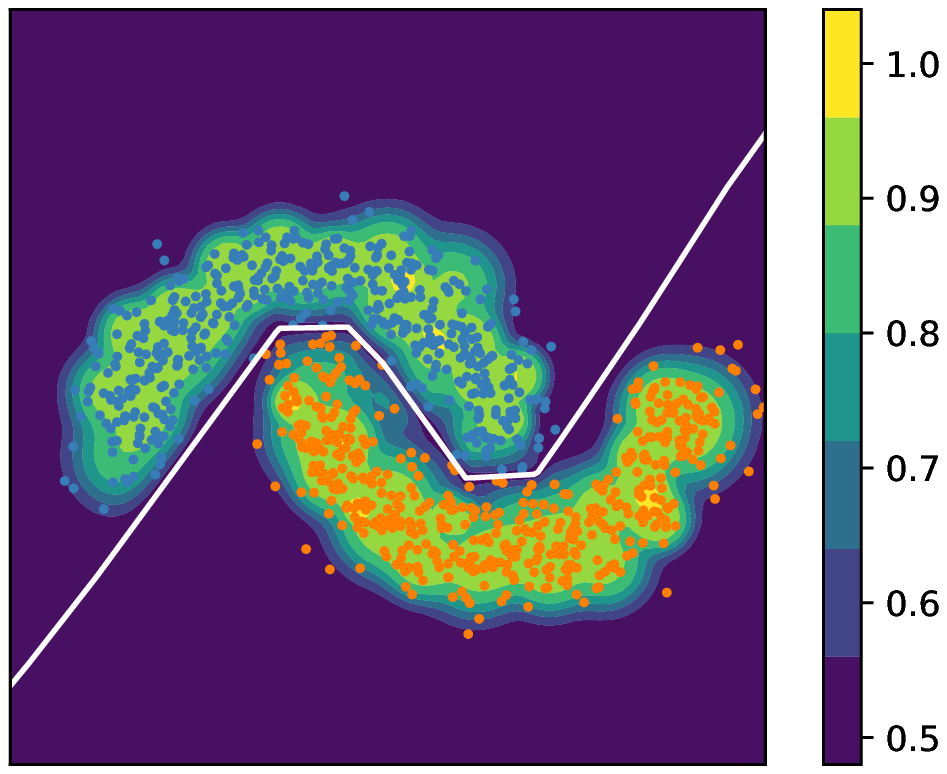}
\vspace{-0.2cm}
\caption{\label{Fig:two_moons} \textbf{Illustration on toy dataset:} We show the color-coded confidence in the prediction (yellow indicates high confidence $\max_y \hat{p}(y|x)\approx 1$, whereas dark purple regions indicate low confidence $\max_y \hat{p}(y|x)\approx 0.5$)  for a normal neural network (left)
and our CCU neural network (right). The decision boundary is shown in white which is similar for both models. Our CCU-model retains high-confidence predictions in regions close to the training data, whereas far away from the training the CCU-model outputs close to uniform confidence. In contrast the normal
neural network is over-confident everywhere except very close to the decision boundary.}
%
\end{figure}

\section{A generic model for classifiers with certified low confidence far away from the training data}

The model which we propose in this paper assumes that samples from an out-distribution are given to us.
In image recognition we could either see the set of all images as a sample from the out-distribution \citep{HenMazDie2019} on all images or consider the agnostic case 
where we use use uniform noise on $[0,1]^d$ as a maximally uninformative out-distribution. In both settings one tries to discriminate these
out-distribution images from images coming from a particular image recognition task and the task is to get low confidence predictions on the out-distribution
images vs. higher confidence on the images from the actual task. From the general model we derive under minimal assumptions a maximum-likelihood approach where
one trains both a classifier for the actual task and density estimators for in- and out-distribution jointly.
As all of these quantities are coupled in our
model for the conditional distribution $p(y|x)$ we get guarantees by controlling the density estimates far away from the training data.
This is a crucial difference to the approaches of \cite{LeeEtAl2018,WanEtAl2018,HenMazDie2019} which empirically yield good OOD performance but are not able to 
certify the detection mechanism. 

\subsection{A probabilistic model for in- and out-distribution data}
We assume that there exists a joint probability distribution $p(y,x)$ over the in- and out-distribution data, where $y$ are the labels in $\{1,\ldots,M\}$, $M$ is the number of classes,  and $x \in \R^d$, where $d$ is the input dimension. In the following, we denote the underlying probabilities/densities with $p(y|x)$ resp. $p(x)$ and the estimated quantities with $\hat{p}(y|x)$ and $\hat{p}(x)$. We are mainly interested in a discriminative framework, i.e. we 
want to estimate $p(y|x)$ which one can represent via the conditional distribution of the in-distribution $p(y|x,i)$ and out-distribution $p(y|x,o)$:
\begin{equation}\label{eq:Bayesian1}
p(y|x) = p(y|x,i)p(i|x) + p(y|x,o)p(o|x) = \frac{p(y|x,i)p(x|i)p(i)+p(y|x,o)p(x|o)p(o)}{p(x|i)p(i)+p(x|o)p(o)}.
\end{equation}
Note that at first it might seem strange to have a conditional distribution $p(y|x,o)$ for out-distribution data, but until now we have made no assumptions about
what in-and out-distribution are. A realistic scenario would be that at test time we are presented with instances $x$ from other classes (out-distribution) for which we expect a close to uniform $p(y|x,o)$. 

Our model for $\hat{p}(y|x)$ has the same form as $p(y|x)$
\begin{align}\label{eq:model_full}
\hat{p}(y|x) &= \frac{\hat{p}(y|x,i)\hat{p}(x|i)\hat{p}(i)+\hat{p}(y|x,o)\hat{p}(x|o)\hat{p}(o)}{\hat{p}(x|i)\hat{p}(i)+\hat{p}(x|o)\hat{p}(o)}.
\end{align}
Typically, out-distribution data has no relation to the actual task and thus we would like to have uniform confidence over the classes.
Therefore we set in our model 
\begin{equation}
 \hat{p}(y|x,o)=\frac{1}{M} \quad \textrm{ and } \quad \hat{p}(y|x,i)=\frac{e^{f_y(x)}}{\sum_{k=1}^M e^{f_k(x)}}, \quad y \in \lbrace 1, \hdots M \rbrace ,
 \end{equation}
where $f:\R^d \rightarrow \R^M$ is the classifier function (logits). This framework is generic for classifiers trained with the cross-entropy (CE) loss (as the softmax function is the correct link function for the CE loss) and we focus in particular on neural networks. For a ReLU network the classifier function $f$ is componentwise a continuous piecewise affine function and has been shown to produce asymptotically arbitrarily highly confident predictions \citep{HeiAndBit2019}, i.e. the classifier gets more confident in its predictions the further it moves away from its training data. One of the main goals of our proposal is to fix this behavior of neural networks in a provable way. 

Note that with the choice of $\hat{p}(y|x,o)$ and non-zero priors for $\hat{p}(i),\hat{p}(o)$, the full model $\hat{p}(y|x)$ can be seen as a calibrated version of $\hat{p}(y|x,i)$, where $\hat{p}(y|x)\approx \hat{p}(y|x,i)$ for inputs with $\hat{p}(x|i)\gg \hat{p}(x|o)$ and $\hat{p}(y|x)\approx \frac{1}{M}$ if 
$\hat{p}(x|i)\ll \hat{p}(x|o)$. However, note that only the confidence in the prediction $\hat{p}(y|x)$ is affected, the classifier decision is still done according to $\hat{p}(y|x,i)$ as the calibration does not change the ranking. Thus even if the OOD data came from the classification task we would like to solve, the trained classifier's performance would be unaffected, only the confidence in the prediction would be damped.

For the marginal out-distribution $\hat{p}(x|o)$ there are two possible scenarios. In the first case one could concentrate on the worst case where we assume that $p(x|o)$ is maximally uniformative (maximal entropy). This means that $\hat{p}(x|o)$ is uniform for bounded domains e.g. for images which are in $[0,1]^d$, $\hat{p}(x|o)=1$ for all $x \in [0,1]^d$, or $\hat{p}(x|o)$ is a Gaussian for the domain of $\R^d$ (the Gaussian has maximum entropy among all distributions of fixed variance). 
However, in this work we follow the approach of \citet{HenMazDie2019} where they used the 80 million tiny image dataset \citep{torralba200880} as a proxy of all possible images. Thus we estimate the density of $\hat{p}(x|o)$ using this data.

In order to get guarantees, the employed generative models for $\hat{p}(x|i)$ and $\hat{p}(x|o)$ have to be chosen in a way that allows one to control predictions far away from the training data. Variational autoencoders (VAEs)~\citep{kingma2013auto, rezende2014stochastic}, normalizing flows \citep{dinh2016density,kingma2018glow} and generative adversarial networks (GANs) \citep{GooEtAl2014} are powerful generative models. However, there
is no direct way to control the likelihood far away from the training data. Moreover, it has recently been discovered that VAEs, flows and GANs also suffer
from overconfident likelihoods \citep{NalEtAl2018,HenMazDie2019} far away from the data they are supposed to model as well as adversarial samples \citep{KosFisSon2017}. 

For $\hat{p}(x|o)$ and $\hat{p}(x|i)$ we use a Gaussian mixture model (GMM) which is less powerful than a VAE but has the advantage that the density
estimates can be controlled 
far away from the training data:
\begin{align}\label{eq:model_GMM}
\hat{p}(x|i) = \sum_{k=0}^{K_i} \alpha_k \exp \left(-\frac{d(x,\mu_k)^2}{2\sigma_k^2} \right), \qquad
\hat{p}(x|o) = \sum_{l=0}^{K_o} \beta_l \exp \left(-\frac{d(x,\nu_l)^2}{2\theta_l^2} \right)
\end{align}									
where $K_i,K_o\in \mathds{N}$ are the number of centroids and $d:\R^d \times \R^d \rightarrow \R$ is the metric
\[ d(x,y) = \norm{C^{-\frac{1}{2}}(x-y)}_2,\]
with $C$ being a positive definite matrix and
\[ \alpha_k = \frac{1}{K_i}\frac{1}{(2\pi\sigma_k^2 \det C)^\frac{d}{2}}, \quad \beta_l = \frac{1}{K_o}\frac{1}{(2\pi\theta_l^2 \det C)^\frac{d}{2}}.\]

We later fix $C$ as a slightly modified covariance matrix of the in-distribution data (see Section \ref{sec:exp} for details). Thus one just has to estimate the centroids $\mu_k,\nu_l$ and the variances $\sigma_k^2,\theta_l^2$. 
The idea of this metric is to use distances adapted to the data-distribution. Note that \eqref{eq:model_GMM} is a properly normalized 
density in $\R^d$. 


\subsection{Maximum likelihood estimation}
Given models for $\hat{p}(y|x)$ and $\hat{p}(x)$ we effectively have a full generative model and apply maximum likelihood estimation to get the 
underlying classifier $\hat{p}(y|x,i)$  and the parameters of the Gaussian mixture models $\hat{p}(x|i), \hat{p}(x|o)$. 
The only free parameter left is the probability $\hat{p}(i),\hat{p}(o)$ which we write compactly as $\lambda=\frac{\hat{p}(o)}{\hat{p}(i)}$. In principle this
parameter should be set considering the potential cost of over-confident predictions. In our experiments we simply fix it to $\lambda=1$. 
\begin{align}\label{eq:loss}
 & \underset{(x,y)\sim p(x,y)}{\mathlarger{\mathds{E}}} \log\Big( \hat{p}(y,x)\Big)
                                                       =  \underset{(x,y)\sim p(x,y)}{\mathlarger{\mathds{E}}} \log\big(\hat{p}(y|x)\big) + \log(\hat{p}(x)), \nonumber\\
&=  \underset{(x,y)\sim p(x,y)}{\mathlarger{\mathds{E}}} \log \left( \frac{\hat{p}(y|x,i)\hat{p}(x|i)\hat{p}(i)+\frac{1}{M}\hat{p}(x|o)\hat{p}(o)}{\hat{p}(x|i)\hat{p}(i)+\hat{p}(x|o)\hat{p}(o)} \right)
																											+ \log\big(\hat{p}(x|i)\hat{p}(i)  + \hat{p}(x|o)\hat{p}(o)\big) .
\end{align}
In practice, we have to compute empirical expectations from finite training data from the in-distribution $(x_i, y_i)_{i=1}^{n_i}$ and out-distribution $(z_j)_{j=1}^{n_o}$. Labels for the out-distribution could be generated randomly via $p(y|x,o)=\frac{1}{M}$, but we obtain an unbiased estimator with lower variance by averaging over all classes directly, as was done in~\citet{LeeEtAl2018, HeiAndBit2019, HenMazDie2019}. Now we can estimate the classifier $f$ and the mixture model parameters $\mu,\nu,\sigma,\theta$ via
\begin{align}\label{eq:loss1}
 \argmax_{f,\mu,\nu,\sigma, \theta} \left\lbrace  \frac{1}{n_i} \sum_{i=1}^{n_i} \right. \log \big(\hat{p}(y_i|x_i)\big)  &+  \frac{\lambda}{n_o}  \sum_{j=1}^{n_o} \frac{1}{M} \sum_{m=1}^M  \log  \big(\hat{p}(m|z_j)\big)  \nonumber\\
&+  \frac{1}{n_i}  \sum_{i=1}^{n_i} \log(\hat{p}(x_i))  +  \left. \frac{\lambda}{n_o} \sum_{j=1}^{n_o} \log(\hat{p}(z_j))  \right\rbrace \! ,
\end{align}
with 
\begin{equation}
\hat{p}(y|x)=\frac{\hat{p}(y|x,i)\hat{p}(x|i)+\frac{\lambda}{M} \hat{p}(x|o)}{\hat{p}(x|i)+\lambda \hat{p}(x|o)} \quad \textrm{ and } \quad \hat{p}(x)=\frac{1}{\lambda+1}\Big(\hat{p}(x|i) + \lambda \hat{p}(x|o) \Big).
\end{equation}
Due to the bounds derived in Section~\ref{sec:bounds}, we denote our method by \textbf{Certified Certain Uncertainty (CCU)}. 
Note that if one uses a standard neural network model with softmax, i.e. ${ \hat{p}(y|x)=\hat{p}(y|x,i)=\frac{e^{f_y(x)}}{\sum_{m=1}^M e^{f_m(x)}} }$, then the  first term in \eqref{eq:loss1} would be the cross-entropy loss for the in-distribution data and the second term the cross entropy loss for the out-distribution data with a uniform distribution over the classes. For this choice of $\hat{p}(y|x)$ and neglecting the terms for $\hat{p}(x)$ we recover the approach of \citet{HeiAndBit2019, HenMazDie2019}  for training a classifier which outputs uniform confidence predictions on out-distribution data where $\frac{\hat{p}(i)}{\hat{p}(o)}$ corresponds to that regularization parameter $\lambda$. The key difference in our approach is that $\hat{p}(y|x)\neq \hat{p}(y|x,i)$ and the estimated densities for in- and out distribution $\hat{p}(x|i)$ and $\hat{p}(x|o)$ lead to a confidence calibration of $\hat{p}(y|x)$, and in turn the fit of the classifier influences the estimation of $\hat{p}(x|i)$ and $\hat{p}(x|o)$. The major advantage of our model is that we can give guarantees on the confidence of the classifier decision
far away from the training data.


\section{Provable guarantees for close to uniform predictions far away from the training data}\label{sec:bounds}
In this section we provide two types of guarantees on the confidence of a classifier trained according to our model in \eqref{eq:loss1}.
The first one says that the classifier has provably low confidence far away from the training
data, where an explicit bound on the minimal distance is provided, and the second provides an upper bound on the confidence in a ball around a given input point. 
The latter bound resembles robustness guarantees for adversarial samples \citep{HeiAnd2017, WonKol2018, RagSteLia2018, MirGehVec2018} and is quite different
from the purely empirical evaluation done in OOD detection papers as we show in Section \ref{sec:exp}.

We provide our bounds for a more general mixture model which includes our GMM in \eqref{eq:model_GMM} as a special case.
To our knowledge, these are the first such bounds for neural networks and thus it is the first modification of a ReLU neural network so that it provably ``knows when it does not know'' \citep{HeiAndBit2019} in the sense that far away from the training data the predictions are close to uniform over the classes.
\begin{theorem}\label{Thr:thr1}
Let $(x_i^{(i)},y_i^{(i)})_{i=1}^n$ be the training set of the in-distribution and let the
model for the conditional probability be given as
\begin{equation}
\forall x \in \R^d, \, y \in \{1,\ldots,M\}, \qquad \hat{p}(y|x)=\frac{\hat{p}(y|x,i)\hat{p}(x|i) + \frac{\lambda}{M}\hat{p}(x|o) }{\hat{p}(x|i) + \lambda \hat{p}(x|o)},
\end{equation}
where $\lambda=\frac{\hat{p}(o)}{\hat{p}(i)} > 0$ and let the model for the marginal density
of the in-distribution $\hat{p}(x|i)$ and out-distribution $p(x|o)$ be given by the generalized GMMs 
\begin{align*}
\hat{p}(x|i) &= \sum_{k=0}^{K_i} \alpha_k \exp \left(-\frac{d(x,\mu_k)^2}{2\sigma_k^2} \right), \qquad 
\hat{p}(x|o) = \sum_{l=0}^{K_o} \beta_l \exp \left(-\frac{d(x,\nu_l)^2}{2\theta_l^2} \right)
\end{align*}
with $\alpha_k,\beta_l>0$ and $\mu_k,\nu_l \in \R^d \quad \forall k=1,\hdots K_i,\;l=1,\ldots,K_o$ and $d:\R^d \times \R^d \rightarrow \mathbb{R}_+$ a metric. Let $z \in \R^d$ and define $k^*=\argmin_{k=1,\ldots,K_i} \frac{d(z,\mu_k)}{\sigma_k}$,  
$i^*=\argmin_{i=1,\ldots,n} d(z,x_i)$, ${l^*=\argmin_{l=1,\ldots,K_o} \beta_l\exp \left(-\frac{d(z,\nu_l)^2}{2\theta_l^2} \right)}$
and $\Delta = \frac{\theta_{l^*}^2}{\sigma_{k^*}^2}-1$.
For any $\epsilon>0$, if $\min_l \theta_l > \max_k \sigma_k$ and
\begin{equation}
  \min_{i=1,\ldots,n} d(z,x_i) \geq d(x_{i^*},\mu_{k^*}) + d(\mu_{k^*},\nu_{l^*}) \Big[\frac{2}{\Delta}+\frac{1}{\sqrt{\Delta}}\Big] + 
\frac{\theta_{l^*}}{\sqrt{\Delta}}\sqrt{\log\Big(\frac{M-1}{\epsilon\, \lambda} \frac{\sum_k \alpha_k}{\beta_{l^*}}\Big)},
  \end{equation}
then it holds for all $m \in \{1,\ldots,M\}$ that
\begin{equation}
 \hat{p}(m|z) \leq \frac{1}{M}\big(1+\epsilon\big).
 \end{equation}
In particular, if $\min_i d(z,x_i) \rightarrow \infty$, then $\hat{p}(m|z) \rightarrow \frac{1}{M}$.
\end{theorem}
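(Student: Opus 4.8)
The plan is to reduce the confidence bound to a single inequality on the density ratio $\hat{p}(z|i)/\hat{p}(z|o)$. Since the softmax factor satisfies $\hat{p}(m|z,i)\le 1$, cross-multiplying the target inequality $\hat{p}(m|z)\le\frac{1}{M}(1+\epsilon)$ and collecting the $\hat{p}(z|o)$ terms leaves the clean sufficient condition
\[ \frac{\hat{p}(z|i)}{\hat{p}(z|o)} \le \frac{\epsilon\lambda}{M-1}. \]
Thus the whole statement amounts to showing that the in-density becomes exponentially smaller than the out-density once $z$ is far from the training data. This is exactly where the assumption $\min_l \theta_l > \max_k \sigma_k$ (equivalently $\Delta>0$) will be needed, since only a heavier-tailed out-distribution can dominate the in-distribution arbitrarily far away.

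Next I would bound each mixture by a single Gaussian term. For the numerator the choice $k^*=\argmin_k d(z,\mu_k)/\sigma_k$ gives $\exp(-d(z,\mu_k)^2/2\sigma_k^2)\le \exp(-d(z,\mu_{k^*})^2/2\sigma_{k^*}^2)$ for every $k$, hence $\hat{p}(z|i)\le\big(\sum_k\alpha_k\big)\exp(-d(z,\mu_{k^*})^2/2\sigma_{k^*}^2)$; for the denominator I would simply keep one summand, $\hat{p}(z|o)\ge\beta_{l^*}\exp(-d(z,\nu_{l^*})^2/2\theta_{l^*}^2)$. Taking logarithms turns the sufficient condition into
\[ \frac{d(z,\mu_{k^*})^2}{2\sigma_{k^*}^2}-\frac{d(z,\nu_{l^*})^2}{2\theta_{l^*}^2}\ge \log\left(\frac{M-1}{\epsilon\lambda}\frac{\sum_k\alpha_k}{\beta_{l^*}}\right)=:L, \]
in which one already recognises the logarithmic term of the theorem.

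Finally I would pass from the distances to $z$ to a single scalar $a=d(z,\mu_{k^*})$ using the metric triangle inequality twice: $d(z,\nu_{l^*})\le a+d(\mu_{k^*},\nu_{l^*})$ and $a\ge\min_i d(z,x_i)-d(x_{i^*},\mu_{k^*})$. Substituting and using $\theta_{l^*}^2/\sigma_{k^*}^2=1+\Delta$, the log-inequality reduces to the quadratic $\Delta a^2-2a\,d(\mu_{k^*},\nu_{l^*})-d(\mu_{k^*},\nu_{l^*})^2\ge 2\theta_{l^*}^2 L$. I expect this quadratic estimate to be the main obstacle: the additive, interpretable distance bound in the statement is not the exact root of the quadratic but a convenient over-estimate of it, so the work lies in verifying---by completing the square in $a$ and applying $\sqrt{p+q+s}\le\sqrt p+\sqrt q+\sqrt s$ to separate the $d(\mu_{k^*},\nu_{l^*})$-terms from the $L$-term---that the stated lower bound on $\min_i d(z,x_i)$, with its coefficients $2/\Delta$, $1/\sqrt{\Delta}$ and $\theta_{l^*}/\sqrt{\Delta}$, indeed forces the quadratic to hold. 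The concluding limit is then immediate: solving the distance condition for $\epsilon$ shows the admissible $\epsilon$ decays as the distance grows (its right-hand side scales like $\sqrt{\log(1/\epsilon)}$), so $\min_i d(z,x_i)\to\infty$ drives $\epsilon\to0$ and hence $\hat{p}(m|z)\to\frac{1}{M}$.
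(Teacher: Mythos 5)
Your plan is correct and follows essentially the same route as the paper's proof: reduce to the sufficient condition $\hat p(z|i)/\hat p(z|o)\le \epsilon\lambda/(M-1)$ (the paper reaches it via the tangent-line bound $\frac{1+M\xi}{1+\xi}\le 1+(M-1)\xi$ rather than your direct cross-multiplication with $\hat p(m|z,i)\le 1$, but the two are equivalent), bound the in-mixture by $\big(\sum_k\alpha_k\big)$ times its dominant $k^*$-term and the out-mixture from below by the single $l^*$-summand, apply the triangle inequality twice, and solve the resulting quadratic in $a=d(z,\mu_{k^*})$ using $\sqrt{p+q+s}\le\sqrt p+\sqrt q+\sqrt s$. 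The quadratic you identify, $\Delta a^2-2a\,d(\mu_{k^*},\nu_{l^*})-d(\mu_{k^*},\nu_{l^*})^2\ge 2\theta_{l^*}^2L$, is exactly the one the paper solves, so no idea is missing from your outline.
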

The proof is given in Appendix \ref{App:Thr}. Theorem \ref{Thr:thr1} holds for any multi-class classifier which defines for each input a probability distribution  over the labels. 
Given the parameters of the GMM's it quantifies at which distance of an input $z$ to the training set the classifier achieves close to uniform confidence. The theorem holds even if we use ReLU classifiers which in their unmodified form have been shown to produce arbitrarily high confidence far away from the training data \cite{HeiAndBit2019}. This is a first step towards neural networks which provably know when they don't know.

In the next corollary, we provide an upper bound on the confidence over a ball around a given data point. This allows to give 
``confidence guarantees'' for a whole volume and thus is much stronger than the usual pointwise evaluation of OOD methods. 
\begin{corollary}\label{Cor:Cor1}
Let $x_0 \in \R^d$ and $R>0$, then with $\lambda=\frac{\hat{p}(o)}{\hat{p}(i)}$ it holds
\begin{equation}
\maxop_{d(x,x_0)\leq R} \; \hat{p}(y|x) \, \leq \,\frac{1}{M} \frac{1 + M \frac{b}{\lambda}  }{1+\frac{b}{\lambda} } ,
\end{equation}
where $b= \frac{\sum_{k=1}^{K_i} \alpha_k \exp\left(-\frac{\max \left\lbrace  d(x_0, \mu_k) - R, 0 \right\rbrace^2}{2\sigma^2_k}\right)}
{\sum_{l=1}^{K_o} \beta_l \exp\left(-\frac{ ( d(x_0, \nu_l) + R)^2}{2\theta^2_l}\right)}$.
\end{corollary}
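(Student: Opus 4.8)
The plan is to reduce the claimed bound on the confidence $\hat{p}(y|x)$ to a bound on the likelihood ratio $\hat{p}(x|i)/\hat{p}(x|o)$ over the ball, and then to control that ratio via the triangle inequality for the metric $d$. First I would exploit that $\hat{p}(y|x,i)\leq 1$, being a probability, to upper bound the numerator in the model for $\hat{p}(y|x)$:
\begin{equation*}
\hat{p}(y|x) \leq \frac{\hat{p}(x|i) + \frac{\lambda}{M}\hat{p}(x|o)}{\hat{p}(x|i) + \lambda\hat{p}(x|o)}.
\end{equation*}
Dividing numerator and denominator by $\lambda\,\hat{p}(x|o)$ and writing $t=t(x)=\frac{\hat{p}(x|i)}{\lambda\,\hat{p}(x|o)}$, the right-hand side becomes $g(t):=\frac{t+1/M}{t+1}$.

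Next I would observe that $g$ is strictly increasing, since $g'(t)=\frac{1-1/M}{(t+1)^2}>0$ for $M>1$. Hence maximizing (this upper bound on) the confidence over the ball amounts to maximizing $t(x)$, i.e. the ratio $\hat{p}(x|i)/\hat{p}(x|o)$, over $\{x : d(x,x_0)\leq R\}$. I would bound this ratio by bounding $\hat{p}(x|i)$ from above and $\hat{p}(x|o)$ from below, termwise in the two mixtures.

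The key geometric step uses the triangle inequality for $d$. For any $x$ with $d(x,x_0)\leq R$ the distance to an in-centroid satisfies $d(x,\mu_k)\geq d(x_0,\mu_k)-R$, and since distances are nonnegative the minimal achievable value over the ball is $\max\{d(x_0,\mu_k)-R,0\}$; each Gaussian term of $\hat{p}(x|i)$ is therefore at most $\alpha_k\exp\!\big(-\max\{d(x_0,\mu_k)-R,0\}^2/(2\sigma_k^2)\big)$. Symmetrically $d(x,\nu_l)\leq d(x_0,\nu_l)+R$, so each term of $\hat{p}(x|o)$ is at least $\beta_l\exp\!\big(-(d(x_0,\nu_l)+R)^2/(2\theta_l^2)\big)$. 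Summing the termwise bounds yields $\hat{p}(x|i)/\hat{p}(x|o)\leq b$ uniformly over the ball, hence $t(x)\leq b/\lambda$.

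Finally, by monotonicity of $g$,
\begin{equation*}
\maxop_{d(x,x_0)\leq R}\hat{p}(y|x) \leq g\!\left(\tfrac{b}{\lambda}\right) = \frac{b/\lambda + 1/M}{b/\lambda + 1} = \frac{1}{M}\,\frac{1+M\frac{b}{\lambda}}{1+\frac{b}{\lambda}},
\end{equation*}
which is the claimed bound. I do not anticipate a genuine obstacle here, since the argument is a clean combination of monotonicity with two triangle-inequality estimates; the only step requiring care is getting the two directions right, namely bounding the numerator from above (minimizing each in-distribution distance, with the nonnegativity truncation $\max\{\cdot,0\}$) while bounding the denominator from below (maximizing each out-distribution distance), together with confirming that $g$ is increasing so that these worst-case substitutions indeed produce an upper bound on the confidence rather than a lower one.
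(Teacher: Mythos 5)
Your proof is correct and follows essentially the same route as the paper's: reduce to the likelihood ratio $\hat{p}(x|i)/\hat{p}(x|o)$ via monotonicity of $\xi\mapsto\frac{1+M\xi}{1+\xi}$, then bound the ratio uniformly over the ball by applying the reverse triangle inequality termwise to the in-mixture and the triangle inequality termwise to the out-mixture. If anything, your write-up is slightly more explicit than the paper's in isolating the step $\hat{p}(y|x,i)\leq 1$ that turns the exact expression for $\hat{p}(y|x)$ into the monotone upper bound.
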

The proof is in the Appendix \ref{App:Cor}. We show in Section~\ref{sec:exp} that even though OOD methods achieve low confidence on noise images, the maximization of the confidence in a ball around a noise point (adversarial noise) yields high confidence predictions for OOD methods, whereas our classifier has provably low confidence, as certified by Corollary \ref{Cor:Cor1}. The failure of OOD methods shows that the certification of entire regions is an important contribution of CCU which goes beyond the purely sampling-based evaluation. 

\section{Experiments}\label{sec:exp}

We evaluate the worst-case performance of various OOD detection methods within regions for which CCU yields guarantees and by standard OOD on MNIST~\citep{lecun1998gradient}, FashionMNIST~\citep{XiaoEtAl2017}, SVHN~\citep{netzer2011reading}, CIFAR10 and CIFAR100~\citep{krizhevsky2009learning}. We show that all other OOD methods yield undesired high confidence predictions in the certified low confidence regions of CCU and thus would not detect these
inputs as out-distribution. For calibrating hyperparameters resp. training we use for all OOD methods the 80 Million Tiny Images~\citep{torralba200880} as out-distribution \cite{HenMazDie2019} which yields a fair and realistic comparison.

\newcommand{\plotwidth}{.079\textwidth}
\begin{figure}
{\footnotesize
\setlength{\tabcolsep}{2.pt}
\begin{tabular}{ll|llllllllll}

\begin{turn}{90} \tiny \hspace{-.26cm} MNIST \end{turn} & \includegraphics[width=\plotwidth,valign=c]{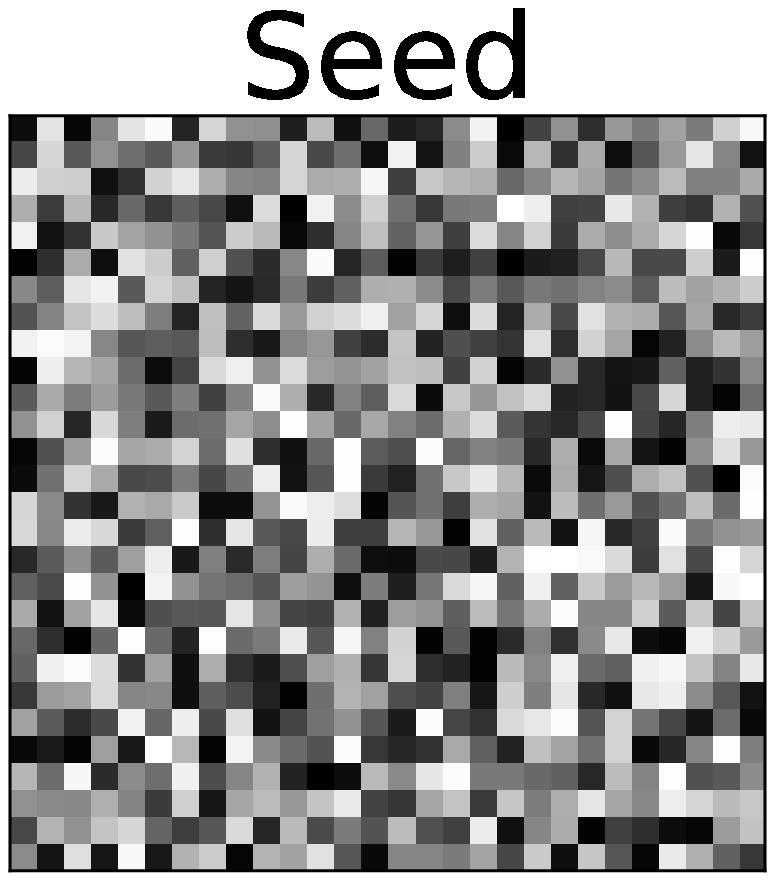} & \includegraphics[width=\plotwidth,valign=c]{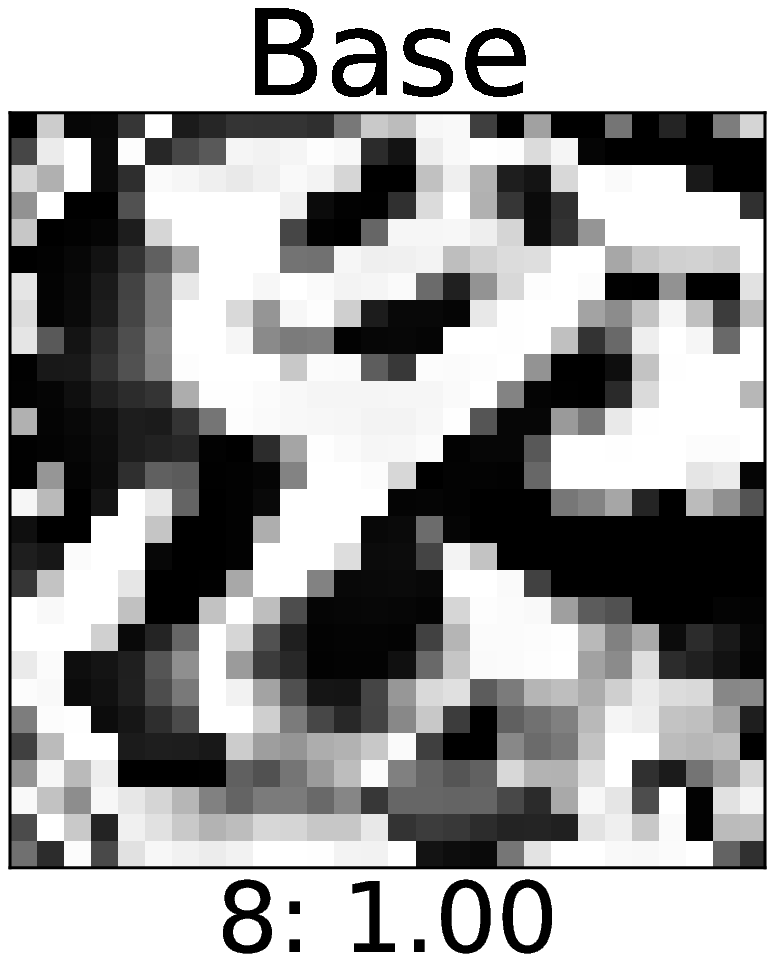}
& \includegraphics[width=\plotwidth,valign=c]{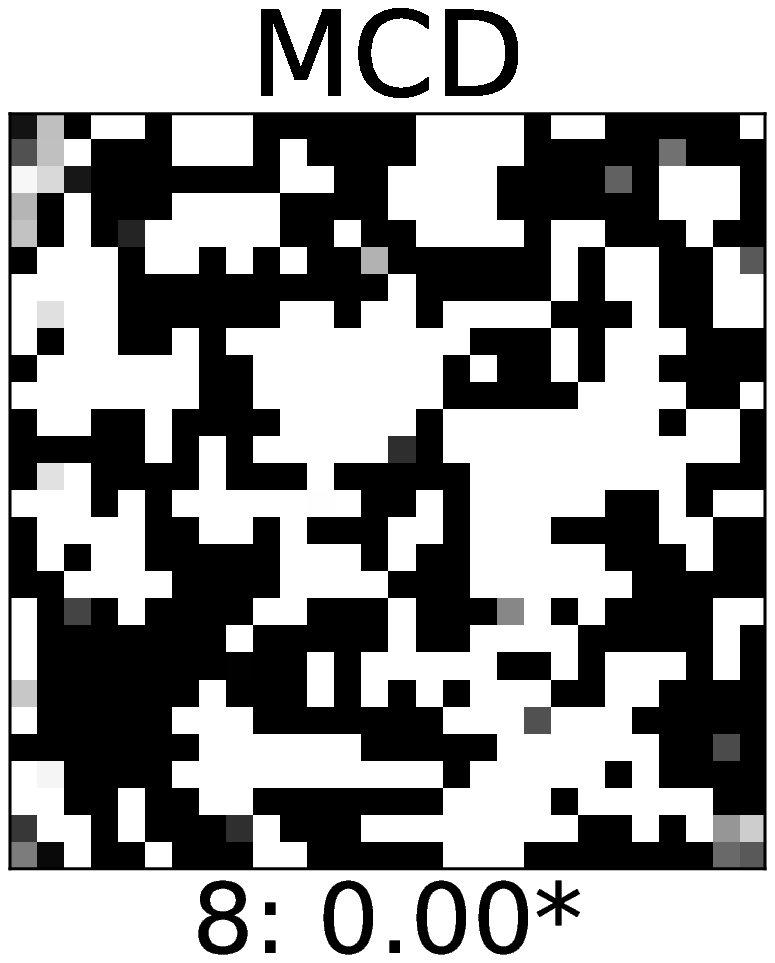} & \includegraphics[width=\plotwidth,valign=c]{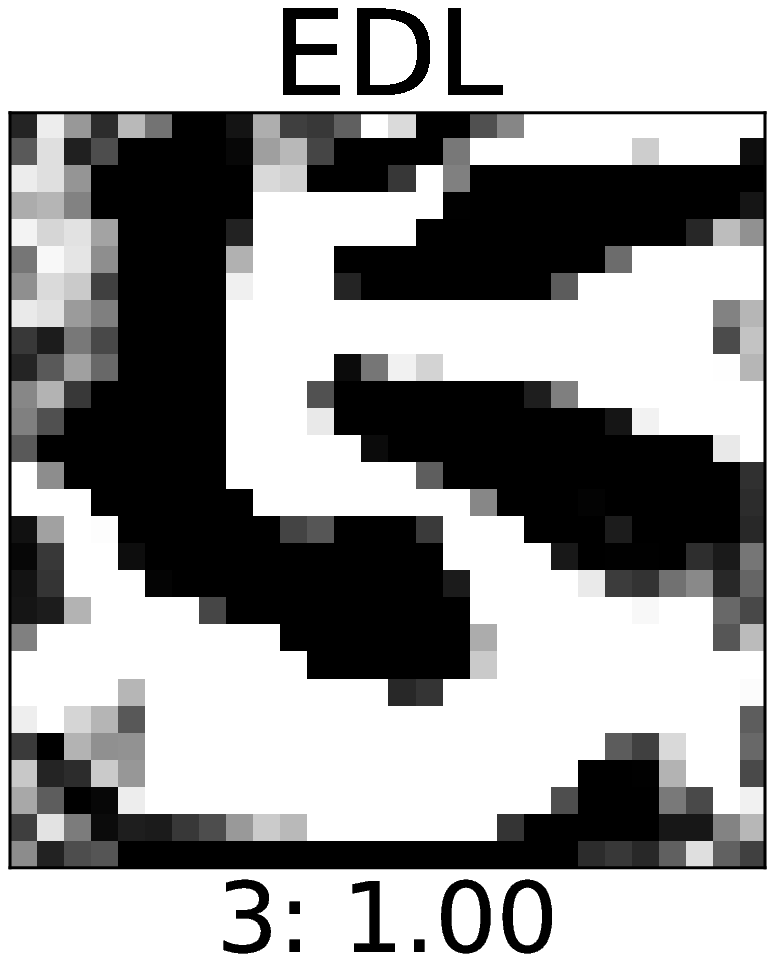}
& \includegraphics[width=\plotwidth,valign=c]{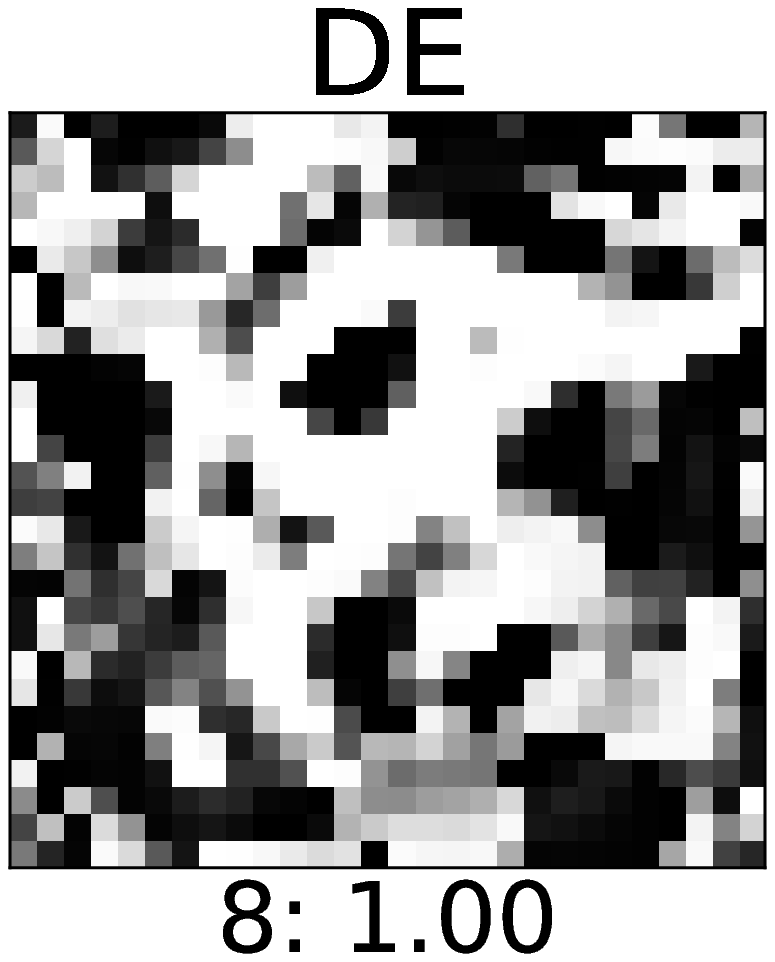}  & \includegraphics[width=\plotwidth,valign=c]{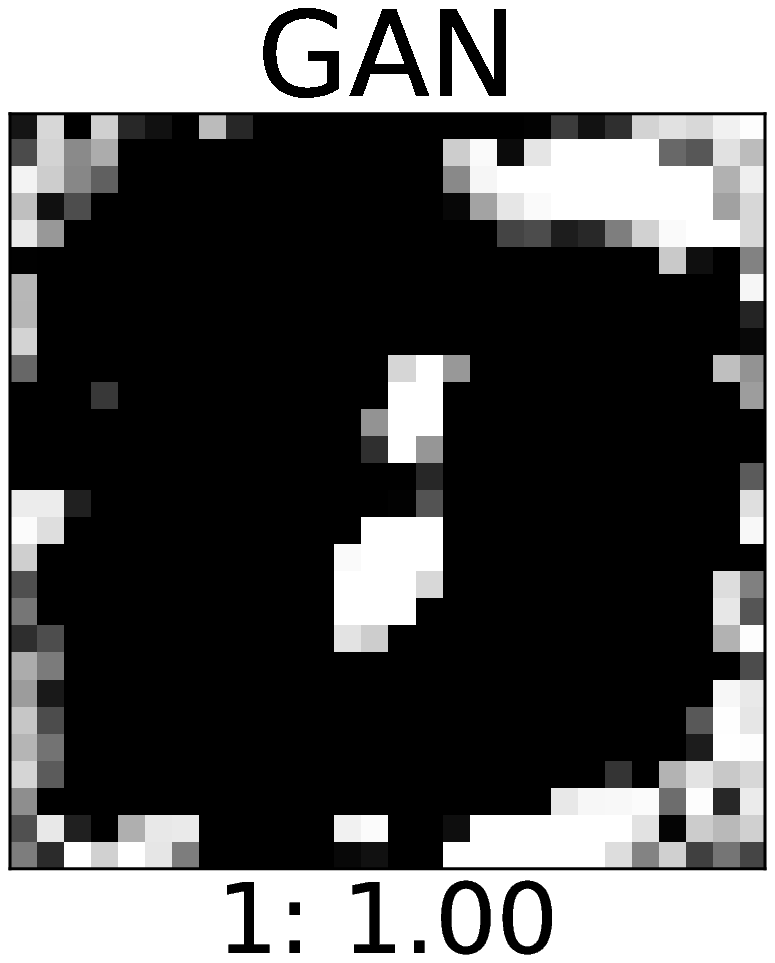}
& \includegraphics[width=\plotwidth,valign=c]{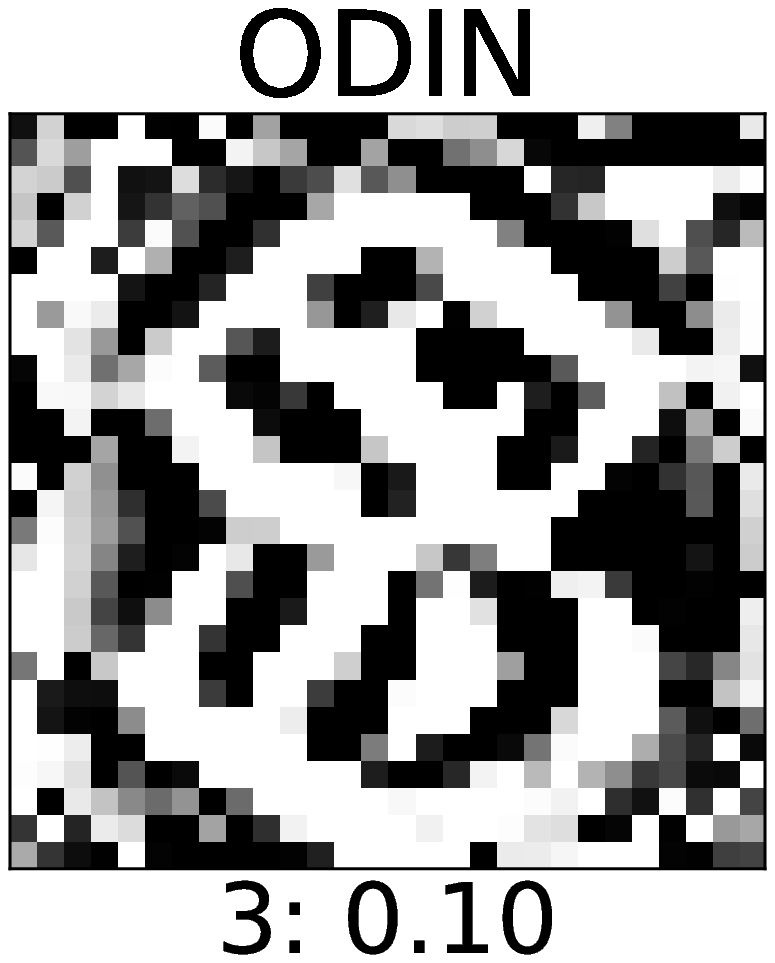} & \includegraphics[width=\plotwidth,valign=c]{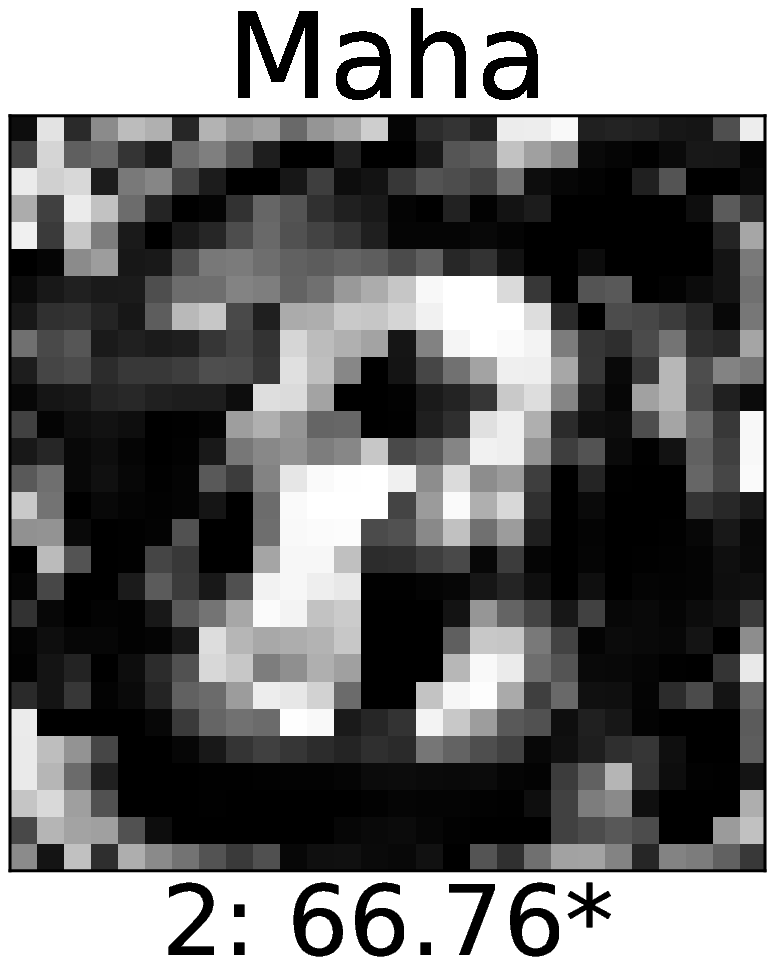}
& \includegraphics[width=\plotwidth,valign=c]{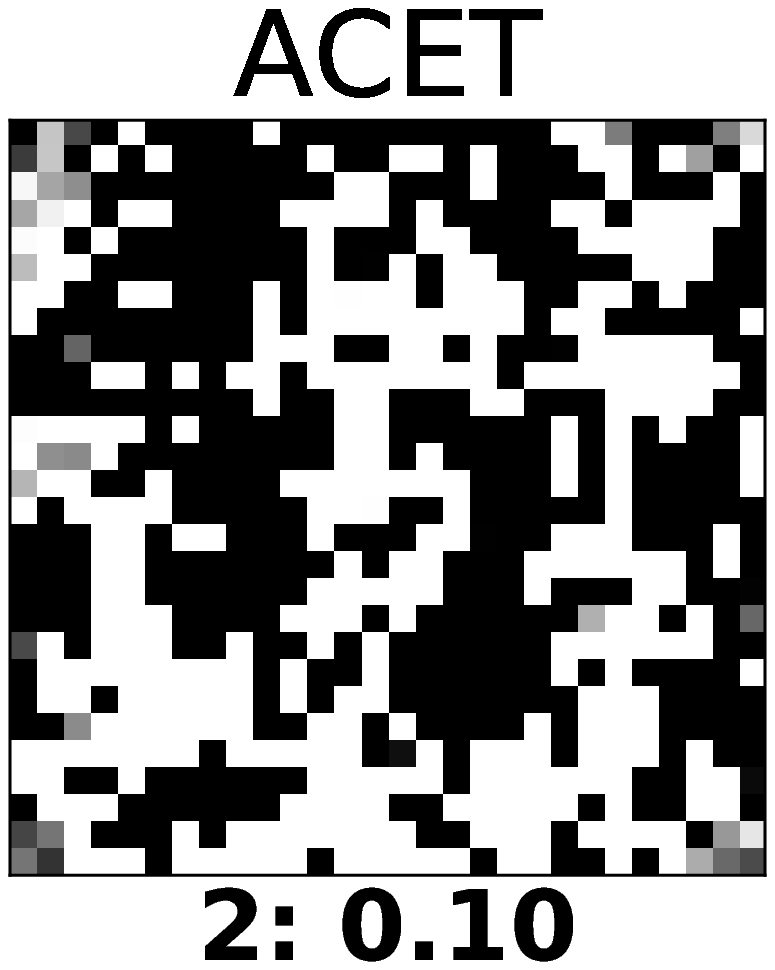} & \includegraphics[width=\plotwidth,valign=c]{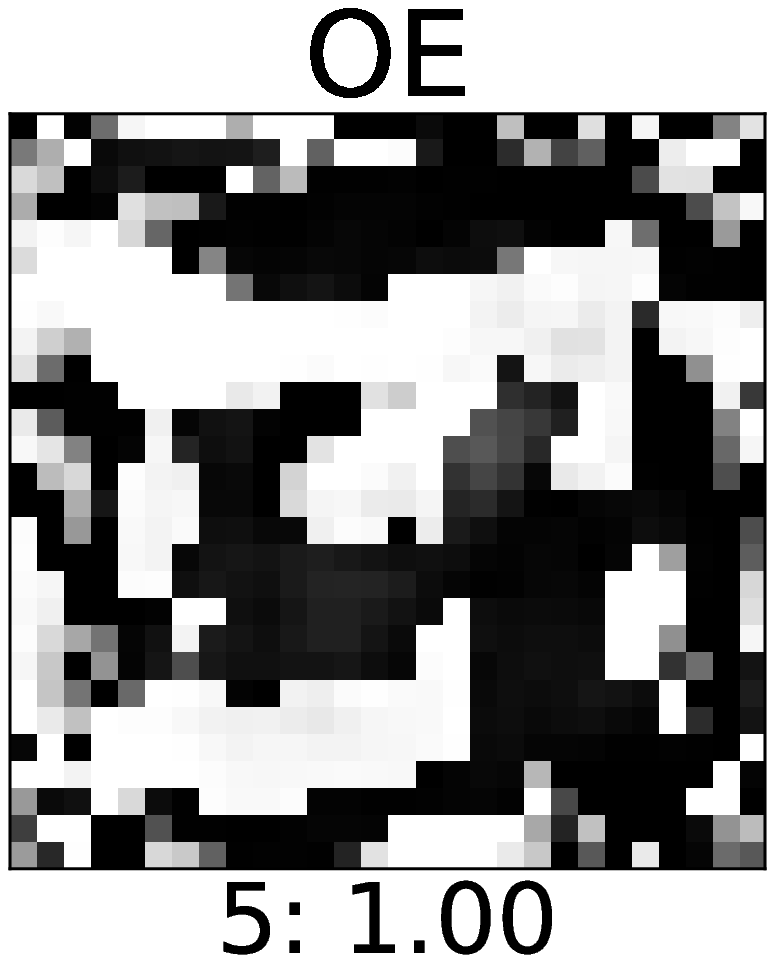}
& \includegraphics[width=\plotwidth,valign=c]{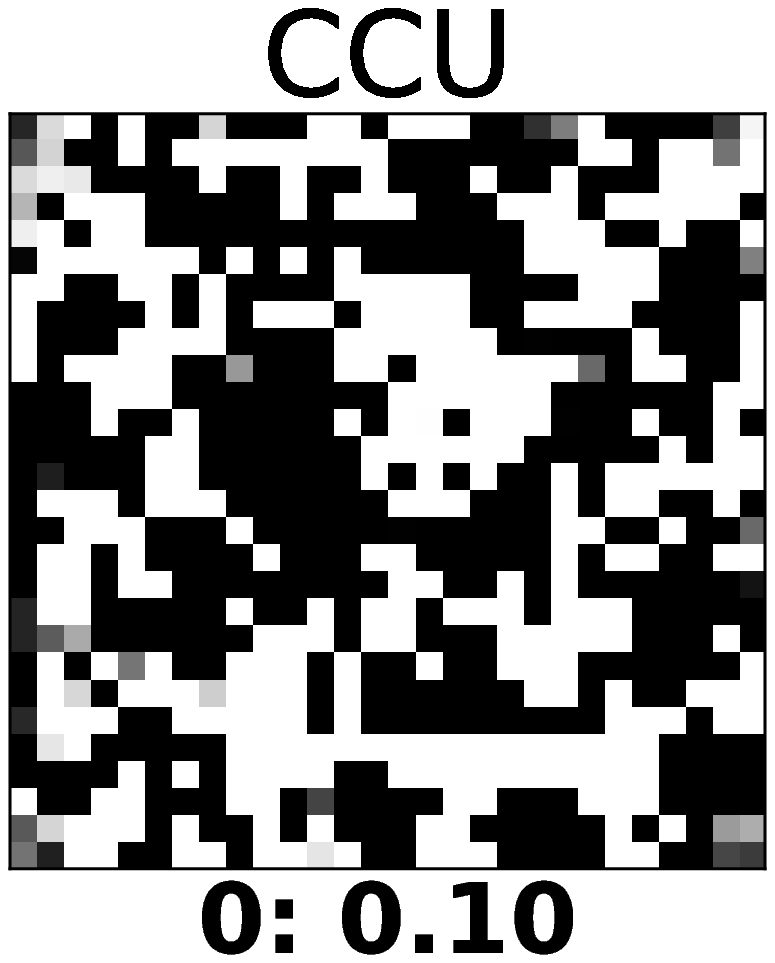} \\

\begin{turn}{90} \tiny \hspace{-.3cm} FMNIST \end{turn} & \includegraphics[width=\plotwidth,valign=c]{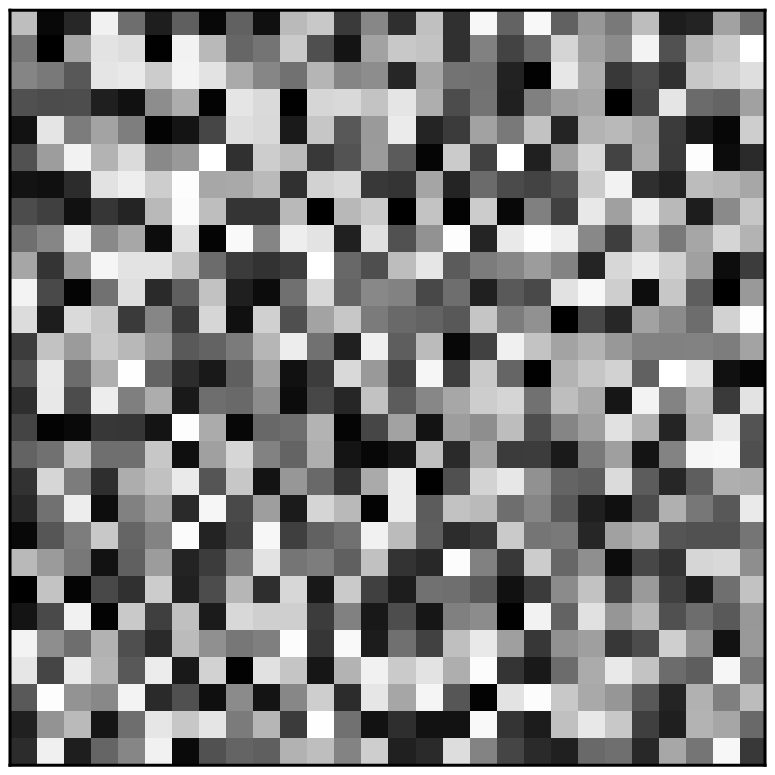} & \includegraphics[width=\plotwidth,valign=c]{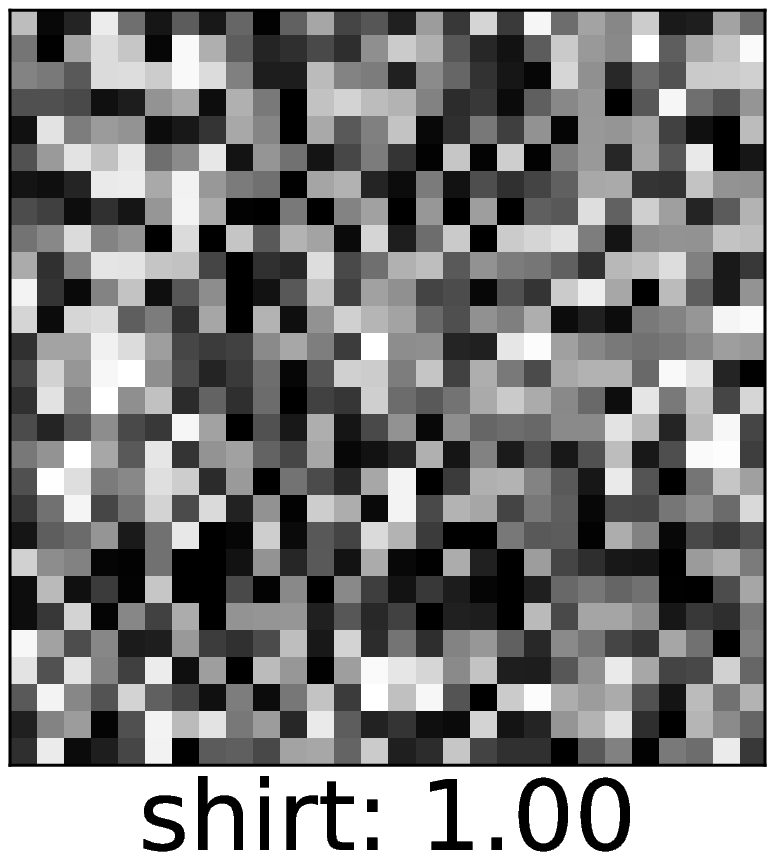}
& \includegraphics[width=\plotwidth,valign=c]{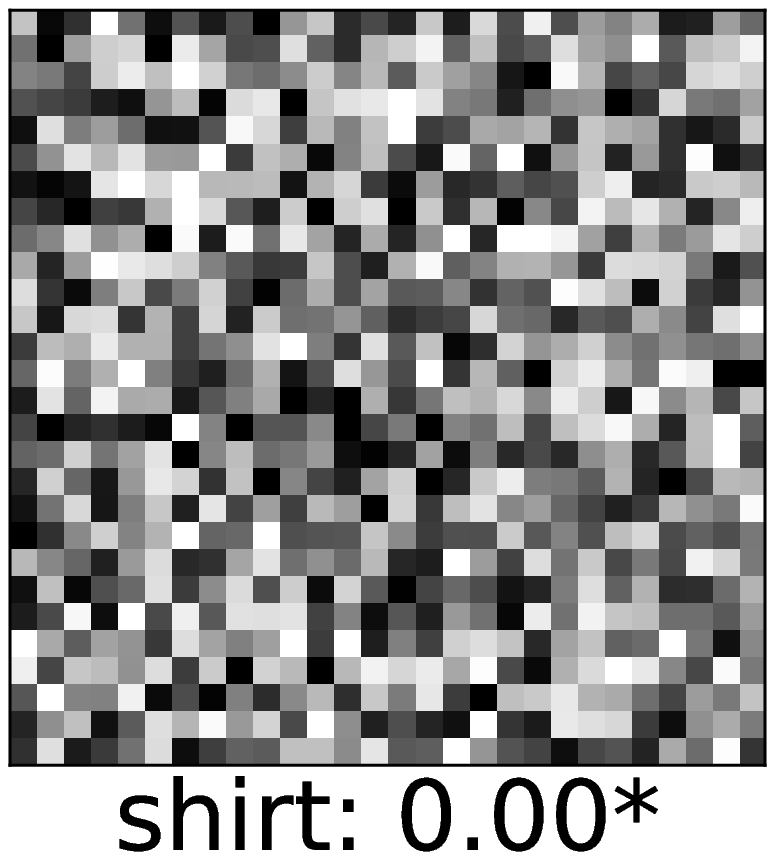} & \includegraphics[width=\plotwidth,valign=c]{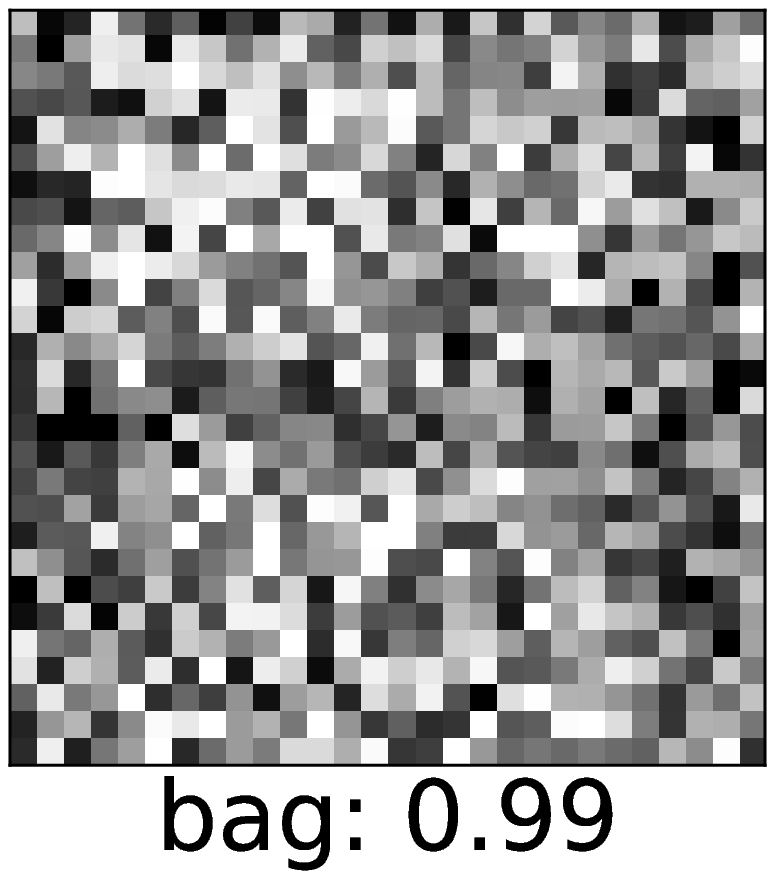}
& \includegraphics[width=\plotwidth,valign=c]{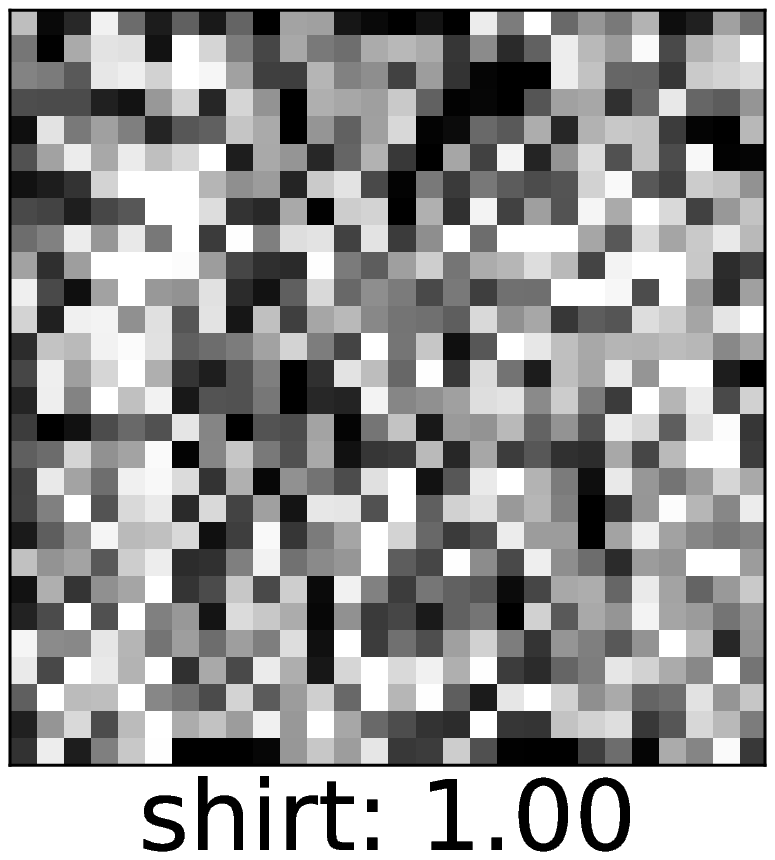}   & \includegraphics[width=\plotwidth,valign=c]{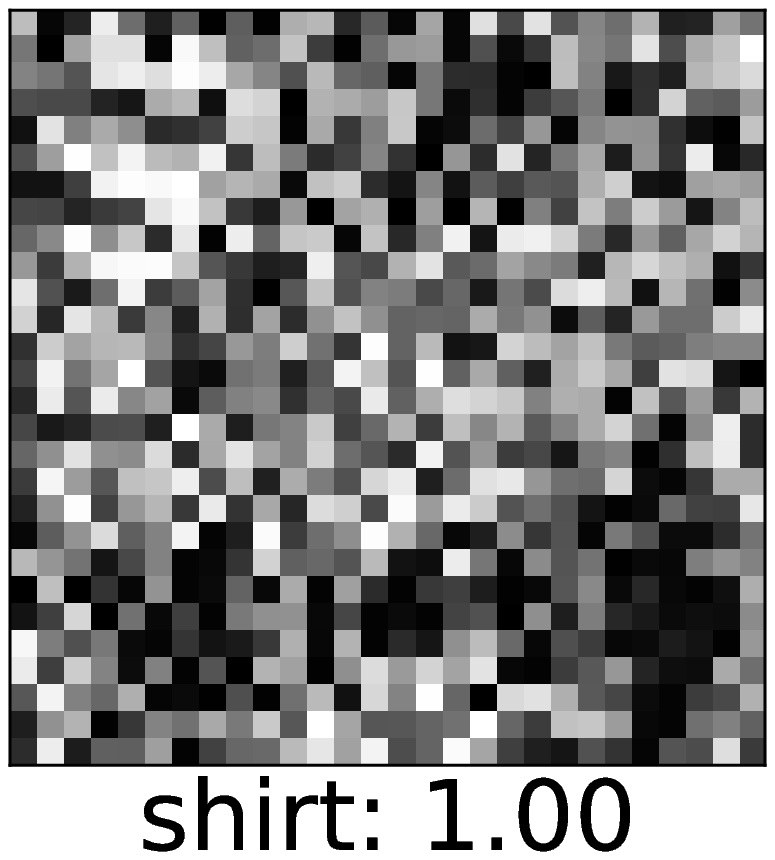}
& \includegraphics[width=\plotwidth,valign=c]{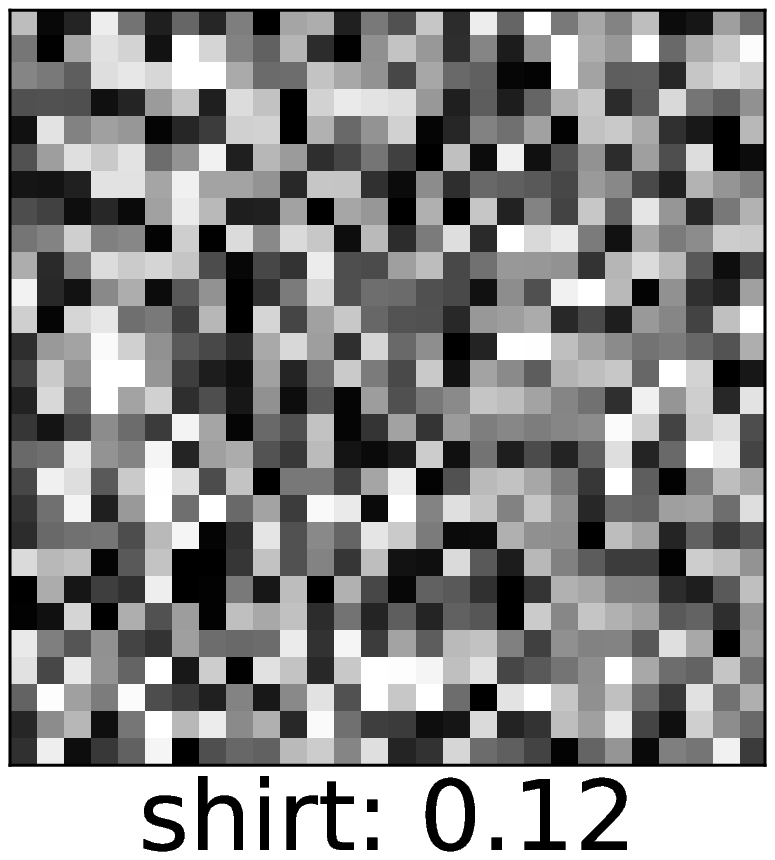} & \includegraphics[width=\plotwidth,valign=c]{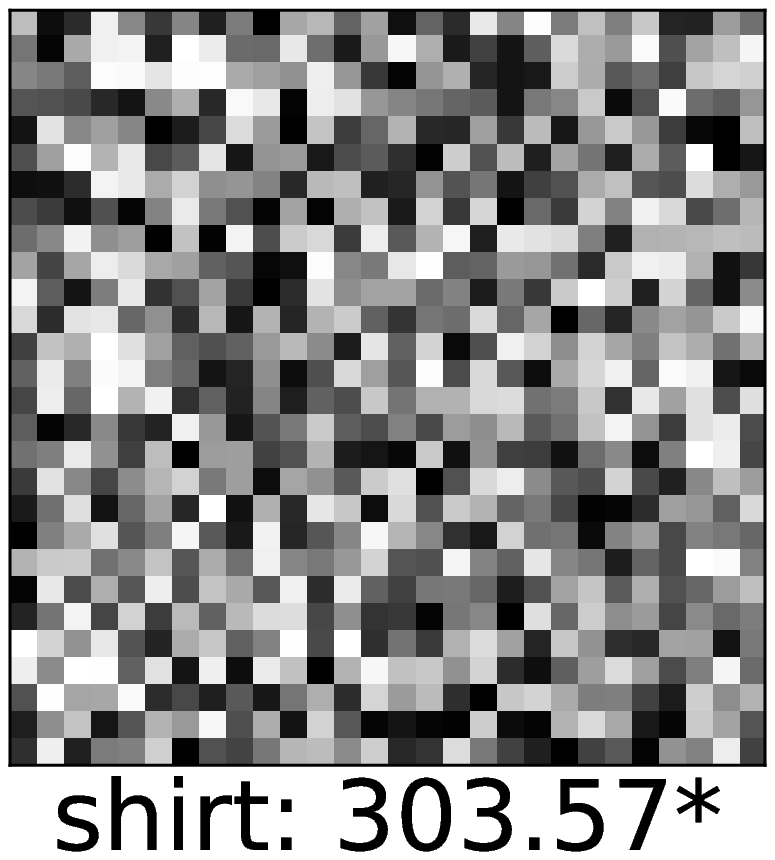}
& \includegraphics[width=\plotwidth,valign=c]{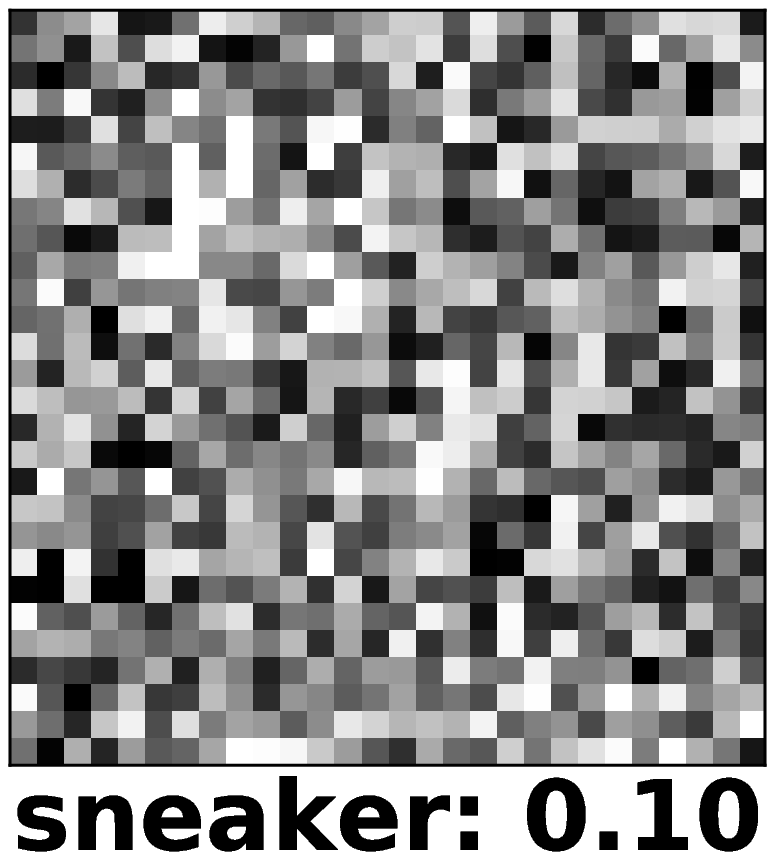} & \includegraphics[width=\plotwidth,valign=c]{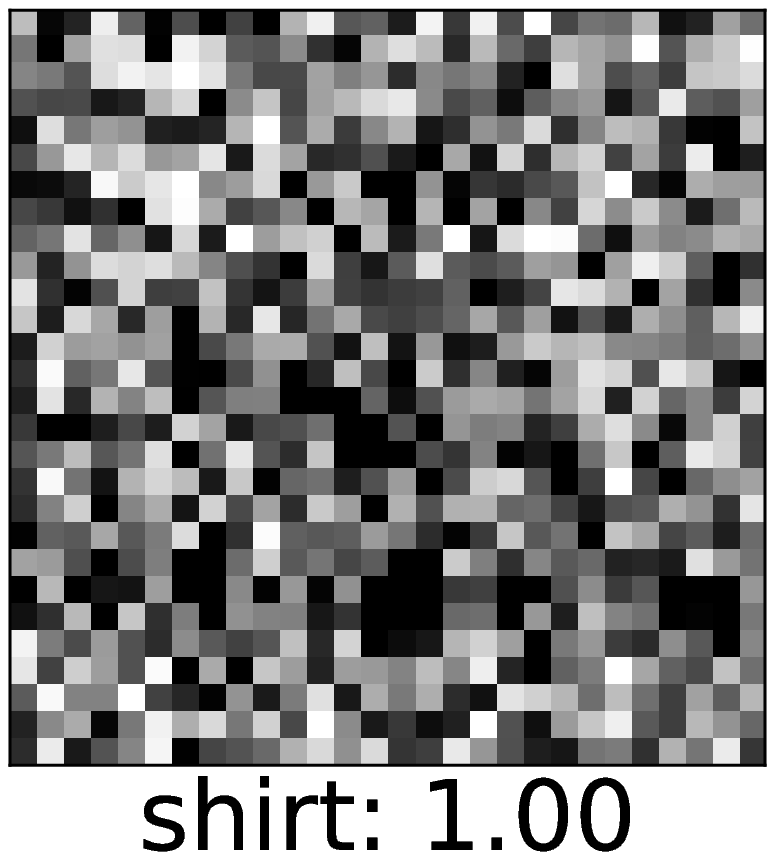}
& \includegraphics[width=\plotwidth,valign=c]{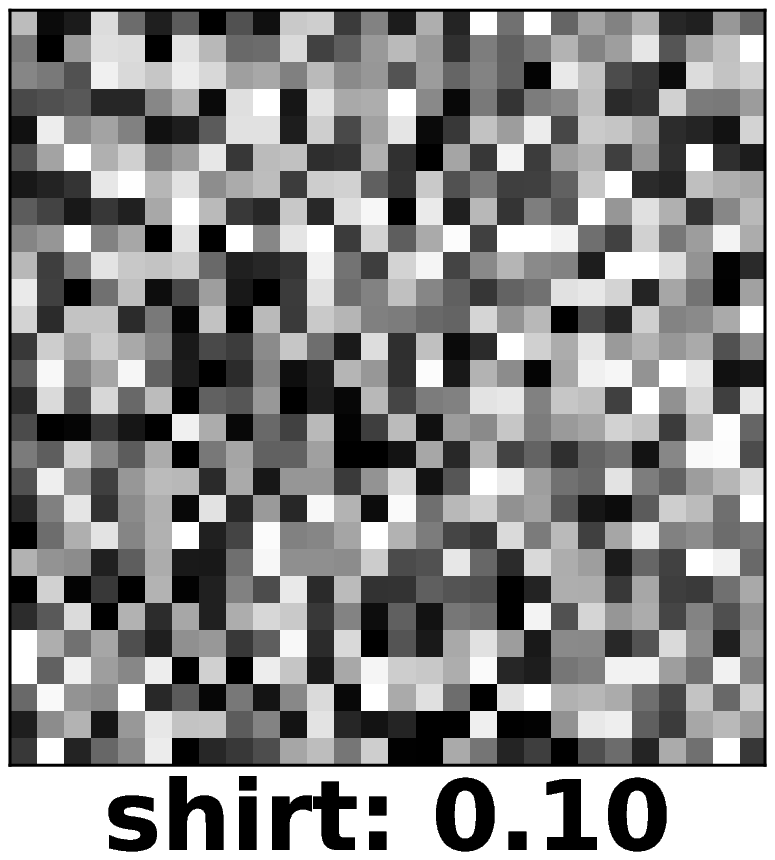} \\

\begin{turn}{90} \tiny \hspace{-.2cm} SVHN \end{turn} & \includegraphics[width=\plotwidth,valign=c]{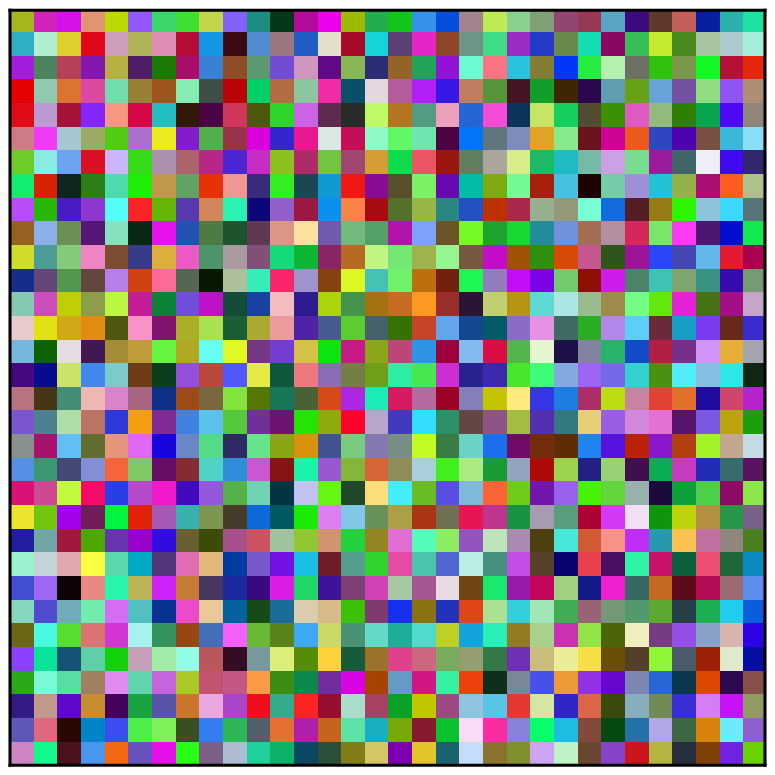} & \includegraphics[width=\plotwidth,valign=c]{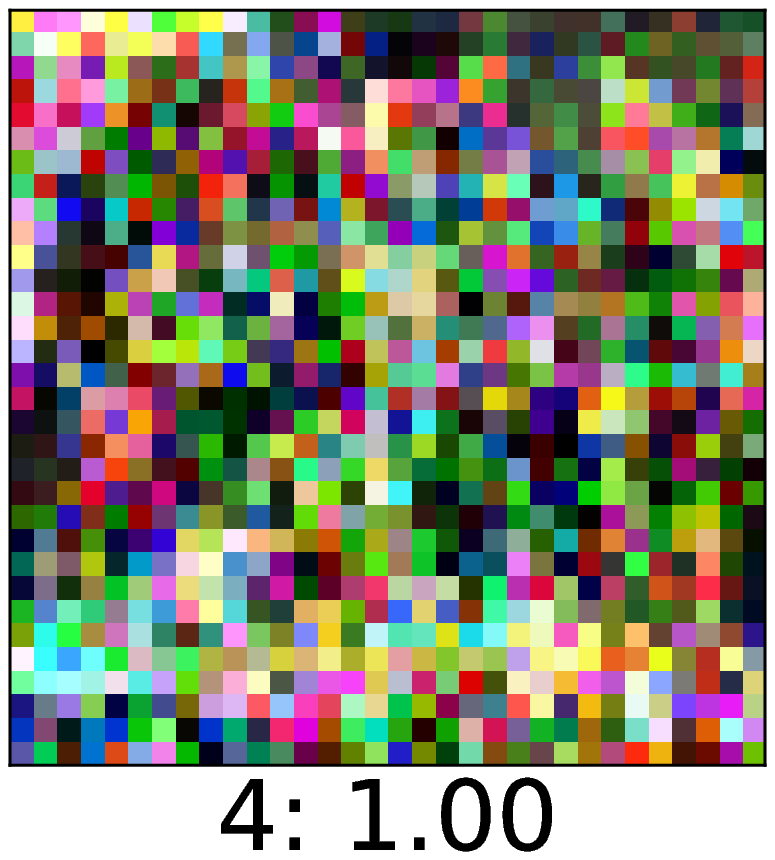}
& \includegraphics[width=\plotwidth,valign=c]{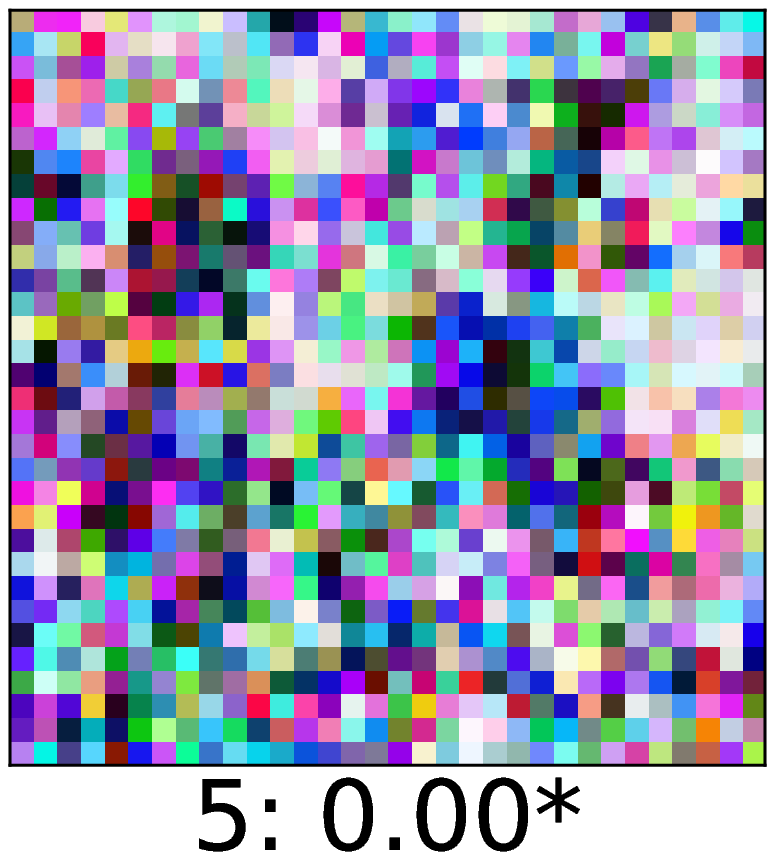} & \includegraphics[width=\plotwidth,valign=c]{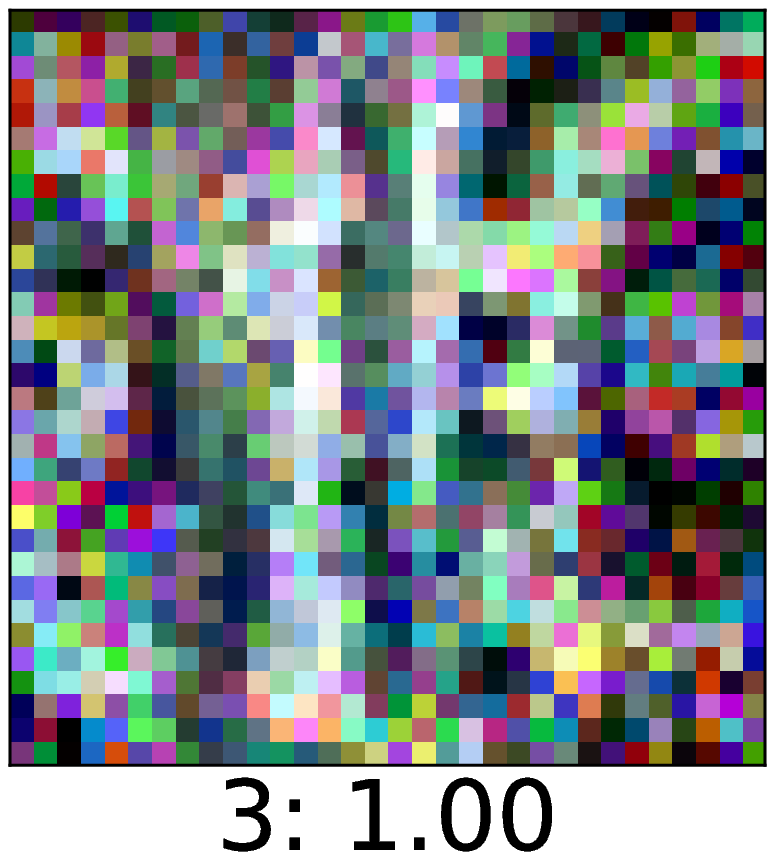}
& \includegraphics[width=\plotwidth,valign=c]{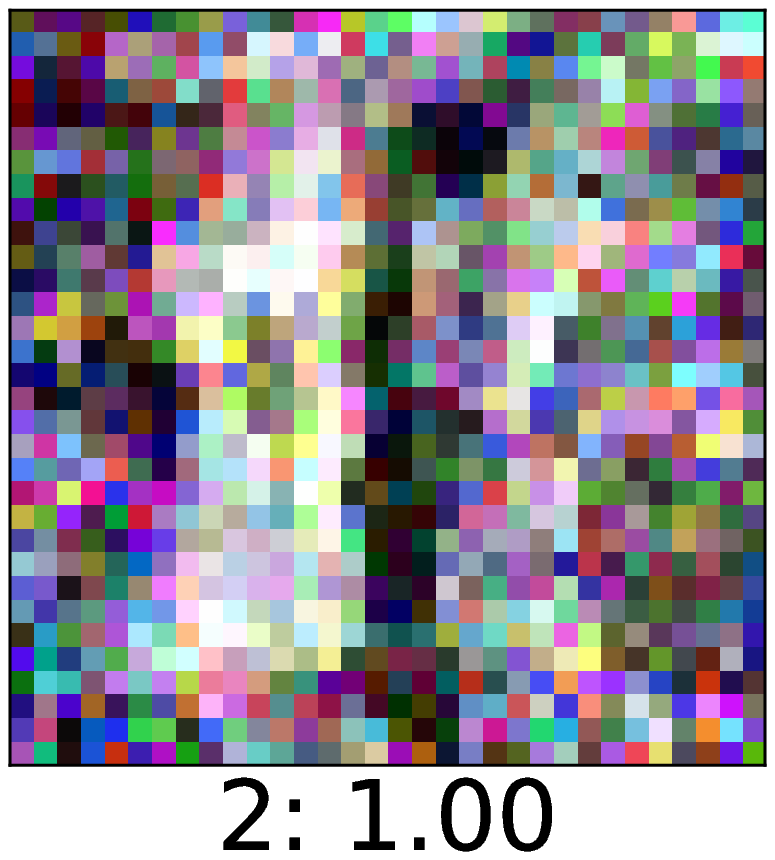}  & \includegraphics[width=\plotwidth,valign=c]{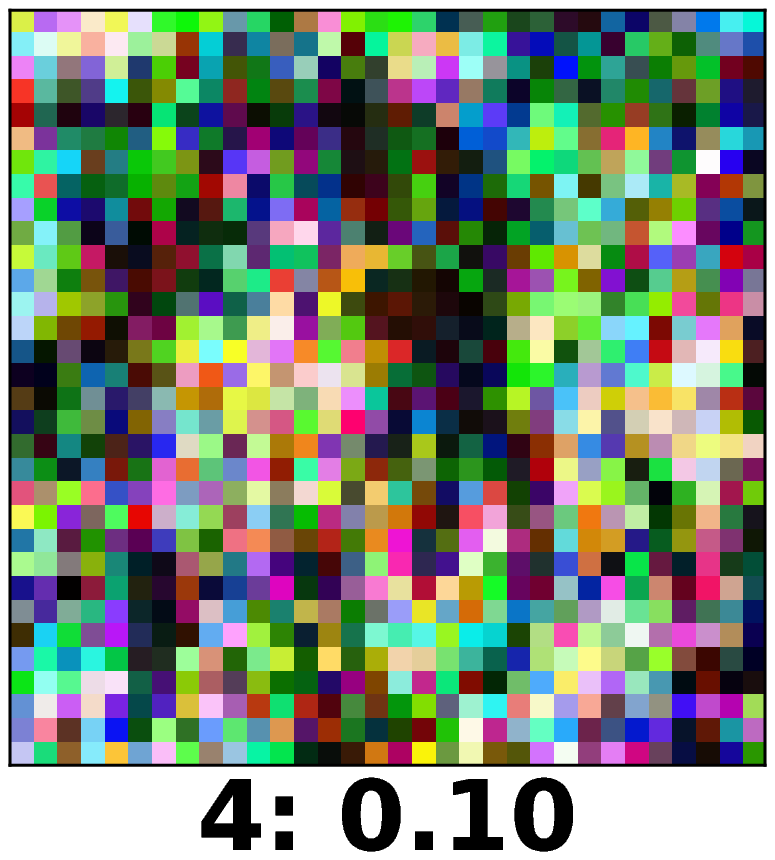}
& \includegraphics[width=\plotwidth,valign=c]{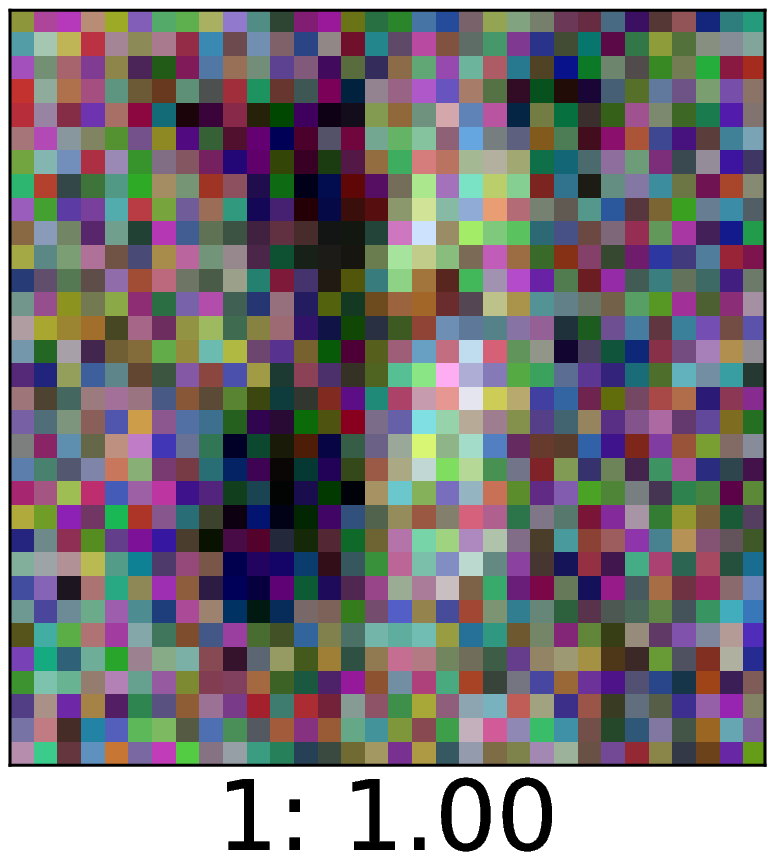} & \includegraphics[width=\plotwidth,valign=c]{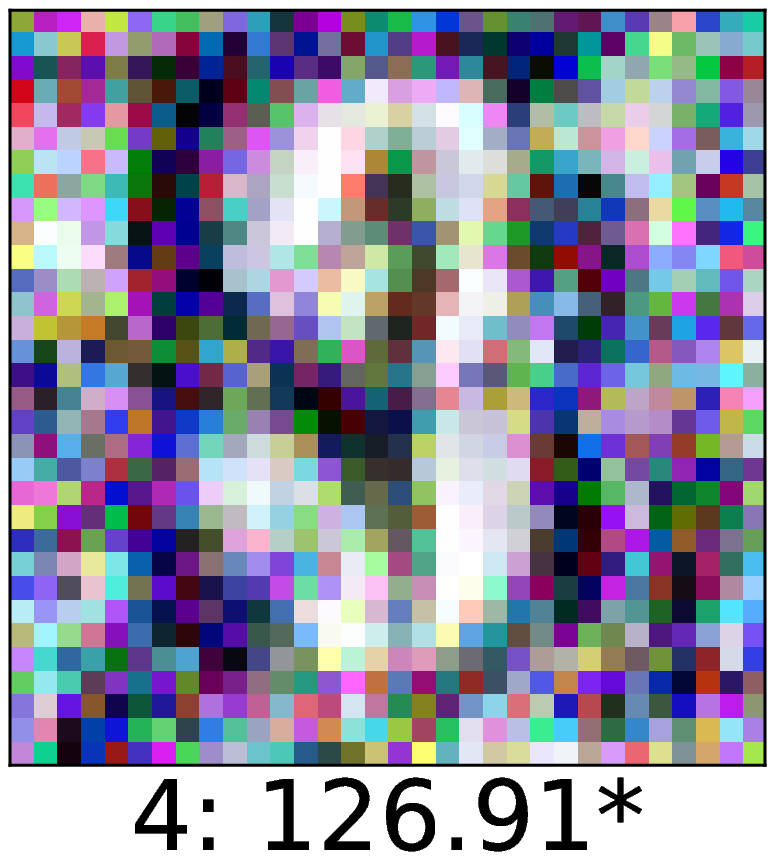}
& \includegraphics[width=\plotwidth,valign=c]{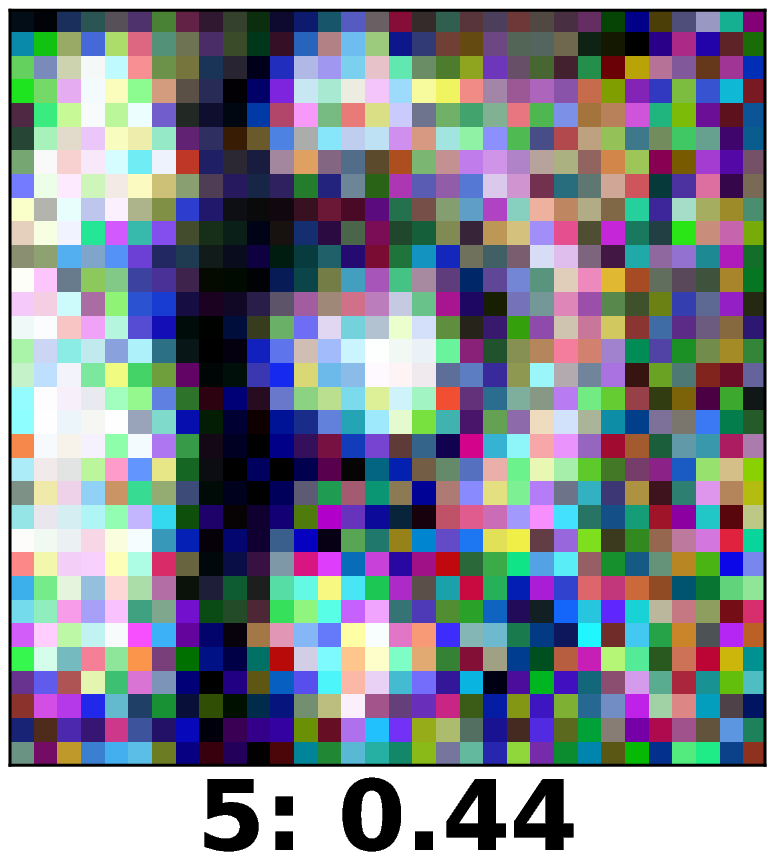} & \includegraphics[width=\plotwidth,valign=c]{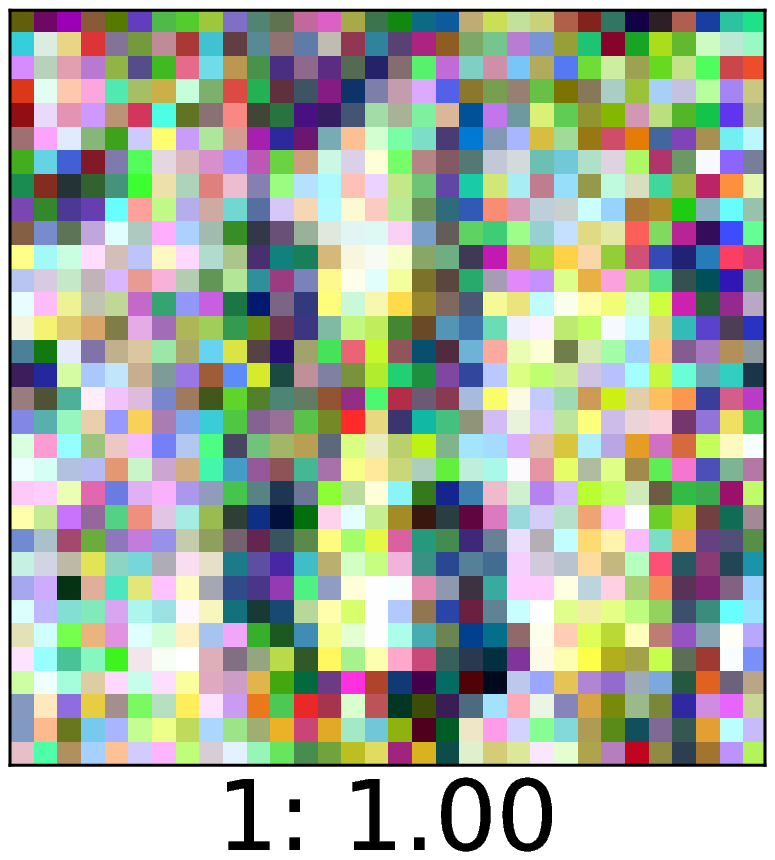}
& \includegraphics[width=\plotwidth,valign=c]{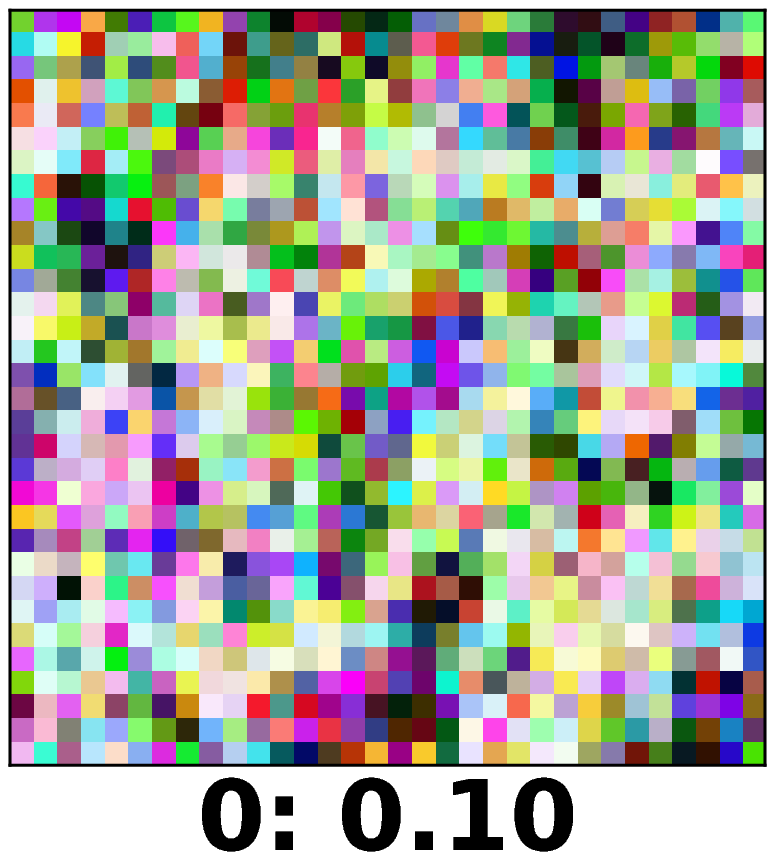} \\

\begin{turn}{90} \tiny \hspace{-.31cm} CIFAR10 \end{turn} & \includegraphics[width=\plotwidth,valign=c]{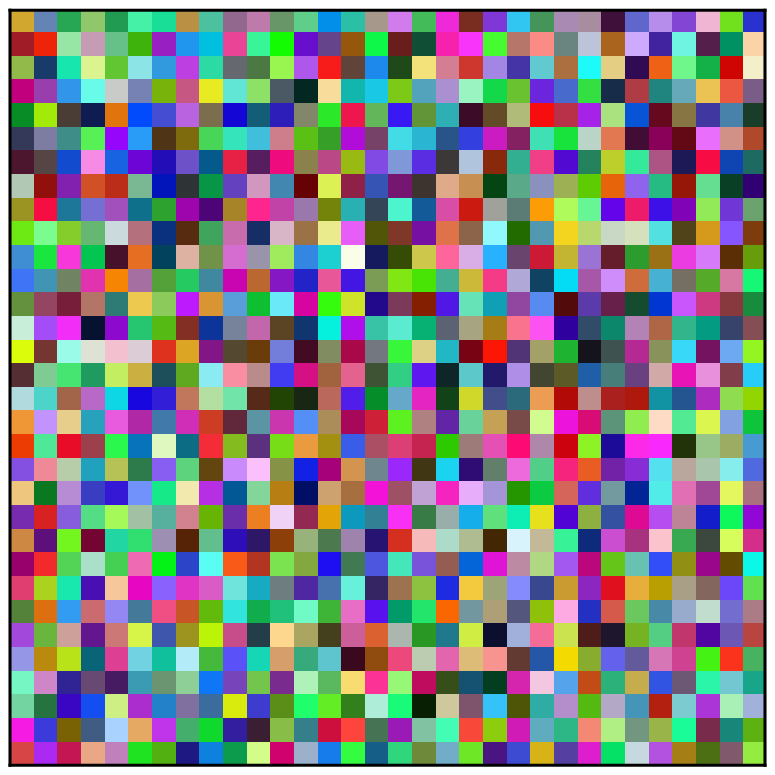} & \includegraphics[width=\plotwidth,valign=c]{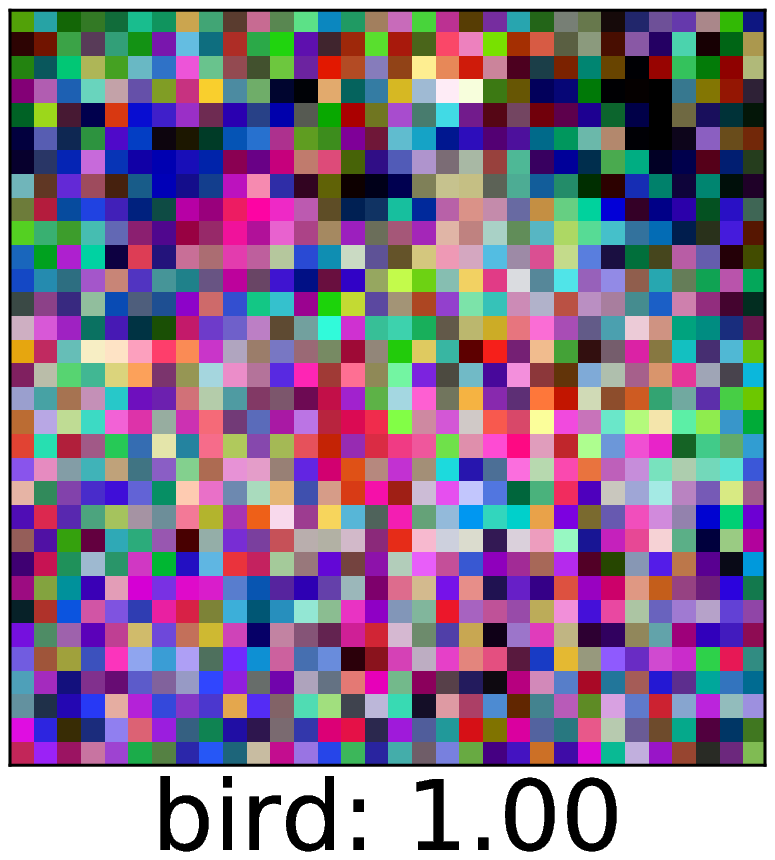}
& \includegraphics[width=\plotwidth,valign=c]{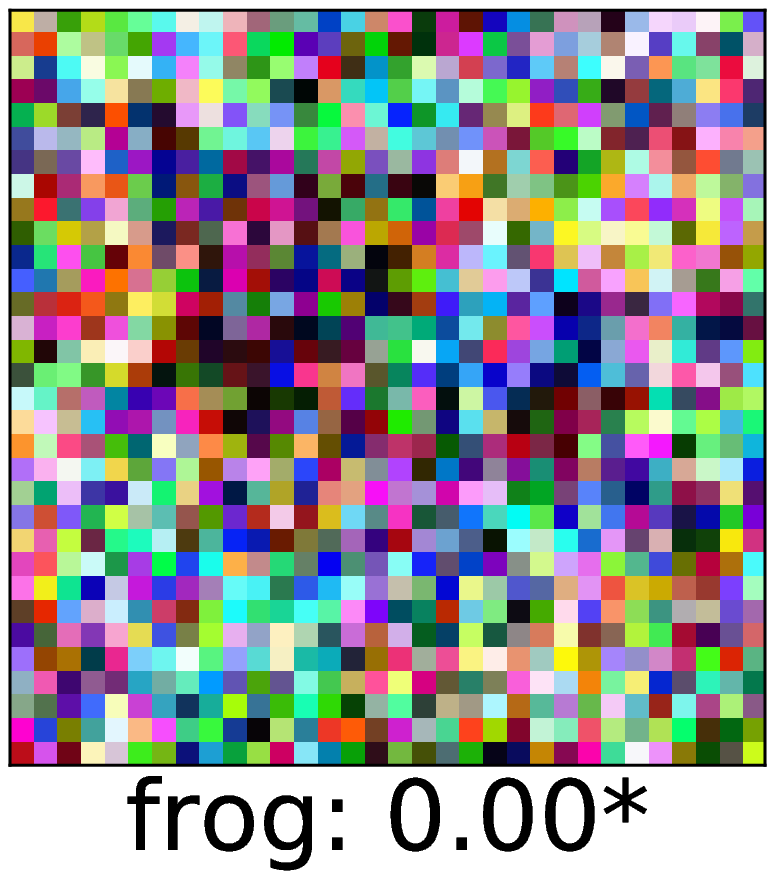} & \includegraphics[width=\plotwidth,valign=c]{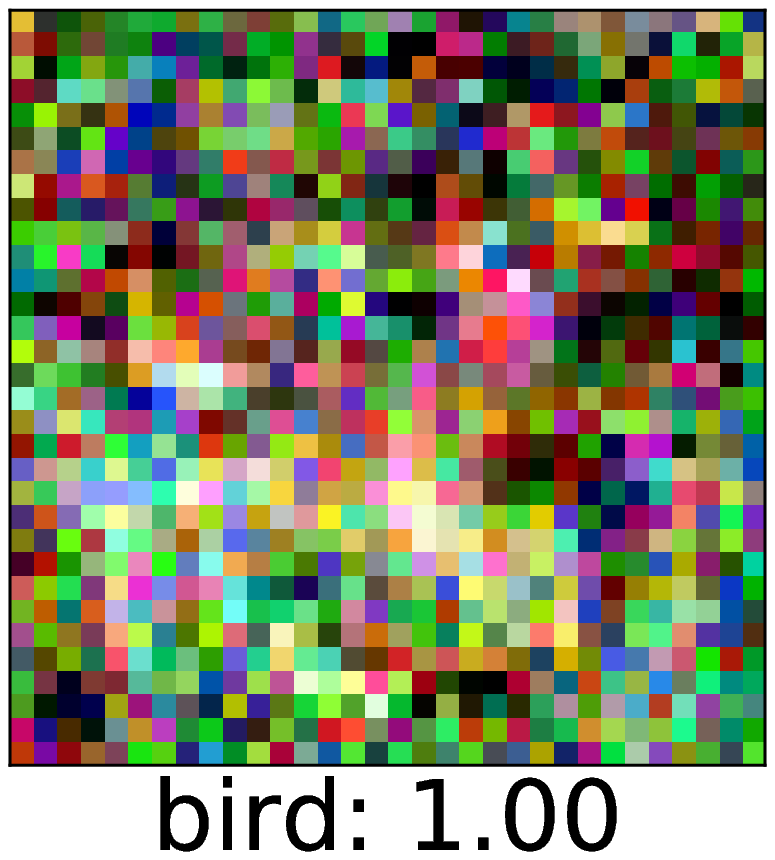}
& \includegraphics[width=\plotwidth,valign=c]{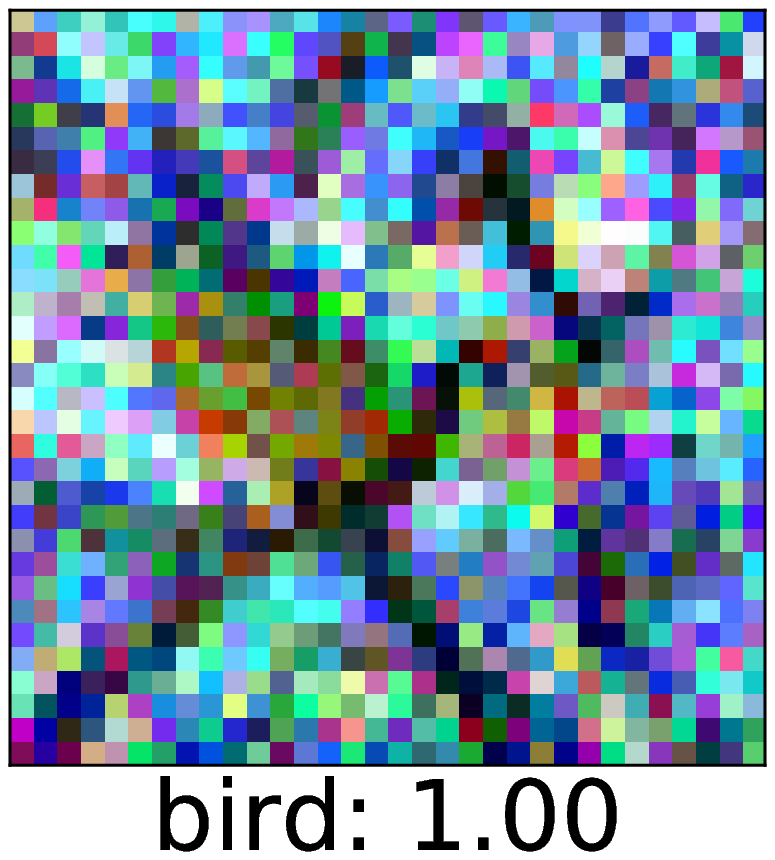}   & \includegraphics[width=\plotwidth,valign=c]{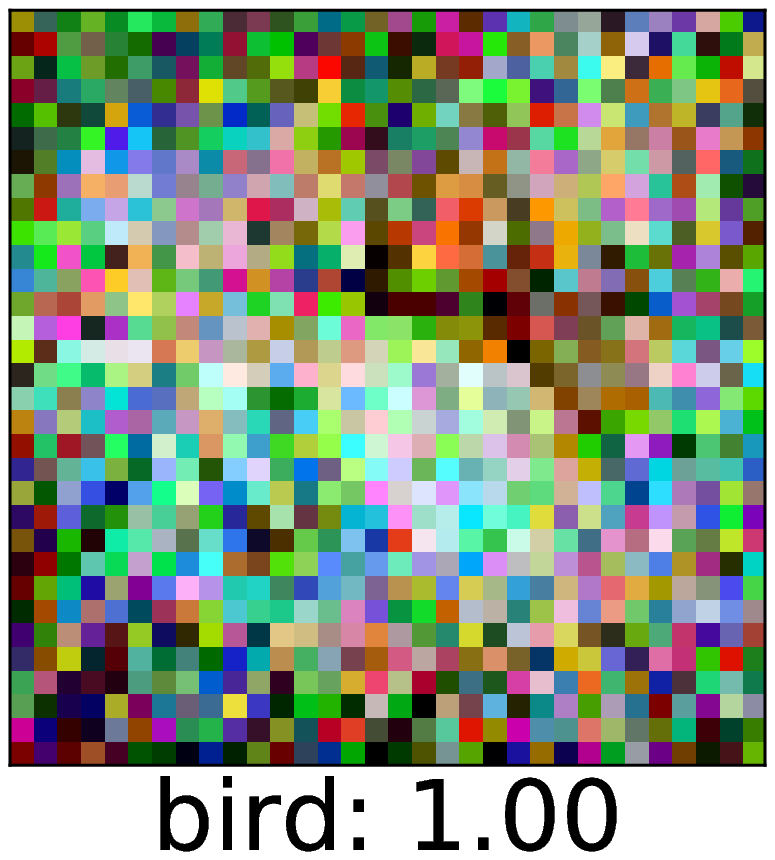}
& \includegraphics[width=\plotwidth,valign=c]{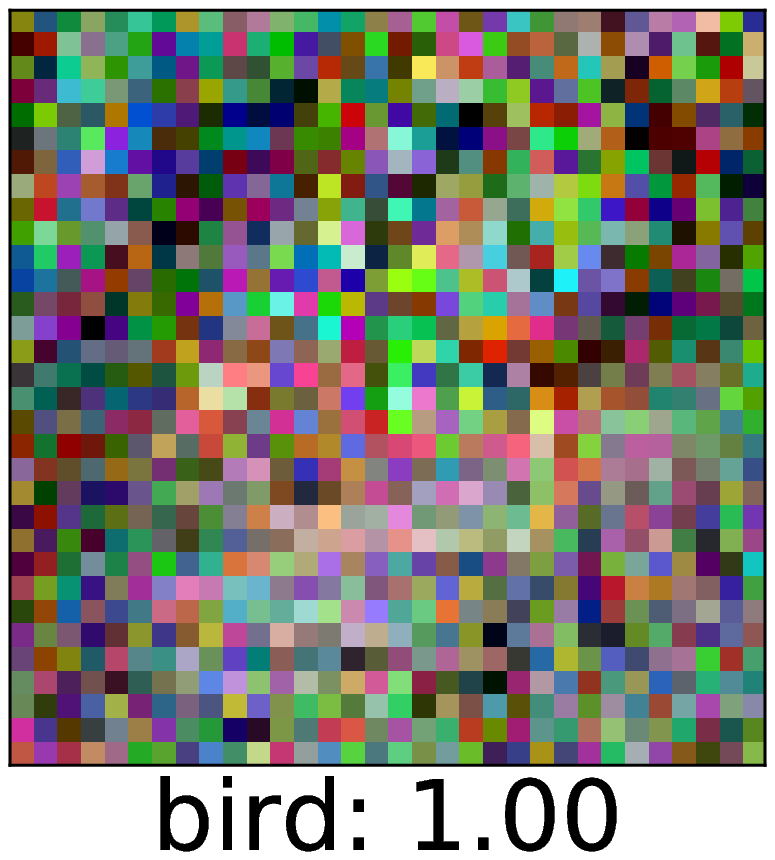} & \includegraphics[width=\plotwidth,valign=c]{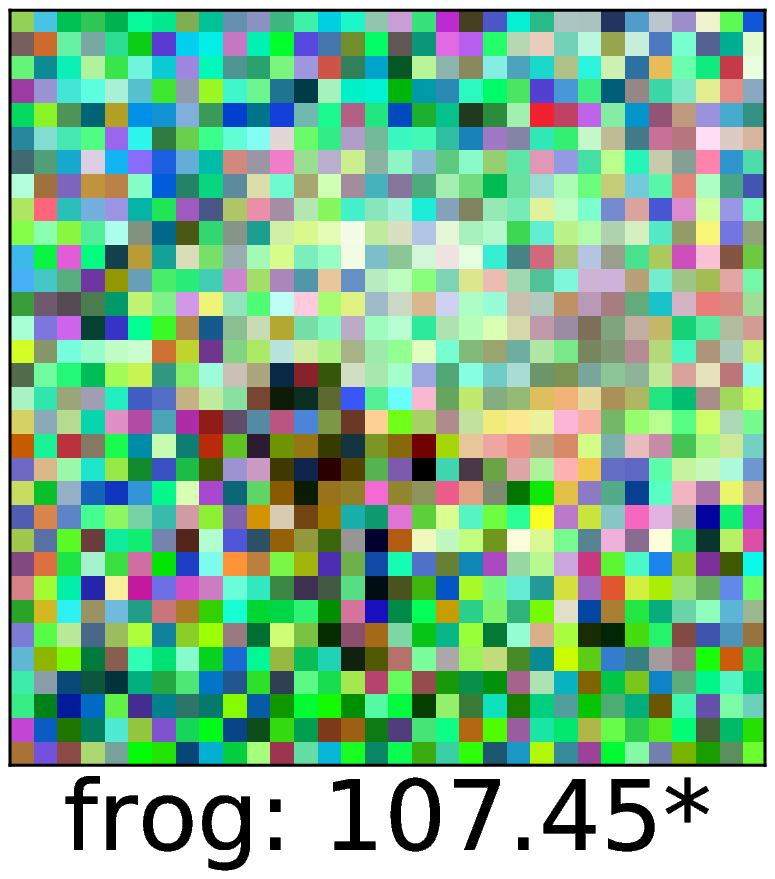}
& \includegraphics[width=\plotwidth,valign=c]{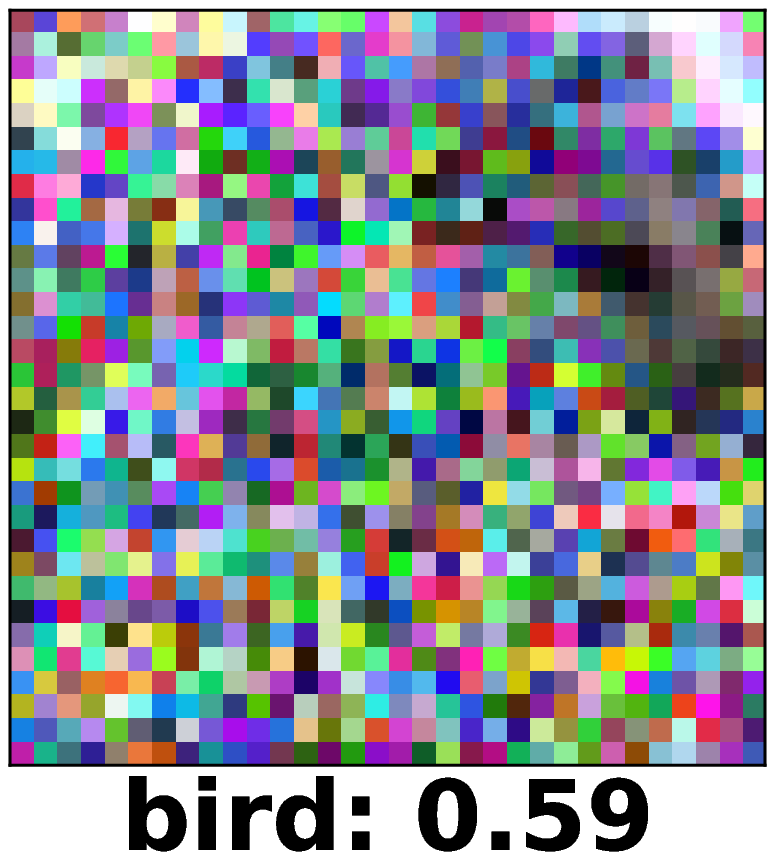} & \includegraphics[width=\plotwidth,valign=c]{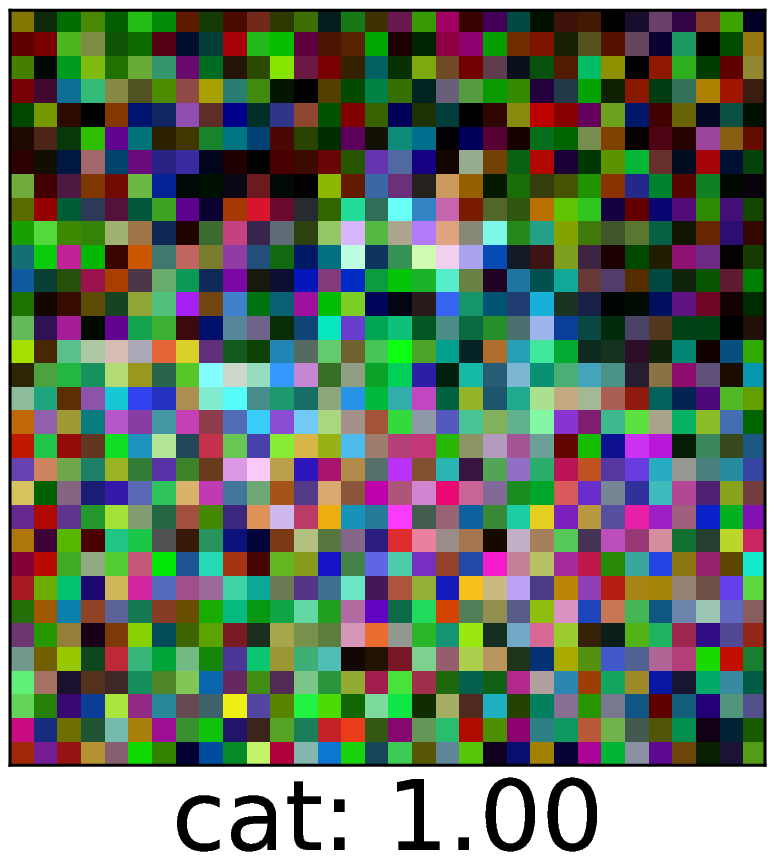}
& \includegraphics[width=\plotwidth,valign=c]{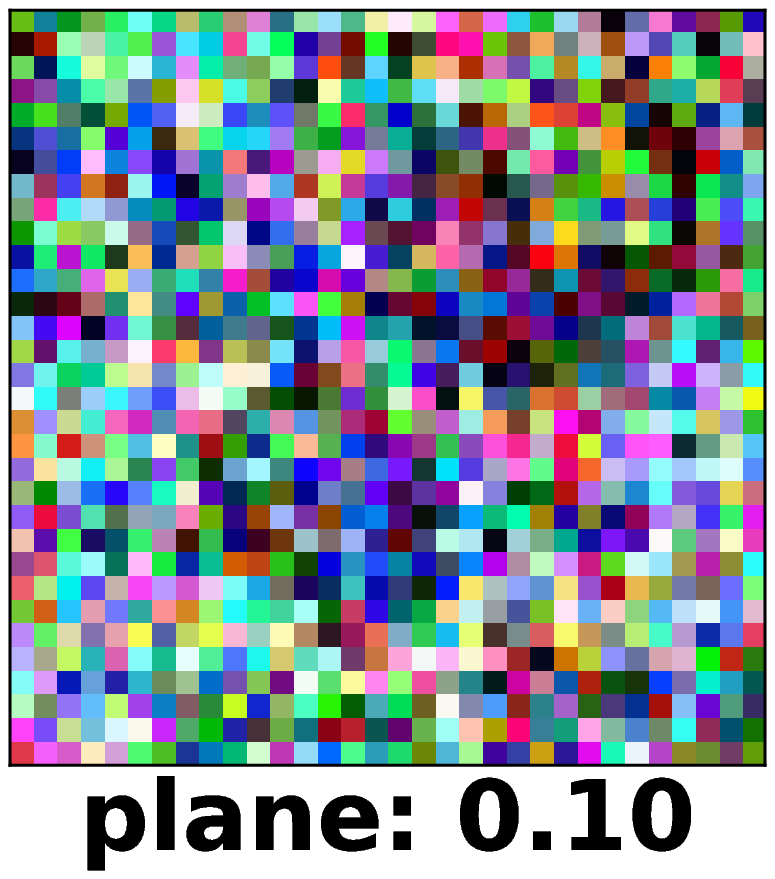} \\

\begin{turn}{90} \tiny \hspace{-.33cm} CIFAR100 \end{turn} & \includegraphics[width=\plotwidth,valign=c]{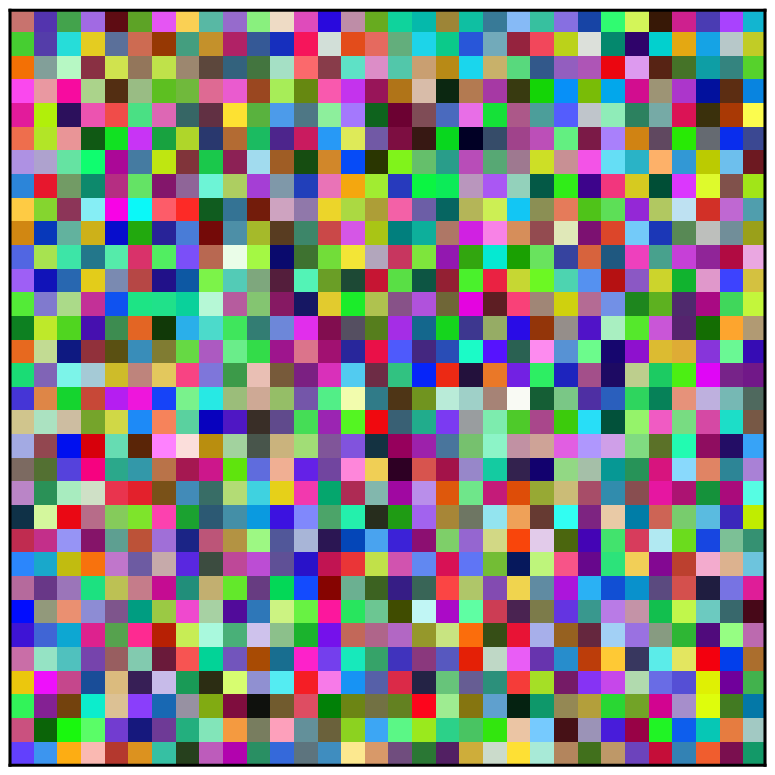} & \includegraphics[width=\plotwidth,valign=c]{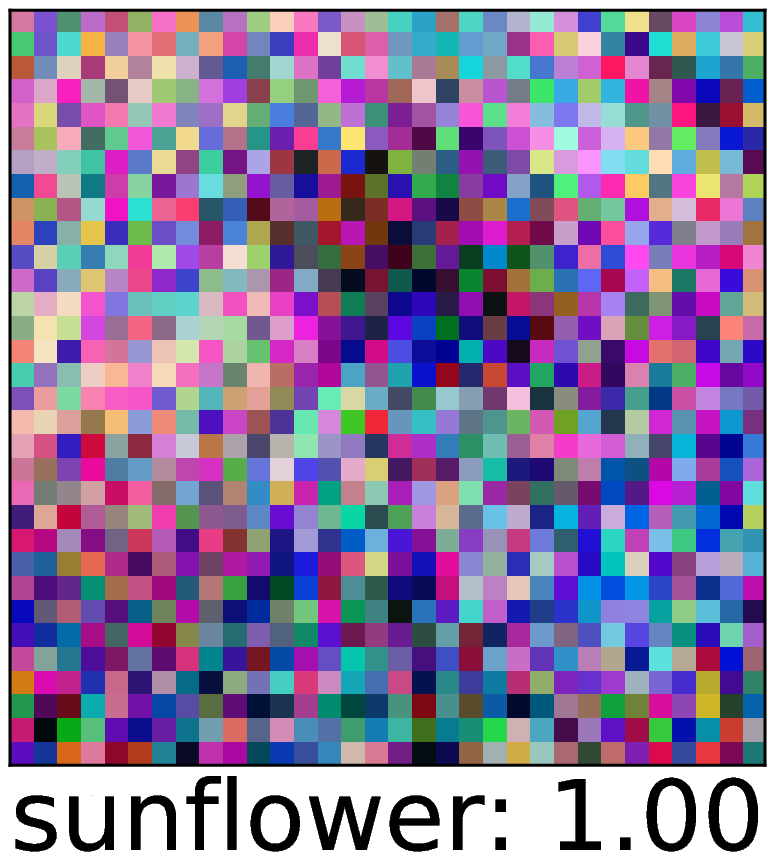}
& \includegraphics[width=\plotwidth,valign=c]{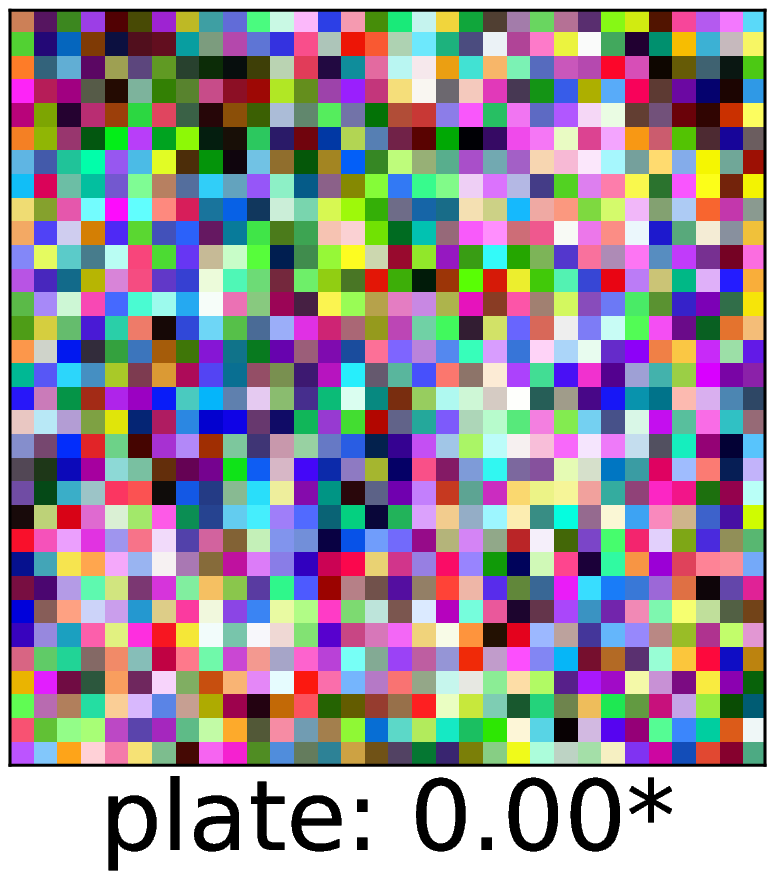} & \includegraphics[width=\plotwidth,valign=c]{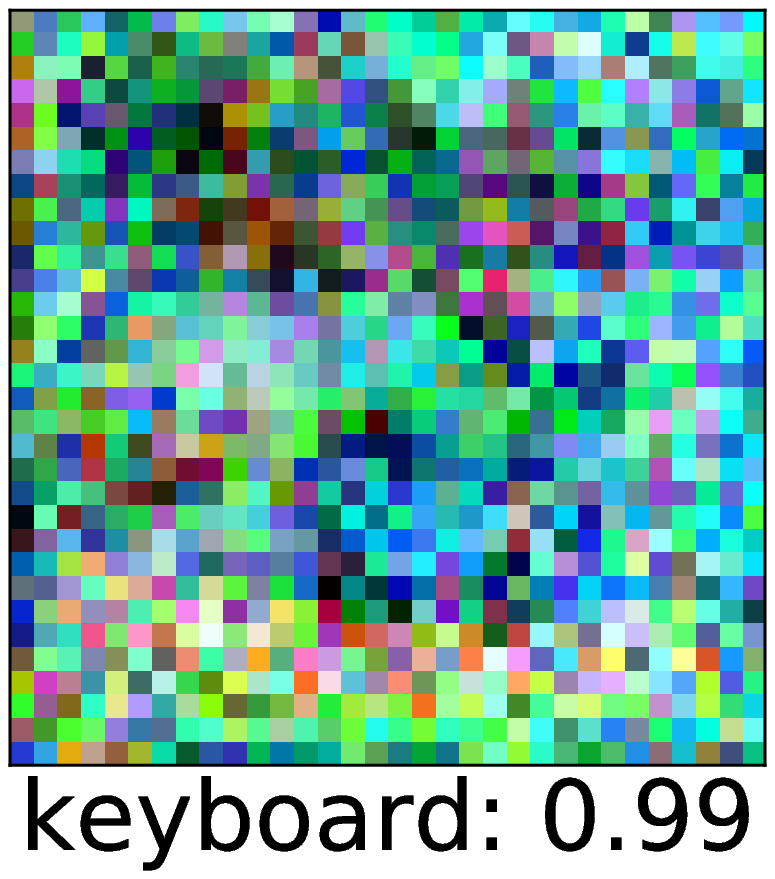}
& \includegraphics[width=\plotwidth,valign=c]{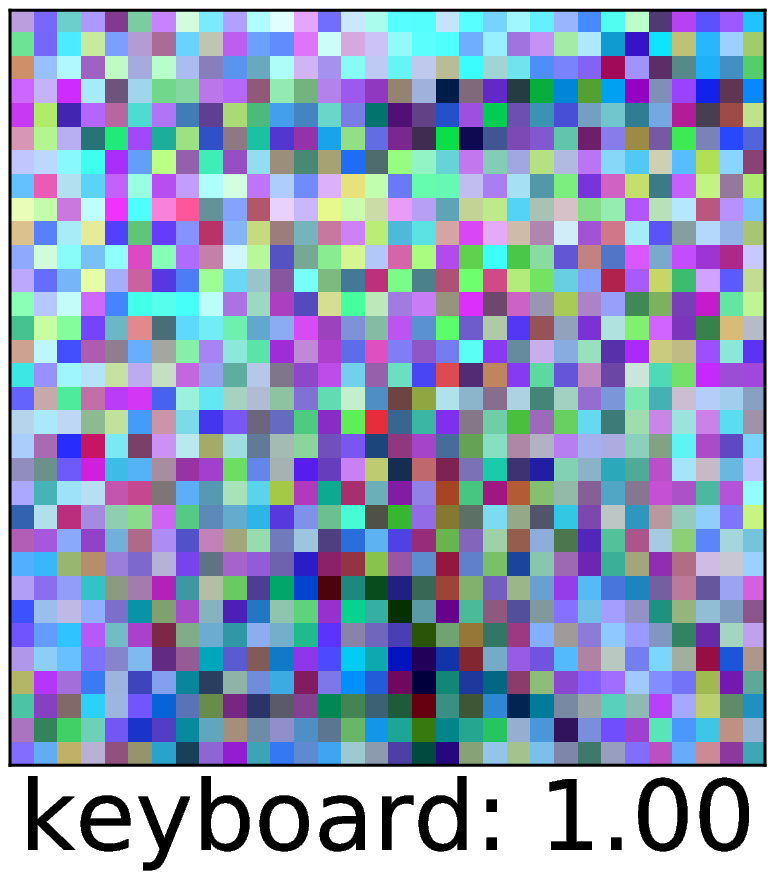}   & \includegraphics[width=\plotwidth,valign=c]{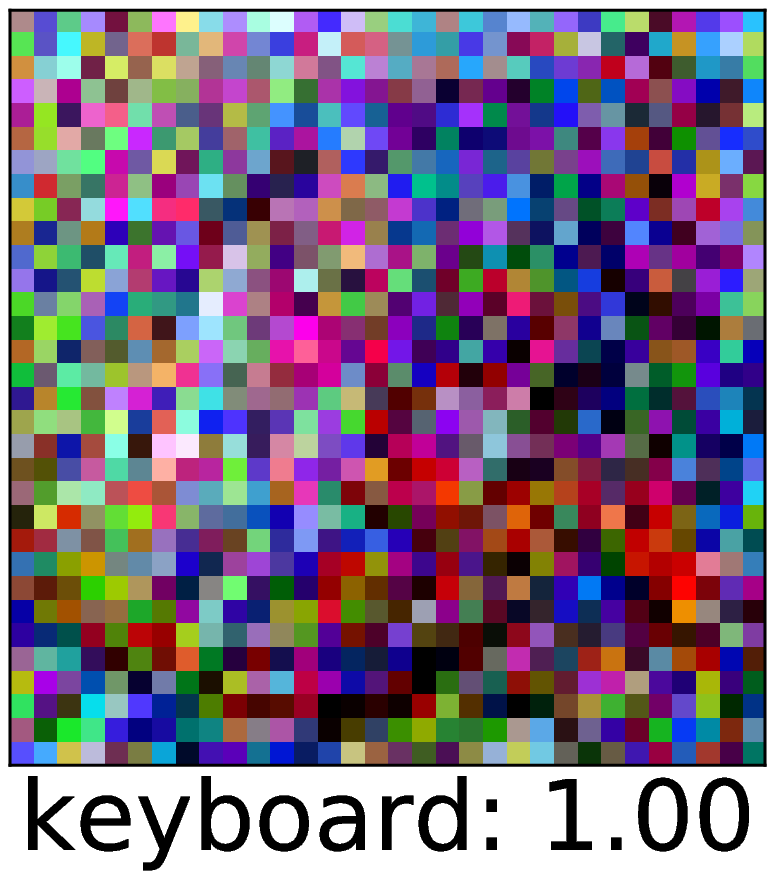}
& \includegraphics[width=\plotwidth,valign=c]{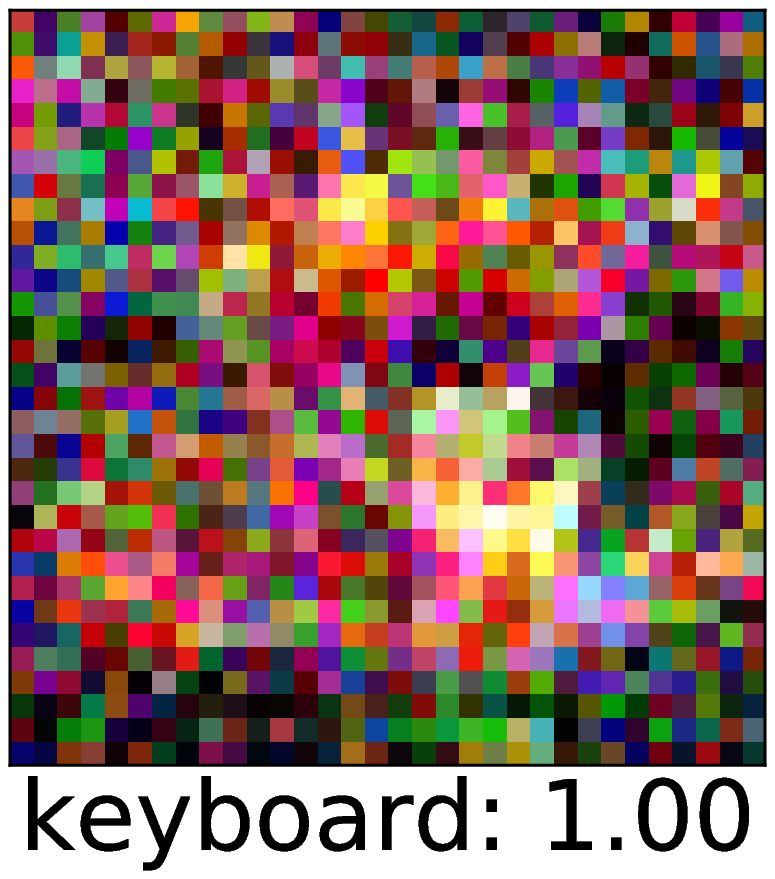} & \includegraphics[width=\plotwidth,valign=c]{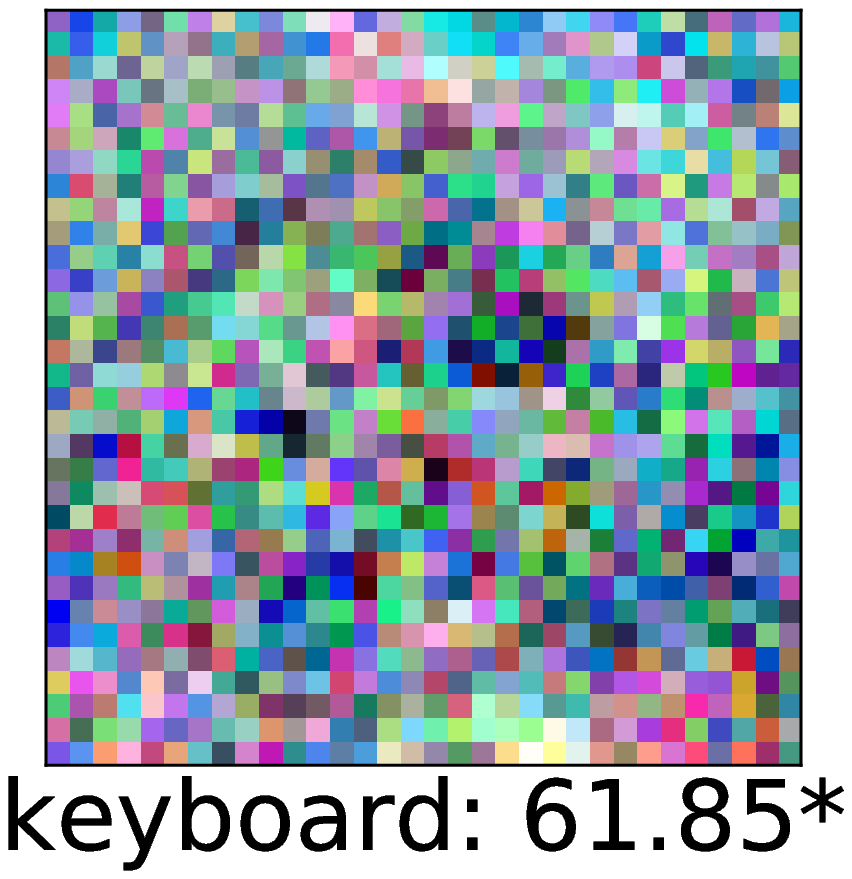}
& \includegraphics[width=\plotwidth,valign=c]{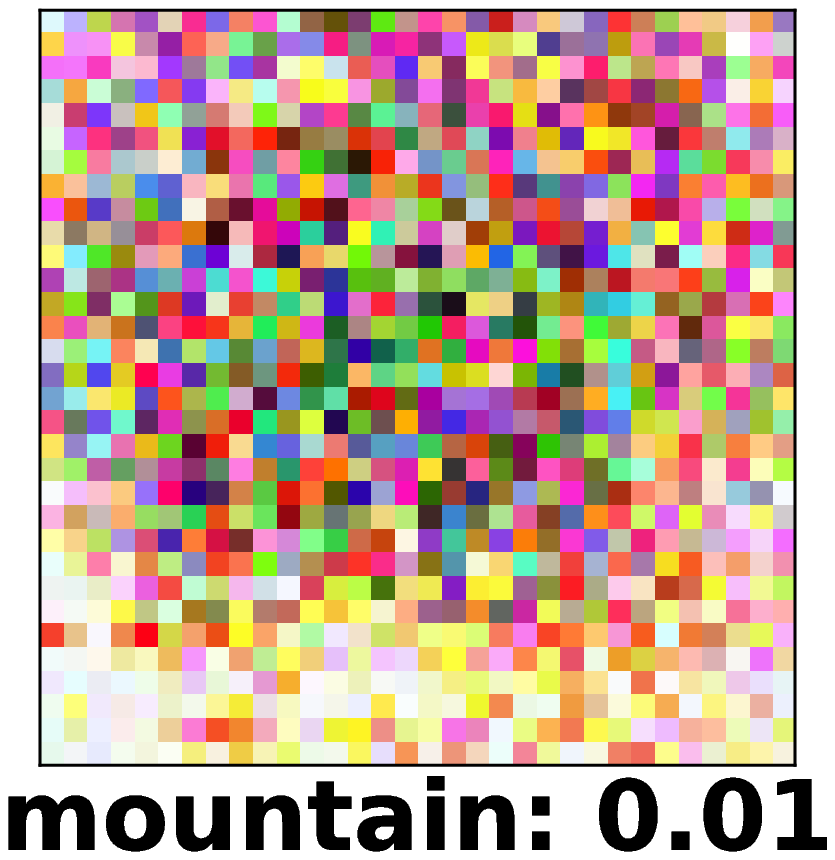} & \includegraphics[width=\plotwidth,valign=c]{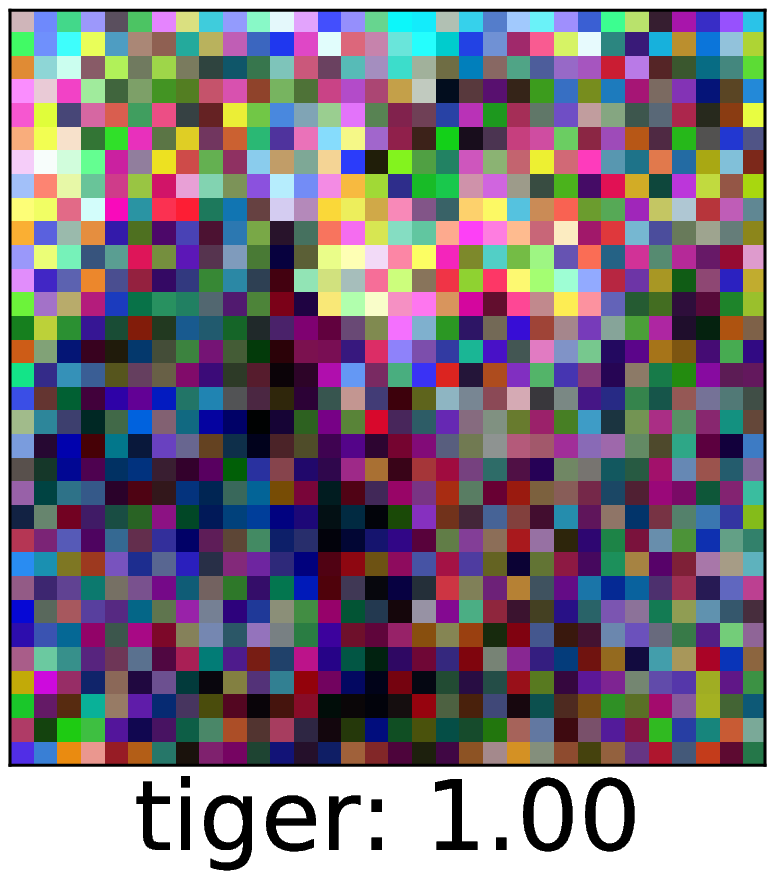}
& \includegraphics[width=\plotwidth,valign=c]{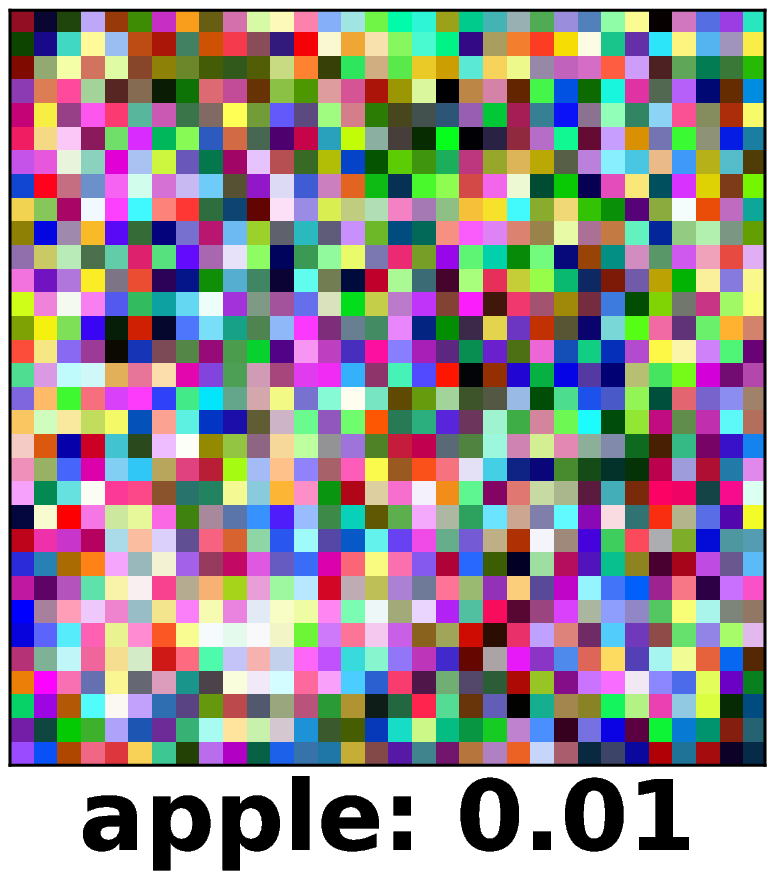} 
\end{tabular}}

\caption{\label{Fig:Samples} \textbf{Adversarial Noise:} We maximize the confidence of the OOD methods using PGD in the ball around a uniform noise sample (seed images, left) on which CCU is guaranteed by Corollary~\ref{Cor:Cor1} to yield less than $1.1 \frac{1}{M}$ maximal confidence. For each OOD method
we report the image with the highest confidence.  Maha and MCD use scores where lower is more confident (indicated by $*$). If we do \emph{not} find a sample that has higher confidence/lower score than the median of the in-distribution, we highlight this in boldface. All other OOD methods fail on some dataset, see Table \ref{Table:AttackStats} for a quantitative version. ODIN at high temperatures always returns low confidence, so a value of 0.1 is not informative.}
\end{figure}

\paragraph{CCU:} As the Euclidean metric is known to be a relatively bad distance between two images we instead use the distance 
$d(x,y)=\norm{C^{-\frac{1}{2}}(x-y)}$, where $C$ is generated as follows. We calculate the covariance matrix $C'$ on augmented in-distribution samples (see \ref{app:aug}). Let $(\lambda_i,u_i)_{i=1}^d$ be the eigenvalues/eigenvectors of $C'$. Then we set
\begin{equation}\label{Eq:Cov}
{ C =  \sum_{i=1}^d \max\{\lambda_i,10^{-6}\max_j \lambda_j\} u_i u_i^T,}
\end{equation}
that is we fix a lower bound on the smallest eigenvalue so that $C$ has full rank. In \citet{HendrycksG17a} a similar metric has been used for detection of adversarial images.
We choose $K_i=K_o=100$ as the number of centroids for the GMMs. We initialize the in-GMM on augmented in-data using the EM algorithm with spherical covariance matrices in the transformed space, as in~\eqref{eq:model_GMM}. For the out-distribution we use a subset of 20000 points for the initialization. While, initially it holds that $\forall k,l: \sigma_k < \theta_l$, as required in Theorem \ref{Thr:thr1}, this is not guaranteed during the optimization of~\eqref{eq:loss1}. Thus, we enforce the constraint during training by setting: ${\theta_l \mapsto \max\{\theta_l, 2 \max_k \sigma_k\} }$ at every gradient step. 
Since the ``classifier'' and ``density'' terms in \eqref{eq:loss1} have very different magnitudes we choose a small learning rate of $1e-5$ for the parameters in the GMMs. It is also crucial to not apply weight decay to these parameters. The other hyperparameters are chosen as in the base model below.

\begin{table}
\centering
\setlength{\tabcolsep}{5.8pt}
\renewcommand{\arraystretch}{0.85}
\begin{tabular}{ll|cccccccccc}
\toprule
      &     &   Base &    MCD &    EDL &     DE &    GAN &   ODIN &   Maha &  ACET &    OE &    CCU  \\
\midrule
\begin{turn}{90} \small \hspace{-.8cm} MNIST \end{turn} & TE &    0.5 &    0.4 &    0.4 &    0.4 &   0.8 &    0.5 &    0.9 &    0.6 &     0.7 &    0.6 \\
      & SR &  100.0 &   99.0 &  100.0 &  100.0 &  43.5 &  100.0 &  100.0 &  \textbf{0.0} &    100.0 &    \textbf{0.0} \\
      & AUC &    1.4 &    8.6 &    0.0 &    7.3 &  54.4 &    0.0 &   11.7 &    \textbf{100.0} &      35.2 &  \textbf{100.0} \\
\midrule
       \begin{turn}{90} \small \hspace{-.8cm} FMNIST \end{turn} & TE &    4.8 &    5.8 &    5.2 &    4.9 &    5.7 &    4.8 &    4.8 &    4.8 &      5.7 &    4.9 \\
       & SR &  100.0 &   72.5 &  100.0 &  100.0 &   99.0 &  100.0 &  100.0 &    \textbf{0.0} &    100.0 &    \textbf{0.0} \\
       & AUC &    0.0 &   47.1 &    0.0 &    0.4 &   39.5 &    0.0 &   18.8 &  \textbf{100.0} &    35.7 &  \textbf{100.0} \\
\midrule
     \begin{turn}{90}\small \hspace{-.7cm} SVHN \end{turn} & TE &    2.9 &    3.9 &    3.1 &    2.4 &    4.2 &    2.9 &    2.9 &    3.2 &      4.1 &    3.0 \\
     & SR &  100.0 &   73.5 &  100.0 &  100.0 &    \textbf{0.0} &  100.0 &  100.0 &    3.0 &   100.0 &    \textbf{0.0} \\
     & AUC &    0.0 &   34.1 &    0.0 &    0.0 &  \textbf{100.0} &    0.0 &    0.0 &  96.5 &      0.0 &  \textbf{100.0} \\
\midrule
        \begin{turn}{90}\small \hspace{-.9cm} CIFAR10 \end{turn} & TE &    5.6 &   11.7 &    7.0 &    6.7 &   11.7 &    5.6 &    5.6 &   6.1 &     4.7 &    5.8 \\
        & SR &  100.0 &   90.5 &  100.0 &  100.0 &  100.0 &  100.0 &  100.0 &   0.0 &    100.0 &   \textbf{0.0}  \\
        & AUC &    0.0 &   23.9 &    0.0 &    0.0 &   25.3 &    0.0 &    0.0 &  99.9 &     0.0 &  \textbf{100.0} \\
\midrule
\begin{turn}{90}\small \hspace{-.9cm} CIFAR100 \end{turn} & TE &   23.3 &   45.3 &   31.1 &   27.5 &  43.8 &   23.3 &   23.2 &  25.2 &      24.7 &   25.9 \\
         & SR &  100.0 &  100.0 &  100.0 &  100.0 &  89.5 &  100.0 &  100.0 &  3.5 &    100.0 &   \textbf{0.0} \\
         & AUC &    0.1 &   17.3 &    0.0 &    0.2 &  15.3 &    0.0 &    0.0 &  95.8 &       2.5 &  \textbf{100.0} \\
\bottomrule
\end{tabular}
\caption{\label{Table:AttackStats} Worst-case performance of different OOD methods in neighborhoods around uniform noise points certified by CCU. We report the clean test error (TE) on the in-distribution (GAN and MCD use VGG). The success rate (SR) is the fraction of adversarial noise points for which the confidence/score inside the ball is higher than the median of the in-distribution's confidence/score. The AUC quantifies detection of adversarial noise versus in-distribution. All values in \%. }

\end{table}

\paragraph{Benchmarks:}
For all OOD methods we use LeNet on MNIST and a Resnet18 (for GAN and MCD we use VGG) otherwise. The hyperparameters used during training can be found in Appendix \ref{App:Exp}. The AUC (area under ROC) is computed by treating in-distribution versus out-distribution as a two-class problem using the confidence/score of the method as criterion. Alternatively one could report the AUPR (area under precision-recall curve) which we do in Appendix \ref{App:AUPR}.
{\bf \textit{MCD:}} Monte-Carlo Dropout~\citep{GalGha2016} uses dropout at train and at test time. 
Since it is not clear where to put the dropout layers in a ResNet, we use VGG instead. We take the softmax from 7 forward passes \citep{shafaei2018does} and use the mean of the output for prediction and the variance as score. 
{\bf \textit{EDL:}} Evidential deep learning~\citep{sensoy2018evidential} replaces the softmax layer of a neural network and introduces a different loss function that encourages better uncertainty estimates.
{\bf \textit{DE:}}~Deep~ensembles~\citep{lakshminarayanan2017simple} average the softmax outputs of five models that were adversarially trained via FGSM~\citep{GooShlSze2015} with step size $\epsilon=0.01$.
{\bf\textit{GAN:}} The framework of confidence-calibrated classifiers \citep{lee2017training} relies on training a GAN alongside a classifier such that the GAN's generator is encouraged to generate points close to but not on the in-distribution. On these points one then enforces uniform confidence. We used their provided code to train a VGG this way, as we were unable to adapt the method to a ResNet with an acceptable test error (e.g. TE$<30\%$ on SVHN). 
{\bf\textit{ODIN:}} ODIN \citep{liang2017enhancing} consists of two parts: a temperature $T$ by which one rescales the logits before the softmax layer $ \frac{e^{f_n/T}}{\sum_k e^{f_k/T}} $ and a preprocessing step that applies a single FGSM-step \citep{GooShlSze2015} of length $\epsilon$ before evaluating the input. The two parameters are calibrated on the out-distribution.
 {\bf\textit{Maha:}} The approach in \citet{lee2018simple} is based on computing a class-conditional Mahalanobis distance in feature space and applying an ODIN-like preprocessing step for each layer. Following \citet{ren2019likelihood} we use a single-layer version of this on our networks' penultimate layers because the multi-layer version in the original code does not support gradient-based attacks.
{\bf\textit{OE:}} Outlier exposure~\citep{HenMazDie2019} enforces uniform confidence on a large out-distribution. We use their provided code to train a model with our chosen architecture.
{\bf\textit{ACET:}} Adversarial confidence enhanced training (ACET) \citep{HeiAndBit2019} enforces low confidence on a ball around points from an out-distribution by running adversarial attacks during training. In order to make the comparison with OE more meaningful we use 80M tiny images to draw the seeds rather than smoothed uniform noise as in \citet{HeiAndBit2019}. We refer to Appendix~\ref{App:ACET} for a discussion of the influence of this choice on the results.

Some of the above OOD papers optimize their hyperparameters on a validation set for each out-distribution they test on. However, this leads to different classifiers for each out-distribution dataset which seems unrealistic as we want to have good generic OOD performance and not for a particular dataset. Thus we keep the comparison realistic and fair by calibrating the hyperparameters of all methods on a subset of 80M tiny images and then evaluating on the other unseen distributions.


\paragraph{Certified robustness against adversarial noise:} We sample uniform noise images as they are obviously out-distribution for all tasks and certify using
Corollary \ref{Cor:Cor1} the largest ball around the uniform noise sample on which CCU attains at most $1.1 \cdot$ uniform confidence, that is $1.1\%$ on CIFAR100 and $11\%$ on all other datasets.
We describe how to compute the radius of this ball in Appendix~\ref{App:Radius}. In principle it could be possible that the certified balls contain
training or test images. In Appendix~\ref{App:RadiusDiscussion} we show that this is not the case.
We construct adversarial noise samples for all OOD methods by maximizing the confidence/minimizing the score via a PGD attack with 500 steps and 50 random restarts on this ball. Further details of the attack can be found in Appendix \ref{App:attack}.
In Table~\ref{Table:AttackStats} we show the results of running this attack on the different models. We used 200 noise images and we report clean test error on the in-distribution, the success rate (SR) (fraction of adversarial noise points for which the confidence resp. score inside the ball is higher resp. lower than the median of the in-distribution's confidence/score)  and the AUC for the separation of the generated adversarial noise images and the in-distribution based on confidence/score. By construction, see Corollary~\ref{Cor:Cor1}, our method provably makes no overconfident predictions but we nevertheless run the attack on CCU as well. We note that only CCU performs perfectly on this task for all datasets - all other OOD methods fail at least on one datasets, most of them on all. We also see that ACET achieves very robust performance which may be expected as it does some kind of
adversarial training for OOD detection. Nevertheless high-confidence adversarial noise images for ACET can be found on SVHN, CIFAR10 and CIFAR100 and ACET has no guarantees. We illustrate the generated adversarial noise images for all methods in Figure~\ref{Fig:Samples}. 

\paragraph{OOD performance:}\label{Sec:OOD}
\begin{table}[h]
\centering
\setlength{\tabcolsep}{4.9pt}
\renewcommand{\arraystretch}{0.88}
\begin{tabular}{ll|rrrrrrrrrr}
\toprule
		& 			   &  Base &   MCD &   EDL &    DE &    GAN &  ODIN &  Maha &   ACET &     OE &    CCU  \\
\midrule
\begin{turn}{90} \hspace{-1.15cm} MNIST \end{turn} 	& FMNIST       &   97.4 &  93.1 &  99.3 &   99.2 &  99.4 &   98.7 &   96.8 &  \textbf{100.0} &    99.9 &   99.9  \\
& EMNIST       &   89.2 &  82.0 &  89.0 &   92.1 &  92.8 &   88.9 &   91.6 &   95.0 &    \textbf{95.8} &   92.0  \\
& GrCIFAR10    &   99.7 &  94.7 &  99.7 &  \textbf{100.0} &  99.1 &   99.9 &   98.7 &  \textbf{100.0} &    \textbf{100.0} &  \textbf{100.0}  \\
& Noise        &  \textbf{100.0} &  95.2 &  99.9 &  \textbf{100.0} &  99.3 &  \textbf{100.0} &   97.2 &  \textbf{100.0} &   \textbf{100.0} &  \textbf{100.0}  \\
& Uniform      &   95.2 &  87.9 &  99.9 &   97.9 &  99.9 &   98.2 &  \textbf{100.0} &  \textbf{100.0} &    \textbf{100.0} &  \textbf{100.0}  \\
\midrule
\begin{turn}{90} \hspace{-1.25cm} FMNIST \end{turn} 	& MNIST        &  96.7 &  82.7 &  94.5 &  96.7 &  \textbf{99.9} &  99.0 &  96.7 &   96.4 &   96.3 &   97.8  \\
& EMNIST       &  97.5 &  87.3 &  95.6 &  97.1 &  \textbf{99.9} &  99.3 &  97.5 &   97.6 &      99.3 &   99.5  \\
& GrCIFAR10    &  91.0 &  92.3 &  84.0 &  86.1 &  85.3 &  93.0 &  98.2 &  96.2 &     \textbf{100.0} &  \textbf{100.0}  \\
& Noise        &  97.3 &  94.0 &  95.6 &  97.4 &  98.9 &  98.9 &  98.9 &   97.8 &     \textbf{100.0} &  \textbf{100.0}  \\
& Uniform      &  96.9 &  93.3 &  95.6 &  98.3 &  93.2 &  98.8 &  99.1 &  \textbf{100.0} &     97.6 &  \textbf{100.0}  \\
\midrule
\begin{turn}{90} \hspace{-1.25cm} SVHN \end{turn} 	& CIFAR10      &  95.4 &  91.9 &  95.9 &  97.9 &   96.8 &  95.9 &  97.1 &   95.2 &     \textbf{100.0} &  \textbf{100.0}  \\
& CIFAR100     &  94.5 &  91.4 &  95.6 &  97.6 &   96.1 &  94.8 &  96.7 &   94.8 &     \textbf{100.0} &  \textbf{100.0}  \\
& LSUN\_CR     &  95.6 &  92.0 &  95.3 &  97.9 &   99.0 &  96.5 &  97.2 &   97.1 &     \textbf{100.0} &  \textbf{100.0}  \\
& Imagenet-    &  94.7 &  91.8 &  95.7 &  97.7 &   97.8 &  95.1 &  96.8 &   97.3 &     \textbf{100.0} &  \textbf{100.0}  \\
& Noise        &  96.4 &  93.1 &  97.1 &  \textbf{98.2} &   96.2 &  82.7 &  98.0 &   95.8 &      97.8 &   97.4  \\
& Uniform      &  96.8 &  93.1 &  96.5 &  95.6 &  \textbf{100.0} &  97.9 &  97.8 &  \textbf{100.0} &    \textbf{100.0} &  \textbf{100.0}  \\
\midrule
\begin{turn}{90} \hspace{-1.45cm} CIFAR10 \end{turn} & SVHN     &  95.8 &  81.9 &  92.3 &  90.3 &  83.9 &  96.7 &   91.5 &   93.7 &     \textbf{98.8} &   98.2  \\
& CIFAR100     &  87.3 &  78.6 &  87.3 &  88.2 &  82.9 &  87.5 &   82.8 &   86.9 &     \textbf{95.3} &   94.2  \\
& LSUN\_CR     &  91.9 &  81.3 &  90.8 &  92.0 &  89.9 &  93.3 &   89.2 &   91.2 &     \textbf{98.6} &   98.2  \\
& Imagenet-    &  87.5 &  78.4 &  88.2 &  87.7 &  84.0 &  88.1 &   84.1 &   86.5 &     \textbf{94.7} &   93.3  \\
& Noise        &  96.5 &  79.9 &  88.9 &  90.3 &  81.8 &  \textbf{97.6} &   94.4 &   94.8 &     97.3 &   97.0  \\
& Uniform      &  96.8 &  81.0 &  89.9 &  96.6 &  73.0 &  98.8 &  \textbf{100.0} &  \textbf{100.0} &     98.8 &  \textbf{100.0}  \\
\midrule
\begin{turn}{90} \hspace{-1.62cm} CIFAR100 \end{turn}  & SVHN           &  78.8 &  59.2 &  80.4 &  83.2 &   75.9 &  81.3 &  77.5 &   73.9 &     93.5 &   \textbf{94.2}  \\
& CIFAR10      &  78.6 &  58.9 &  73.3 &  76.3 &   69.3 &  79.5 &  59.9 &   77.2 &     \textbf{81.6} &   80.2  \\
& LSUN\_CR     &  81.0 &  59.4 &  74.2 &  81.6 &   79.8 &  81.4 &  79.7 &   78.0 &     95.4 &   \textbf{95.9}  \\
& Imagenet-    &  80.8 &  59.2 &  76.0 &  78.2 &   73.9 &  81.3 &  70.8 &   79.5 &     \textbf{83.8} &   81.4  \\
& Noise        &  73.4 &  58.7 &  65.9 &  67.5 &   73.6 &  76.8 &  90.6 &   62.9 &     86.9 &   \textbf{94.6}  \\
& Uniform      &  93.3 &  62.0 &  29.8 &  36.6 &  \textbf{100.0} &  93.5 &  94.3 &  \textbf{100.0} &     99.1 &  \textbf{100.0}  \\
\bottomrule
\end{tabular}
\vspace{-.15cm}
\caption{\label{Table:OOD} AUC (in- versus out-distribution detection based on confidence/score) in percent for different OOD methods and datasets (higher is better). OE and CCU have the best OOD performance.}
\vspace{-.15cm}
\end{table}

For each dataset and method we report the AUC for the binary classification problem of discriminating in- and out-distribution based on confidence resp. score. The results are shown in Table~\ref{Table:OOD}. The list of datasets we use for OOD detection can be seen in Table~\ref{Table:OOD}. LSUN\_{CR} refers to only the classroom class of LSUN and Imagenet- is a subset of 10000 resized Imagenet validation images, that have no overlap with CIFAR10/CIFAR100 classes. The noise dataset was obtained as in \cite{HeiAndBit2019} by first shuffling the pixels of the test images in the in-distribution and then smoothing them by a Gaussian filter of uniformly random width, followed by a rescaling so that the images have full range. GrCIFAR10 refers to the images in CIFAR10 being grayscaled and resized to 28x28 and Uniform describes uniform noise over the $[0,1]^d$ box. We see that OE and CCU have the best OOD performance. MCD is worse than the base model which confirms the results found in \cite{LeiEtAl2017} that MCD is not useful for OOD. DE outperforms EDL but is not much better than the baseline for CIFAR10 and CIFAR100. The performance of Maha is worse than what has been reported in \citet{lee2018simple} which can have two reasons. We just use their version where one uses the scores only from the last layer and we do not calibrate hyperparameters for each test set separately but just once on the Tiny Image dataset. Especially on CIFAR10 we found that the results depend strongly on the step size. The results of ACET, GAN and ODIN are mixed but clearly outperform the baseline. Comparing Table \ref{Table:AttackStats} and Table \ref{Table:OOD} we see that most models perform well when evaluating on uniform noise but fail when finding the worst case in a small neighborhood around the noise point. Thus we think that such worst-case analysis should become standard in OOD evaluation.

\section{Conclusion}
In \citet{HeiAndBit2019} it has recently been shown that ReLU networks produce arbitrarily highly confident predictions far away from the training data,
which could only be resolved by a modification of the network architecture. With CCU we present such a modification which explicitly integrates a generative model
and provably show that the resulting neural network produces close to uniform predictions far away from the training data. Moreover, CCU is the only OOD method
which can guarantee low confidence predictions over a whole volume rather than just pointwise and we show that all other OOD methods fail in this worst-case setting. CCU achieves this without loss in test accuracy or OOD performance. In the future it would be interesting to use more powerful generative models for which one can also
guarantee their behavior far away from the training data. This is currently not the case for VAEs and GANs \citep{NalEtAl2018,HenMazDie2019}.

%
%
\subsubsection*{Acknowledgments}
The author acknowledge support from the BMBF through the T\"ubingen AI Center (FKZ: 01IS18039A) and by the DFG TRR 248, project number 389792660 and the DFG Excellence Cluster “Machine Learning -New Perspectives for Science”, EXC 2064/1, project number 390727645. The authors thank the International Max Planck Research School for Intelligent Systems
(IMPRS-IS) for supporting Alexander Meinke.

\bibliography{biblio,Literatur}
\bibliographystyle{iclr2020_conference}

\appendix

\section{Appendix - Proof of Theorem \ref{Thr:thr1}}\label{App:Thr}
\begin{theorem}
Let $(x_i,y_i)_{i=1}^n$ be the training set of the training distribution. We define the
model for the conditional probability over the classes $y \in \{1,\ldots,M\}$ given $x$ as
\begin{equation}
\hat{p}(y|x)=\frac{\hat{p}(y|x,i)\hat{p}(x|i) + \frac{\lambda}{M}\hat{p}(x|o) }{\hat{p}(x|i) + \lambda \hat{p}(x|o)},
\end{equation}
where $\lambda=\frac{\hat{p}(o)}{\hat{p}(i)} > 0$ and $M>1$. Further, let the model for the marginal density
of the in-distribution $\hat{p}(x|i)$ and out-distribution $p(x|o)$ be given by the generalized GMMs 
\begin{align*}
\hat{p}(x|i) = \sum_{k=0}^{K_i} \alpha_k \exp \left(-\frac{d(x,\mu_k)^2}{2\sigma_k^2} \right), \qquad
\hat{p}(x|o) = \sum_{l=0}^{K_o} \beta_l \exp \left(-\frac{d(x,\nu_l)^2}{2\theta_l^2} \right)
\end{align*}
with $\alpha_k,\beta_l>0$ and $\mu_k,\nu_l \in \R^d \quad \forall k=1,\hdots K_i,\;l=1,\ldots,K_o$ and $d:\R^d \times \R^d \rightarrow \mathbb{R}_+$ a metric. \\
Let $z \in \R^d$ and define $k^*=\argmin_{k=1,\ldots,K_i} \frac{d(z,\mu_k)}{\sigma_k}$,  
$i^*=\argmin_{i=1,\ldots,n} d(z,x_i)$, $l^*=\argmin_{l=1,\ldots,K_o} \beta_l\exp \left(-\frac{d(z,\nu_l)^2}{2\theta_l^2} \right)$
and $\Delta = \frac{\theta_{l^*}^2}{\sigma_{k^*}^2}-1$.
For any $\epsilon>0$, if $\min_l \theta_l > \max_k \sigma_k$ and
\begin{equation}
  \min_{i=1,\ldots,n} d(z,x_i) \geq d(x_{i^*},\mu_{k^*}) + d(\mu_{k^*},\nu_{l^*}) \Big[\frac{2}{\Delta}+\frac{1}{\sqrt{\Delta}}\Big] + 
\frac{\theta_{l^*}}{\sqrt{\Delta}}\sqrt{\log\Big(  \frac{M-1}{\epsilon\, \lambda} \frac{\sum_k \alpha_k}{\beta_{l^*}}  \Big)},
  \end{equation}
then it holds for all $m \in \{1,\ldots,M\}$ that
\begin{equation}
 \hat{p}(m|z) \leq \frac{1}{M}\big(1+\epsilon\big).
 \end{equation}
In particular, if $\min_i d(z,x_i) \rightarrow \infty$, then $\hat{p}(m|z) \rightarrow \frac{1}{M}$.
\end{theorem}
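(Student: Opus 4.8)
The plan is to reduce the claim to a bound on the density ratio $\hat p(z|i)/\hat p(z|o)$ and then control that ratio geometrically. First I would use that $\hat p(\cdot|x,i)$ is a probability distribution, so $\hat p(m|z,i)\le 1$. This gives
\begin{equation}
\hat p(m|z)=\frac{\hat p(m|z,i)\hat p(z|i)+\frac{\lambda}{M}\hat p(z|o)}{\hat p(z|i)+\lambda\hat p(z|o)}\le \frac{\hat p(z|i)+\frac{\lambda}{M}\hat p(z|o)}{\hat p(z|i)+\lambda\hat p(z|o)}=\frac{1}{M}\Big(1+\frac{(M-1)\hat p(z|i)}{\hat p(z|i)+\lambda\hat p(z|o)}\Big).
\end{equation}
Hence it suffices to show $\frac{(M-1)\hat p(z|i)}{\hat p(z|i)+\lambda\hat p(z|o)}\le\epsilon$, and since the denominator is at least $\lambda\hat p(z|o)$, it is enough to prove the ratio bound $\frac{\hat p(z|i)}{\hat p(z|o)}\le\frac{\epsilon\lambda}{M-1}$.

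Next I would bound the two densities separately. For the numerator, the definition $k^*=\argmin_k d(z,\mu_k)/\sigma_k$ means $d(z,\mu_k)^2/\sigma_k^2$ is minimized at $k^*$, so every exponential in $\hat p(z|i)$ is at most $\exp(-d(z,\mu_{k^*})^2/(2\sigma_{k^*}^2))$, yielding $\hat p(z|i)\le(\sum_k\alpha_k)\exp(-d(z,\mu_{k^*})^2/(2\sigma_{k^*}^2))$. For the denominator I would retain a single mixture component, $\hat p(z|o)\ge\beta_{l^*}\exp(-d(z,\nu_{l^*})^2/(2\theta_{l^*}^2))$. Taking logarithms, the desired ratio bound becomes the requirement
\begin{equation}
\frac{d(z,\mu_{k^*})^2}{2\sigma_{k^*}^2}-\frac{d(z,\nu_{l^*})^2}{2\theta_{l^*}^2}\ge L, \qquad L=\log\Big(\frac{M-1}{\epsilon\lambda}\frac{\sum_k\alpha_k}{\beta_{l^*}}\Big).
\end{equation}

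The geometric heart of the argument is to convert a lower bound on $\min_i d(z,x_i)$ into this inequality. Writing $r=d(z,\mu_{k^*})$ and $c=d(\mu_{k^*},\nu_{l^*})$, the triangle inequality gives $d(z,\nu_{l^*})\le r+c$, so the left side is at least $\frac{r^2}{2\sigma_{k^*}^2}-\frac{(r+c)^2}{2\theta_{l^*}^2}$, which simplifies using $\theta_{l^*}^2/\sigma_{k^*}^2=1+\Delta$ to $\frac{1}{2\theta_{l^*}^2}\big(\Delta r^2-2cr-c^2\big)$. Requiring this to be at least $L$ is a quadratic inequality in $r$; solving it and loosening with $\sqrt{a+b}\le\sqrt a+\sqrt b$ and $\sqrt{1+\Delta}\le 1+\sqrt\Delta$ produces a threshold of the form $r\ge c\big[\frac{2}{\Delta}+\frac{1}{\sqrt\Delta}\big]+\frac{\theta_{l^*}}{\sqrt\Delta}\sqrt{L}$ (up to the constant inside the root). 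A second triangle inequality, $d(z,\mu_{k^*})\ge\min_i d(z,x_i)-d(x_{i^*},\mu_{k^*})$, then turns this into the stated hypothesis on $\min_i d(z,x_i)$. The asymptotic statement follows because for $\theta_{l^*}>\sigma_{k^*}$, i.e. $\Delta>0$, the quadratic $\Delta r^2-2cr-c^2\to\infty$ as $r\to\infty$, so $\hat p(z|i)/\hat p(z|o)\to 0$ and $\hat p(m|z)\to\frac{1}{M}$.

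I expect the main obstacle to be the quadratic and triangle-inequality bookkeeping in the last step: one must chain two distinct triangle inequalities (relating $z$ to the nearest out-centroid, and relating $z$ to the nearest training point through the nearest in-centroid) and discharge the quadratic with clean enough relaxations to recover the exact three-term threshold. The hypothesis $\min_l\theta_l>\max_k\sigma_k$ is essential throughout, since it is what guarantees $\Delta>0$ and hence that the in-density decays strictly faster than the out-density; this is precisely the mechanism that forces the confidence down to $1/M$ far from the data. The other delicate point is the choice of which mixture terms to bound, collapsing all of $\hat p(z|i)$ onto the $k^*$ exponent while keeping only a single term of $\hat p(z|o)$, so that the logarithm comes out exactly as in the statement.
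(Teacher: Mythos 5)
Your proposal is correct and follows essentially the same route as the paper's proof: the reduction to showing $\hat{p}(z|i)/\hat{p}(z|o)\leq \epsilon\lambda/(M-1)$, the component-wise bounds on the two mixtures, the two triangle inequalities, and the quadratic in $d(z,\mu_{k^*})$ all match (the paper reaches the bound $\frac{1}{M}\bigl(1+\frac{M-1}{\lambda}\frac{\hat{p}(z|i)}{\hat{p}(z|o)}\bigr)$ via a tangent-line/concavity argument on $\xi\mapsto\frac{1+M\xi}{1+\xi}$, which is algebraically equivalent to your dropping $\hat{p}(z|i)$ from the denominator). You also correctly flag the only delicate point, the constant inside the square root, where the paper's own derivation yields a factor $\sqrt{2L}$ that is silently absorbed into the stated threshold.
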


\begin{proof}
The proof essentially hinges on upper bounding $\frac{\hat{p}(z|i)}{\hat{p}(z|o)}$ using the specific properties of the Gaussian mixture model.
We note that 
\begin{align*}
 \hat{p}(y|x) &= \frac{\hat{p}(y|x,i)\hat{p}(x|i)+\frac{\lambda}{M} \hat{p}(x|o)}{\hat{p}(x|i)+\lambda \hat{p}(x|o)}
              = \frac{1}{M} \frac{1  + \frac{M}{\lambda} \frac{\hat{p}(x|i)}{\hat{p}(x|o)}}{1+ \frac{1}{\lambda} \frac{\hat{p}(x|i)}{\hat{p}(x|o)}}
							\leq \frac{1}{M}\left(1 + \frac{M-1}{\lambda}  \frac{\hat{p}(x|i)}{\hat{p}(x|o)}\right)
\end{align*}
The last step holds because the function $g(\xi)=\frac{1+M \xi}{1+\xi}$ is monotonically increasing 
\begin{equation}
\frac{\partial g}{\partial \xi} = \frac{M-1}{(1+\xi)^2} \quad \textrm{ and } \quad \frac{\partial^2 g}{\partial \xi^2} =-2 \frac{M-1}{(1+\xi)^3}.
\end{equation}
As the second deriviative is negative for $\xi\geq 0$, $g$ is concave for $\xi\geq 0$ and thus 
\begin{equation}
 \frac{1+M \xi}{1+\xi} = g(\xi) \leq g(0) + \frac{\partial g}{\partial \xi}\Big|_{\xi=0}(\xi-0) = 1 + (M-1)\xi.
 \end{equation}
In order to achieve the required result we need to show that $ \frac{M-1}{\lambda}  \frac{\hat{p}(x|i)}{\hat{p}(x|o)} \leq \epsilon$ for $x$ sufficiently
far away from the training data.

We note that 
\begin{align*}
\frac{\hat{p}(x|i)}{\hat{p}(x|o)} &= \frac{\sum_k \alpha_k \exp \left(-\frac{d(x,\mu_k)^2}{2\sigma_k^2} \right)}{\sum_l \beta_l \exp \left(-\frac{d(x,\nu_l)^2}{2\theta_l^2} \right)}
               \leq \frac{ \max_k \exp \left(-\frac{d(x,\mu_k)^2}{2\sigma_k^2} \right) \sum_k \alpha_k}{ \max_l \beta_l \exp \left(-\frac{d(x,\nu_l)^2}{2\theta_l^2}\right)}\\
							 &= \frac{\sum_k \alpha_k}{\beta_{l^*}} \exp\left(-\frac{d(x,\mu_{k^*})^2}{2\sigma_{k^*}^2} + \frac{d(x,\nu_{l^*})^2}{2\theta_{l^*}^2}\right)
\end{align*}
where $k^*=\argmin_k \frac{d(x,\mu_k)^2}{2\sigma_k^2}$ and $l^*=\argmin_l \beta_l \exp(-\frac{d(x,\nu_l)^2}{2\theta_l^2})$.
Using triangle inequality, $d(x,\nu_{l^*}) \leq d(x,\mu_{k^*}) + d(\mu_{k^*},\nu_{l^*})$, we get the desired condition as
\begin{align*}
\frac{\sum_k \alpha_k}{\beta_{l^*}} \exp\left(-d(x,\mu_{k^*})^2\left(\frac{1}{2\sigma_{k^*}^2}-\frac{1}{2\theta_{l^*}^2}\right) 
+ \frac{d(\mu_{k^*},\nu_{l^*})d(x,\mu_{k^*})}{\theta_{l^*}^2} + \frac{d(\mu_{k^*},\nu_{l^*})^2}{2\theta_{l^*}^2}\right) \leq \frac{\epsilon \,\lambda}{M-1}
\end{align*}
Thus we get with $a = \big(\frac{1}{2\sigma_{k^*}^2}-\frac{1}{2\theta_{l^*}^2}\big)$, $b=\frac{d(\mu_{k^*},\nu_{l^*})}{\theta_{l^*}^2}$
and $c= \frac{d(\mu_{k^*},\nu_{l^*})^2}{2\theta_{l^*}^2}$, 
$d=\log\Big(\frac{ \epsilon\, \lambda}{M-1} \frac{\beta_{l^*}}{\sum_k \alpha_k}\Big)$, the quadratic inequality
\[ - d(x,\mu_{k^*})^2 a + d(x,\mu_{k^*})b + \frac{b^2}{2} \leq d,\]
where $d<0$ for sufficiently small $\epsilon$. We get the solution
\[ d(x,\mu_{k^*}) \geq \frac{b}{2a} + \sqrt{\max\left\{0,\frac{c-d}{a} + \frac{b^2}{4a^2}\right\}}.\]
It holds, using $\sqrt{a+b}\leq \sqrt{a}+\sqrt{b}$ for $a,b>0$,
\[ \frac{b}{2a} + \sqrt{\max\left\{0,\frac{c-d}{a} + \frac{b^2}{4a^2}\right\}} \leq \frac{b}{a} + \sqrt{\frac{c}{a}} + \sqrt{\frac{-d}{a}}.\]
One can simplify
\begin{align*}
	\frac{b}{a}  &= 2  \frac{\sigma_{k^*}^2\theta_{l^*}^2}{\theta_{l^*}^2-\sigma_{k^*}^2} \frac{d(\mu_{k^*},\nu_{l^*})}{\theta_{l^*}^2}
	             = 2 \frac{\sigma_{k^*}^2 \, d(\mu_{k^*},\nu_{l^*})}{\theta_{l^*}^2-\sigma_{k^*}^2} = 2  \frac{d(\mu_{k^*},\nu_{l^*})}{\frac{\theta_{l^*}^2}{\sigma_{k^*}^2}-1}\\
  \frac{c}{a} &=  2  \frac{\sigma_{k^*}^2\theta_{l^*}^2}{\theta_{l^*}^2-\sigma_{k^*}^2} \frac{d(\mu_{k^*},\nu_{l^*})^2}{2\theta_{l^*}^2}
	            = \frac{\sigma_{k^*}^2 \, d(\mu_{k^*},\nu_{l^*})^2}{\theta_{l^*}^2-\sigma_{k^*}^2} = \frac{d(\mu_{k^*},\nu_{l^*})^2}{\frac{\theta_{l^*}^2}{\sigma_{k^*}^2}-1}
\end{align*}	
Noting that $d(x,\mu_{k^*}) \geq |d(x,x_{i^*}) - d(x_{i^*},\mu_{k^*})|$ we get that
\[ d(x,x_{i^*}) \geq d(x_{i^*},\mu_{k^*}) + \frac{b}{2a} + \frac{b}{a} + \sqrt{\frac{c}{a}} + \sqrt{\frac{-d}{a}},\]
implies $ \frac{M-1}{\lambda}  \frac{\hat{p}(x|i)}{\hat{p}(x|o)} \leq \epsilon$. 
The last statement follows directly by noting that by assumption $a>0$ (independently of the choice of $l^*$ and $k^*$) and $b,c,d(x_{i^*},\mu_{k^*})$ are bounded as $K_i,K_o,n$ are finite. With $\Delta = \frac{\theta_{l^*}^2}{\sigma_{k^*}^2}-1$ we can rewrite the required condition as
\[ d(x,x_{i^*}) \geq d(x_{i^*},\mu_{k^*}) + d(\mu_{k^*},\nu_{l^*}) \Big[\frac{2}{\Delta}+\frac{1}{\sqrt{\Delta}}\Big] + 
\frac{\theta_{l^*}}{\sqrt{\Delta}}\sqrt{\log\Big(\frac{M-1}{\epsilon\, \lambda} \frac{\sum_k \alpha_k}{\beta_{l^*}}  \Big)}.\]
\end{proof}

\section{Appendix - Proof of Corollary \ref{Cor:Cor1}}\label{App:Cor}
\begin{corollary}
Let $x_0 \in \R^d$ and $R>0$, then with $\lambda=\frac{\hat{p}(o)}{\hat{p}(i)}$ it holds
\begin{equation}
\maxop_{d(x,x_0)\leq R} \; \hat{p}(y|x) \, \leq \,\frac{1}{M} \frac{1 + M \frac{b}{\lambda}  }{1+\frac{b}{\lambda} } ,
\end{equation}
where $b= \frac{\sum_{k=1}^{K_i} \alpha_k \exp\left(-\frac{\max \left\lbrace  d(x_0, \mu_k) - R, 0 \right\rbrace^2}{2\sigma^2_k}\right)}
{\sum_{l=1}^{K_o} \beta_l \exp\left(-\frac{ ( d(x_0, \nu_l) + R)^2}{2\theta^2_l}\right)}$.
\end{corollary}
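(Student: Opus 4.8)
The plan is to reuse the reformulation of $\hat p(y|x)$ already established in the proof of Theorem~\ref{Thr:thr1}. There it was shown that
\[
\hat p(y|x) = \frac{1}{M}\,g\!\left(\frac{1}{\lambda}\,\frac{\hat p(x|i)}{\hat p(x|o)}\right), \qquad g(\xi)=\frac{1+M\xi}{1+\xi},
\]
and that $g$ is monotonically increasing on $\xi\geq 0$, since $\frac{\partial g}{\partial\xi}=\frac{M-1}{(1+\xi)^2}>0$. Because $g$ is increasing, maximizing $\hat p(y|x)$ over the ball $\{x:d(x,x_0)\leq R\}$ reduces to maximizing the likelihood ratio $\frac{\hat p(x|i)}{\hat p(x|o)}$ over the same ball. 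Therefore it suffices to exhibit a constant $b$ that upper bounds this ratio \emph{pointwise} for every $x$ with $d(x,x_0)\leq R$, and then conclude via $\maxop_{d(x,x_0)\leq R}\hat p(y|x)\leq \frac{1}{M}g(b/\lambda)$.

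Next I would bound the numerator and denominator of the ratio separately, each by the triangle inequality for the metric $d$. For the numerator, every Gaussian term is decreasing in $d(x,\mu_k)$, and for $x$ in the ball the triangle inequality (together with $d\geq 0$) gives $d(x,\mu_k)\geq \max\{d(x_0,\mu_k)-R,\,0\}$; hence
\[
\hat p(x|i)=\sum_{k} \alpha_k \exp\!\left(-\tfrac{d(x,\mu_k)^2}{2\sigma_k^2}\right)\leq \sum_{k} \alpha_k \exp\!\left(-\tfrac{\max\{d(x_0,\mu_k)-R,0\}^2}{2\sigma_k^2}\right),
\]
which is exactly the numerator of $b$. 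For the denominator, I want a \emph{lower} bound, so I use $d(x,\nu_l)\leq d(x,x_0)+d(x_0,\nu_l)\leq R+d(x_0,\nu_l)$, giving
\[
\hat p(x|o)=\sum_{l} \beta_l \exp\!\left(-\tfrac{d(x,\nu_l)^2}{2\theta_l^2}\right)\geq \sum_{l} \beta_l \exp\!\left(-\tfrac{(d(x_0,\nu_l)+R)^2}{2\theta_l^2}\right),
\]
the denominator of $b$. Dividing the two yields $\frac{\hat p(x|i)}{\hat p(x|o)}\leq b$ for each fixed $x$ in the ball.

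The one point worth flagging, though it turns out to be a non-issue, is that the two bounds are attained (if at all) at potentially different points of the ball. This does not matter here: both inequalities hold \emph{at the same arbitrary $x$}, so their quotient bounds the ratio at that $x$, and since this holds for every $x$ in the ball we obtain $b$ as a uniform upper bound without ever needing the extremizers to coincide. Combining this with the monotonicity of $g$ from the first step gives $\hat p(y|x)=\frac{1}{M}g\!\big(\tfrac{1}{\lambda}\tfrac{\hat p(x|i)}{\hat p(x|o)}\big)\leq \frac{1}{M}g(b/\lambda)=\frac{1}{M}\frac{1+M\frac{b}{\lambda}}{1+\frac{b}{\lambda}}$ for all such $x$, which is the claimed bound. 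The only genuinely delicate part of the argument is keeping the direction of each triangle-inequality estimate consistent with whether an upper or lower bound is needed (and inserting the $\max\{\cdot,0\}$ so the distance lower bound respects non-negativity); everything else is routine substitution.
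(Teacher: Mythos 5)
Your proof is correct and follows essentially the same route as the paper's: invoke the monotonicity of $g(\xi)=\frac{1+M\xi}{1+\xi}$ established in the proof of Theorem~\ref{Thr:thr1}, then bound $\hat p(x|i)$ from above and $\hat p(x|o)$ from below over the ball via the (reverse) triangle inequality. Your explicit remark that the two bounds hold simultaneously at each fixed $x$ (so the extremizers need not coincide) is a small clarity improvement over the paper's writeup, which also contains a harmless typo in its reverse-triangle-inequality display; note only that the identity $\hat p(y|x)=\frac{1}{M}g(\cdot)$ you cite is really an inequality $\leq$ (using $\hat p(y|x,i)\leq 1$), exactly as in the paper, and this does not affect the conclusion.
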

\begin{proof}
From the previous section we already know that $\hat{p}(y|x)  \leq \frac{1}{M} \frac{1 + M \frac{b}{\lambda}  }{1+\frac{b}{\lambda} } $ as long as $\frac{p(x|i)}{p(x|o)} \leq b$. Now we can separately bound the numerator and denominator within a ball of radius $R$ around $x_0$. For the numerator we have
\begin{align}
\maxop_{d(x,x_0)\leq R} \hat{p}(x|i) &\leq \sum_{k=1}^K \alpha_k \maxop_{d(x,x_0) \leq R}  e^{- \frac{d(x,\mu_k)^2}{2 \sigma^2_k}} \\
&\leq \sum_{k=1}^K \alpha_k \exp\left(- \frac{ \minop_{d(x,x_0)\leq R}  d(x, \mu_k)^2 }{2 \sigma_k^2}  \right) \nonumber \\
&\leq \sum_{k=1}^K \alpha_k \exp\left(-\frac{\left( \max \left\lbrace d(\mu_k, x_0)-R, 0 \right\rbrace \right) ^2}{2 \sigma^2_k}\right),
\end{align}
where we have lower bounded $\minop_{d(x,x_0)\leq R} d(x, \mu_k)$ via the reverse triangle inequality
\begin{align}
\minop_{d(x,x_0)\leq R}  d(x, \mu_k)  &\geq  \minop_{d(x,x_0)\leq R}  | d(x_0, \mu_k) - d(x, x_0) |, \nonumber \\
&\geq \max \left\lbrace \minop_{d(x,x_0)\leq R}  (d(x_0, \mu_k) - d(x_0, \mu_k)), 0 \right\rbrace ,\nonumber \\
&\geq \max \left\lbrace  d(x_0, \mu_k) - r, 0 \right\rbrace .
\end{align} 
The denominator can similarly be bounded via
\begin{align}
\minop_{d(x,x_0)\leq R} \hat{p}(x|o) &\geq \sum_{l=1}^{K_o} \beta_l \minop_{d(x,x_0) \leq R}  e^{- \frac{d(x,\nu_l)^2}{2 \theta^2_k}} \\
&\geq \sum_{l=1}^{K_o} \beta_l \exp\left(- \frac{ \maxop_{d(x,x_0)\leq R}  d(x, \nu_l)^2 }{2\theta_l^2} \right) \nonumber \\
&\geq \sum_{l=1}^{K_o} \beta_l \exp\left(-\frac{( d(x_0, \nu_l) + R)^2}{2 \theta^2_l}\right).
\end{align}
With both of these bounds in place the conclusion immediately follows.
\end{proof}

\section{\label{App:Exp}Appendix - Experimental Details}
Unless specified otherwise we use ADAM on MNIST with a learning rate of $1e-3$ and SGD with learning rate $0.1$ for the other datasets. The learning rate for the GMM is always set to $1e-5$. We decrease all learning rates by a factor of $10$ after 50, 75 and 90 epochs. Our batch size is 128, the total number of epochs 100 and weight decay is set to $5e-4$.

When training ACET, OE and CCU with 80 million tiny images we pick equal batches of in- and out-distribution data (corresponding to $p(i)=p(o)$) and concatenate them into a batches of size 256. Note that during the 100 epochs only a fraction of the 80 million tiny images are seen and so there is no risk of over-fitting.

\subsection{\label{app:aug}Data Augmentation}
Our data augmentation scheme uses random crops with a padding of 2 pixels on MNIST and FMNIST. On SVHN, CIFAR10 and CIFAR100 the padding width is 4 pixels. For SVHN we fill the padding with the value at the boundary and for CIFAR we apply reflection at the boundary pixels. On top of this we include random horizontal flips on CIFAR. For MNIST and FMNIST we generate 60000 such samples and for SVHN and CIFAR 50000 samples by drawing from the clean dataset without replacement. This augmented data is used to calculate the covariance matrix from \eqref{Eq:Cov}. During the actual training we use the same data augmentation scheme in a standard fashion.

\subsection{\label{App:attack}Attack details}
We begin with a step size of 3 and for each of the 50 restarts we randomly initialize at some point in the ellipsoid. Whenever a gradient step successfully decreases the losses we increase the step size by a factor of 1.1. Whenever the loss increases instead we use backtracking and decrease the step size by a factor of 2. We apply normal PGD using the $l_2$-norm in the transformed space to ensure that we stay on the ellipsoid and after each gradient step we rotate back into the original space to project onto the box $[0,1]^d$. The result is not guaranteed be on the ellipsoid so after the 500 steps we use the alternating projection algorithm \citep{BauBor1993} for 10 steps which is guaranteed to converge to a point in the intersection of the ellipsoid and the box because both of these sets are convex.

\section{\label{App:Radius}Appendix - Finding a the certifiable radius}
 Since Corollary~\ref{Cor:Cor1} does not explicitly give a radius, one has to numerically invert the bound. The bound 
 \begin{equation}
 b(R)= \frac{\sum_{k=1}^{K_i} \alpha_k \exp\left(-\frac{\max \left\lbrace  d(x_0, \mu_k) - R, 0 \right\rbrace^2}{2\sigma^2_k}\right)}
{\sum_{l=1}^{K_o} \beta_l \exp\left(-\frac{ ( d(x_0, \nu_l) + R)^2}{2\theta^2_l}\right)}
 \end{equation} 
  is monotonically increasing in $R$. Thus, for a given sample $x_0$ one can fix a desired bound $\maxop_{d(x,x_0)\leq R} \; \hat{p}(x|i) \, \leq \, \frac{1}{M} \nu$, where $\nu \in (1,M)$ and then find the unique solution 
\begin{equation}  
  b(R) = \frac{\nu - 1}{M-\nu} \lambda
  \end{equation} for $R$ via bisection. This radius $\hat{R}$ will then represent the maximal radius, that one can certify using Corollary~\ref{Cor:Cor1}. The presumption is, of course, that for $R=0$ one has a sufficiently low bound in the first place, i.e. that a solution exists. In our experiments on uniform noise we did not encounter a single counterexample to this assumption. We show the radii for the different datasets in Figure~\ref{Fig:histograms}.

\begin{figure}
\centering
\includegraphics[width=1.\textwidth]{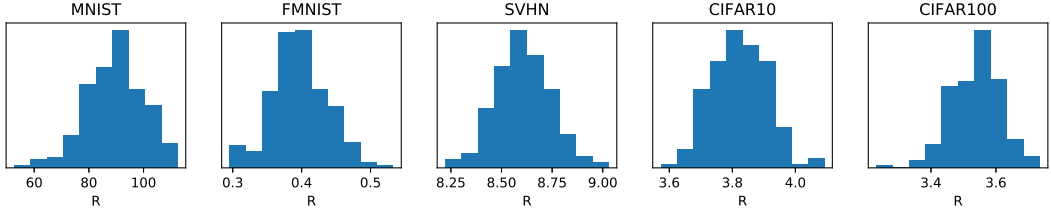}%
\caption{\label{Fig:histograms} \textbf{Histograms of bounds:} Certified radius in transformed space for different datasets. }
\end{figure}  

\section{\label{App:RadiusDiscussion}Appendix - Analysis of the certified balls around uniform noise images}
As one can observe in Figure~\ref{Fig:Samples} the images which maximize the confidence in the certified ball around the uniform noise image are sometimes
quite far away from the original noise image. As CCU certifies low confidence (the maximal confidence is less than  $1.1\times \frac{1}{M}$ - so the predicted probability distribution over the classes is very close to the uniform distribution) over the whole ball, it is a natural question what these balls look like and what kind of 
images they contain. 
In particular, it is in general not desired that the certified balls contain images from the training and test set. For each dataset we certified balls around $200$ uniform noise images and for each of the certified balls we check if it contains training or test images of the corresponding dataset. We found that even though the certified balls are large, not a single training or test image was contained in any of them. This justifies the use of our proposed threat model.

\begin{table}
\centering
\begin{tabular}{l|ccc}
& $\min p(y|x)$  & $\#<\frac{1.1}{M}$ & $\%<\frac{1.1}{M}$ \\
\hline
MNIST & 33.08 & 0 & 0\\
FMNIST & 28.77 & 0 & 0 \\
SVHN &  10.02 & 20 & 0.08 \\
CIFAR10 &  10.01 & 529 & 5.29 \\
CIFAR100 &  1.03 & 130 & 1.30
\end{tabular}
\caption{\label{Tab:P_Stats} Lowest confidence that CCU attains on the test set (in percent) as well as total number of test points on which confidence is lower than our imposed bound of $\frac{1.1}{M}$.}
\end{table}

A different problem could be that our threshold of $\frac{1.1}{M}$ for the certification is too high and that many predictions on the test set have confidence
less than this threshold. For this purpose we report in Table~\ref{Tab:P_Stats} the smallest predicted confidence of CCU on the test set $T$, that is 
\[ \minop_{x \in T} \maxop_{y \in \{1,\ldots,M\}} \hat{p}(y|x),\]
for each dataset and the total number of test samples where the confidence is below $\frac{1.1}{M}$. While for MNIST and FMNIST, this never happens, and for
SVHN this is negligible (less than $0.1\%$ of the test set), for CIFAR10 and CIFAR100 this happens in $5.3\%$ resp. $1.3\%$ of the cases. 

In theory, this could impair our AUC value for the detection of adversarial noise. However, in practice our bound for the confidence is quite conservative as the bound is only tight in very specific configurations of the centroids of the Gaussian mixture model which are unlikely to happen for any practical dataset, meaning that the actual maximal confidence in the certified region is typically significantly lower. In fact the AUC values of CCU are always $100\%$ which means that for all $200$ certified balls the maximal value of the confidence of CCU in any of these balls (found by our PGD attack algorithm) is lower than the minimal confidence
of all predictions on the test set as reported in Table~\ref{Tab:P_Stats}. On the other hand assuming here also a worst case scenario in the sense we assume that the upper bound of the maximal confidence is attained in all $200$ certified balls, then the (certified) AUC value would be: 99.92\% for SVHN, 94.71\% for CIFAR10, and
98.70\% for CIFAR100. Note that this theoretical lower bound on our performance is still better than all other models' empirical performance on this task, as reported in Table \ref{Table:AttackStats} on both CIFAR10 and CIFAR100, and only marginally below the perfect AUC of ACET and GAN on SVHN.

%
%

\section{\label{App:ACET}Appendix - Performance of ACET when trained on adversarial uniform noise}
Similar to our CCU, ACET \citet{HeiAndBit2019} requires that one chooses a model for the out-distribution in order to generate their ``adversarial noise'' during training.
We trained the ACET model with the same out-distribution model as for all other models namely using the tiny image dataset as suggested in \citet{HenMazDie2019}
with a PGD attack that starts at the original point and takes 40 FGSM steps in order to maximize the maximal confidence over the classes. We use backtracking and halve the step size whenever the loss does not increase. However, the authors of \citet{HeiAndBit2019} used smoothed uniform noise and a $l_\infty$-threat model during training. Since our worst case analysis for OOD is based on attacking uniform noise images, this suggests that training ACET with uniform noise should improve the performance of ACET for the worst case analysis.
We report below the results of ACET2 (the original model in the paper using tiny images for training is called ACET) based on attacking using uniform noise images during training with a $l_\infty$-threat model with $\epsilon=0.3$ as suggested in \citet{HeiAndBit2019}.
We report the normal OOD performance in Table \ref{Tab:ACET_Comp} and the worst case analysis of adversarial noise in Table \ref{Tab:ACET_CompAtt}.
While it is not surprising that ACET outperforms ACET2 on the standard OOD detection task in Table \ref{Tab:ACET_Comp} as it has seen more realistic ``noise'' images during training, the 
worse performance of ACET2 for the worst case analysis in Table \ref{Tab:ACET_CompAtt} is at first sight counter-intuitive. However, note that the threat model
of the attacks in our worst case analysis is the Mahalanobis-type $l_2$-type metric, see \ref{Eq:Cov}, while ACET2 uses an $l_\infty$-attack model with $\epsilon=0.3$ during training. As the size of the balls for the Mahalanobis-type $l_2$-type metric is quite large, there is not much overlap between the two sets. This explains why ACET2 fails here. 
In summary, we have shown that by using tiny images as out-distribution during training, ACET improves in terms of OOD detection performance over ACET2, which is similar to  the version suggested in \citet{HeiAndBit2019}. 

%
\begin{table}
\centering
\begin{tabular}{l|cc}
 MNIST & ACET & ACET2 \\
 \hline
FMNIST       &   \textbf{100.0} & 99.8  \\
EMNIST       &   \textbf{95.0} & 93.5  \\
GrCIFAR10    &   \textbf{100.0} & \textbf{100.0}  \\
Noise        &   \textbf{100.0} &  \textbf{100.0}  \\
Uniform      &   \textbf{100.0} &  \textbf{100.0}   \\
\end{tabular}\qquad \vspace{.5cm}
\begin{tabular}{l|cc}
  FMNIST & ACET & ACET2 \\
 \hline
 MNIST        &  96.4 & \textbf{96.5}  \\
 EMNIST       &  \textbf{97.6} & 97.3  \\
 GrCIFAR10    &  \textbf{96.2} & 91.6  \\
 Noise        &  \textbf{97.8} & 97.1  \\
 Uniform      &  \textbf{100.0} & \textbf{100.0}  \\
\end{tabular} 
\vspace{.5cm}
\begin{tabular}{l|cc}
  SVHN & ACET & ACET2 \\
  \hline
CIFAR10      &  \textbf{95.2} & 94.2  \\
CIFAR100     &  \textbf{94.8} & 93.7  \\
LSUN\_CR     &  \textbf{97.1} & 96.1  \\
Imagenet-    &  \textbf{97.3} & 95.6  \\
Noise        &  \textbf{95.2} & 95.2  \\
Uniform      &  \textbf{100.0} & \textbf{100.0}  \\
\end{tabular}\qquad
\begin{tabular}{l|cc}
CIFAR10 & ACET & ACET2 \\
\hline
SVHN         &  \textbf{93.7} & 82.8 \\
CIFAR100     &  \textbf{86.9} & 85.3  \\
LSUN\_CR     &  \textbf{91.2} & 88.5  \\
Imagenet-    &  \textbf{86.5} & 84.8  \\
Noise        &  \textbf{94.8} & 91.2  \\
Uniform      &  \textbf{100.0} & \textbf{100.0}  \\
\end{tabular}

\begin{tabular}{l|cc}
CIFAR100 & ACET & ACET2 \\
\hline
SVHN         &  73.9 & \textbf{84.6}  \\
CIFAR10      &  \textbf{77.2} & 77.0  \\
LSUN\_CR     &  78.0 & \textbf{80.0}  \\
Imagenet-    &  \textbf{79.5} & 79.4  \\
Noise        &  62.9 & \textbf{66.3}  \\
Uniform      &  \textbf{100.0} & \textbf{100.0}  \\
\end{tabular}
\caption{\label{Tab:ACET_Comp} OOD detection performance (AUC in percent) for ACET (trained around tiny images) and ACET2 (trained around uniform noise).}
\end{table}

\begin{table}
\centering
\begin{tabular}{ll|cc}
 & & ACET & ACET2 \\
 \hline
 MNIST & TE & 0.6 & 0.6 \\
  & SR & \textbf{0.0} & \textbf{0.0} \\
  & AUC & \textbf{100.0} & \textbf{100.0} \\
  \hline
FMNIST  & TE & 4.8 & 4.6 \\
  & SR & \textbf{0.0} & \textbf{0.0} \\
  & AUC & \textbf{100.0} & \textbf{100.0}\\
    \hline
SVHN  & TE & 3.2 & 3.0 \\
  & SR & \textbf{3.0} & 95.5 \\
  & AUC & \textbf{96.5} & 5.4 \\
      \hline
CIFAR10  & TE & 6.1 & 7.1 \\
  & SR & \textbf{0.0} & 64.5 \\
  & AUC & \textbf{99.9} & 35.9 \\
  \hline
  CIFAR100 & TE & 25.2 &   26.2 \\
  & SR & \textbf{3.5} &  96.5 \\
  & AUC & \textbf{95.8} &    14.6 \\  
\end{tabular}
\caption{\label{Tab:ACET_CompAtt} OOD detection performance (test error, AUC  and success rate in percent) for ACET (trained around tiny images) and ACET2 (trained around uniform noise).}
\end{table}


\section{\label{App:AUPR}Appendix - Precision and Recall}
In addition to the AUC presented in Table \ref{Table:OOD} we follow \citet{hendrycks2016baseline} and report the area under the precision/recall curve (AUPR). Precision at a specific threshold is defined as the number of true positives (tp) over the sum of true positives and false positives (fp), i.e. 
\begin{equation}
\mathrm{precision}= \frac{\mathrm{tp}}{\mathrm{tp}+\mathrm{fp}}.
\end{equation} 
Recall is defined as 
\begin{equation}\mathrm{recall} = \frac{\mathrm{tp}}{\mathrm{tp}+\mathrm{fn}},
\end{equation}
where fn is the number of false negatives. We report the AUPR for all models and all datasets in Table \ref{Tab:AUPR}. Qualitatively we find that the results do not differ from the ones reported in Table \ref{Table:OOD}.

\begin{table}
\centering
\setlength{\tabcolsep}{4.9pt}
\begin{tabular}{ll|rrrrrrrrrr}
\toprule
		& 			   &  Base &   MCD &   EDL &    DE &    GAN &  ODIN &  Maha &   ACET &     OE &    CCU  \\
\midrule
\begin{turn}{90} \hspace{-1.3cm} MNIST \end{turn} 	& FMNIST       &   97.5 &  89.4 &  99.4 &   99.4 &  99.4 &   98.8 &   97.0 &  \textbf{100.0} &      99.9 &   99.9 \\
& EMNIST       &   77.9 &  60.0 &  77.3 &   84.5 &  85.5 &   78.4 &   74.4 &   90.9 &     \textbf{91.4} &   84.3  \\
& GrCIFAR10    &   99.7 &  91.1 &  99.8 &  \textbf{100.0} &  99.5 &   99.9 &   98.9 &  \textbf{100.0} &    \textbf{100.0} &  \textbf{100.0}  \\
& Noise        &  \textbf{100.0} &  75.5 &  99.8 &  \textbf{100.0} &  99.2 &  \textbf{100.0} &   96.5 &  \textbf{100.0} &    \textbf{100.0} &  \textbf{100.0}  \\
& Uniform      &   97.2 &  82.8 &  99.9 &   98.8 &  99.9 &   98.9 &  \textbf{100.0} &  \textbf{100.0} &    \textbf{100.0} &  \textbf{100.0}  \\
\midrule
\begin{turn}{90} \hspace{-1.4cm} FMNIST \end{turn} 	& MNIST        &  97.6 &  79.3 &  95.9 &  97.5 &   \textbf{99.9} &  99.2 &  97.2 &   97.4 &      97.0 &   98.3  \\
& EMNIST       &  96.8 &  74.2 &  94.6 &  96.1 &  \textbf{100.0} &  98.9 &  96.5 &   97.0 &      98.6 &   99.1  \\
& GrCIFAR10    &  92.2 &  92.7 &  86.9 &  90.8 &   82.3 &  92.9 &  98.6 &  96.8 &     \textbf{100.0} &  \textbf{100.0}  \\
& Noise        &  93.8 &  78.6 &  91.6 &  92.5 &   95.4 &  95.4 &  97.0 &   95.3 &     \textbf{100.0} &  \textbf{100.0}  \\
& Uniform      &  97.8 &  93.0 &  97.1 &  98.8 &   95.4 &  99.1 &  99.4 &  \textbf{100.0} &     98.2 &  \textbf{100.0}  \\
\midrule
\begin{turn}{90} \hspace{-1.4cm} SVHN \end{turn} 	& CIFAR10      &  97.2 &  96.2 &  98.5 &   99.2 &   98.6 &  97.3 &   99.0 &   97.3 &   \textbf{100.0} &  \textbf{100.0}  \\
& CIFAR100     &  96.7 &  95.8 &  98.3 &   99.0 &   98.2 &  96.6 &   98.8 &   97.0 &    \textbf{100.0} &  \textbf{100.0}  \\
& LSUN\_CR     &  99.9 &  99.9 &  99.9 &  \textbf{100.0} &  \textbf{100.0} &  99.9 &  \textbf{100.0} &  \textbf{100.0} &    \textbf{100.0} &  \textbf{100.0}  \\
& Imagenet-    &  96.8 &  96.2 &  98.3 &   99.1 &   98.9 &  96.9 &   98.9 &   98.3 &    \textbf{100.0} &  \textbf{100.0}  \\
& Noise        &  89.1 &  83.6 &  95.6 &   96.8 &   94.5 &  50.3 &   \textbf{97.0} &   87.3 &     95.6 &   93.3  \\
& Uniform      &  98.5 &  97.0 &  98.7 &   98.5 &  \textbf{100.0} &  98.9 &   99.3 &  \textbf{100.0} &    \textbf{100.0} &  \textbf{100.0}  \\
\midrule
\begin{turn}{90} \hspace{-1.6cm} CIFAR10 \end{turn} & SVHN         &  92.3 &  71.1 &  90.7 &  87.4 &  80.5 &  92.7 &   85.9 &   91.0 &      \textbf{98.5} &   97.5  \\
& CIFAR100     &  86.3 &  80.0 &  89.3 &  89.8 &  84.0 &  85.5 &   83.4 &   87.4 &      \textbf{95.6} &   94.6  \\
& LSUN\_CR     &  99.7 &  99.2 &  99.7 &  99.7 &  99.7 &  99.7 &   99.6 &   99.7 &     \textbf{100.0} &  99.9  \\
& Imagenet-    &  84.6 &  79.2 &  89.8 &  88.6 &  84.9 &  84.2 &   84.4 &   85.4 &      \textbf{94.8} &   93.2  \\
& Noise        &  88.1 &  55.7 &  70.4 &  82.7 &  68.6 &  87.7 &   84.9 &   84.0 &      93.8 &   \textbf{94.7}  \\
& Uniform	   &  97.8 &  83.4 &  94.5 &  98.0 &  82.4 &  99.1 &  \textbf{100.0} &  \textbf{100.0} &      99.2 &  \textbf{100.0}  \\
\midrule
\begin{turn}{90} \hspace{-1.65cm} CIFAR100 \end{turn}  & SVHN         &  67.4 &  52.9 &  72.0 &  75.6 &   63.4 &  71.0 &  69.1 &   59.4 &     89.6 &   \textbf{90.8}  \\
& CIFAR10      &  80.9 &  66.1 &  75.8 &  79.1 &   72.5 &  81.3 &  61.2 &   79.7 &     \textbf{84.8} &   83.7  \\
& LSUN\_CR     &  99.3 &  98.3 &  99.0 &  99.3 &   99.2 &  99.3 &  99.2 &   99.1 &     \textbf{99.9} &   \textbf{99.9}  \\
& Imagenet-    &  81.9 &  67.1 &  78.8 &  80.7 &   76.5 &  82.0 &  73.8 &   80.8 &     \textbf{85.3} &   83.3  \\
& Noise        &  51.0 &  36.1 &  34.1 &  47.8 &   44.2 &  56.4 &  79.5 &   25.9 &     69.5 &   \textbf{88.9} \\
& Uniform      &  95.6 &  66.1 &  53.1 &  60.0 &  \textbf{100.0} &  96.0 &  96.4 &  \textbf{100.0} &     99.5 &  \textbf{100.0}  \\
\bottomrule
\end{tabular}
\caption{\label{Tab:AUPR} AUPR (in- versus out-distribution detection based on confidence/score) in percent for different OOD methods and datasets (higher is better). OE and CCU have the best OOD performance.}
\end{table}

\end{document}